\newcommand{\algemph}[3]{\algcolor{#1}{#2}{#3}}
\colorlet{shadecolor}{pink} 
\theoremstyle{plain} 
\newcommand{\vertiii}[1]{{\left\vert\kern-0.25ex\left\vert\kern-0.25ex\left\vert #1 
\right\vert\kern-0.25ex\right\vert\kern-0.25ex\right\vert}}
\newtheorem{theorem}{Theorem}
\newtheorem{proposition}{Proposition}
\newtheorem{lemma}{Lemma}
\newtheorem{definition}{Definition}
\newtheorem{remark}{Remark}
\newtheorem{assumption}{Assumption}
\long\def\comment#1{}
\newcommand{\frakR}{\mathfrak{R}}
\newcommand{\norm}[1]{\left\| #1 \right\|}
\newcommand{\inprod}[2]{\ensuremath{\left\langle #1 , \, #2 \right\rangle}}
\newcommand{\pare}[1]{\ensuremath{\left( #1  \right)}}
\newcommand{\cbr}[1]{\left\{ #1 \right\}}
\newcommand{\unif}{\mathrm{unif}}
\newcommand{\SA}{\mathrm{SA}}
\newcommand{\abs}[1]{\left|#1\right|}
\newcommand{\E}{\ensuremath{{\mathbb{E}}}}
\newcommand{\Prob}{\ensuremath{{\mathbb{P}}}}
\DeclareMathOperator{\trace}{\mathsf{tr}}
\newcommand{\cE}{\mathcal{E}}
\newcommand{\R}{\mathbb{R}}
\newcommand{\cK}{\mathcal{K}}
\newcommand{\cC}{\mathcal{C}}  
\newcommand{\cD}{\mathcal{D}}  
\newcommand{\cF}{\mathcal{F}}  
\newcommand{\cH}{\mathcal{H}}  
\newcommand{\cG}{\mathcal{G}}  
\newcommand{\cQ}{\mathcal{Q}}
\newcommand{\sP}{\mathcal{P}}
\newcommand{\cR}{\mathcal{R}}  
\newcommand{\cN}{\mathcal{N}}   
\newcommand{\cS}{\mathcal{S}}
\newcommand{\cT}{\mathcal{T}}   
\newcommand{\cW}{\mathcal{W}}   
\newcommand{\cV}{\mathcal{V}}  
\newcommand{\cL}{\mathcal{L}} 
\newcommand{\cU}{\mathcal{U}}   
\newcommand{\cX}{\mathcal{X}}  
\DeclareMathOperator*{\argmin}{arg\,min}
\newcommand{\btheta}{\boldsymbol{\theta}}
\newcommand{\bsigma}{\boldsymbol{\sigma}}
\newcommand{\ba}{\mathbf{a}}
\newcommand{\bx}{\mathbf{x}}
\newcommand{\bv}{\mathbf{v}}
\newcommand{\bg}{\mathbf{g}}
\newcommand{\bw}{\mathbf{w}}
\newcommand{\bz}{\mathbf{z}}
\newcommand{\by}{\mathbf{y}}
\newcommand{\mA}{\mathbf{A}}
\newcommand{\mW}{\mathbf{W}}
\newcommand{\mS}{\mathbf{S}}
\newcommand{\mB}{{\bf B}}
\newcommand{\mF}{{\bf F}}
\newcommand{\mH}{{\bf H}}
\newcommand{\mV}{{\bf V}} 
\newcommand{\mX}{{\bf X}}
\newcommand{\mZ}{{\bf Z}}
\newcommand{\iid}{{\em i.i.d.~}}
\definecolor{MITBrown}{RGB}{164, 31, 50}
\definecolor{antiquewhite}{rgb}{0.98, 0.92, 0.84} 
\definecolor{blizzardblue}{rgb}{0.67, 0.9, 0.93}
\newcommand{\algcolor}[3]{\hspace*{-\fboxsep}\colorbox{#1}{\parbox{#2\linewidth}{#3}}}
\begin{document}

%

%

\twocolumn[

\aistatstitle{On the Generalization Ability of Unsupervised Pretraining}

\aistatsauthor{Yuyang Deng \And Junyuan Hong \And  Jiayu Zhou \And Mehrdad Mahdavi }

\aistatsaddress{   Penn State University\And  Michigan state university \And  Michigan state university \And  Penn State University} ]
\begin{abstract}
Recent advances in unsupervised learning have shown that unsupervised pre-training, followed by fine-tuning, can improve model generalization. However, a rigorous understanding of how the representation function learned on an unlabeled dataset affects the generalization of the fine-tuned model is lacking. Existing theoretical research does not adequately  account for the heterogeneity of the distribution and tasks in pre-training and fine-tuning stage.
To bridge this  gap, this paper introduces a novel theoretical framework that illuminates the critical factor influencing the transferability of knowledge acquired during unsupervised pre-training to the subsequent fine-tuning phase, ultimately affecting the generalization capabilities of the fine-tuned model on downstream tasks. We apply our theoretical framework to analyze generalization bound of two distinct scenarios: Context Encoder pre-training with deep neural networks and Masked Autoencoder pre-training with deep transformers, followed by fine-tuning on a binary classification task.   Finally, inspired by our findings, we propose a novel regularization method during pre-training to further enhances the generalization of fine-tuned model.  
Overall, our results contribute to a better understanding of unsupervised pre-training and fine-tuning paradigm, and can shed light on the design of more effective pre-training algorithms.
\end{abstract}

\section{Introduction}

Unsupervised representation learning has achieved remarkable success in various domains, including computer vision and natural language processing, as evidenced by a rapidly increasing number of empirical studies~\citep{coates2012learning,radford2015unsupervised,sun2019videobert,dosovitskiy2020image,feichtenhofer2022masked,he2020momentum,he2022masked,devlin2018bert,chen2020simple}. In this learning paradigm, the goal is to learn a representation function on a large, possibly unlabeled dataset by optimizing a carefully designed unsupervised learning objective. Then, using the learned representation, a task-specific classifier, such as the head of a neural network, is trained on a small in-house dataset during the fine-tuning stage. This two-stage   paradigm addresses the issue of small dataset size in downstream tasks. While unsupervised pre-training for transfer learning has experienced significant empirical growth, a comprehensive understanding of the fundamental factors that influence the generalization performance of fine-tuned models lags considerably behind what has been empirically observed~\citep{neyshabur2020being}. 

Most existing generalization bounds primarily rely on notions such as  distance between the weights of the pre-trained and fine-tuned models~\citep{li2021improved,shachaf2021theoretical}  or data-dependent measurements such as Hessian~\citep{ju2022robust} through   PAC-Bayesian analysis~\citep{arora2018stronger,neyshabur2018pac} to examine the performance of fine-tuned model. These results inform the design of effective regularization methods~\citep{li2021improved,ju2022robust} or incorporating consistent losses~\citep{ju2022robust} in fine-tuning stage to improve the generalization of fine-tuned model by mitigating issues such as overfitting  caused by fine-tuning a large model on a small training set or instability due to label noise. These generalization bounds, however,  do not explicitly incorporate other key factors that may govern the success of fine-tuning  such as similarity between the pre-training  (on which a model is pre-trained) and
target tasks~\citep{shachaf2021theoretical} or task diversity~\citep{tripuraneni2020theory}, the number of training samples and complexity of model spaces utilized in each stage in a \textit{unified bound}.  For example,  in real-world learning tasks, the pre-training and fine-tuning tasks may be conducted on completely different domains, and we usually employ  some kind of transformation on the pre-training data (i.e., adding noise, rotating or masking), which further exacerbates the data heterogeneity. Consequently, a well-designed generalization theory is expected to take the data heterogeneity into account~\citep{yang2020analysis}. In modern transfer learning, different tasks can be conducted in the pre-training and fine-tuning stages. For example, in a Masked Autoencoder (MAE)~\citep{he2022masked}, a regression task utilized during  pre-training, while a classification task used for fine-tuning. Therefore, a desired theory should allow for flexibility in choosing diverse types of tasks in the pre-training and fine-tuning stages which poses a challenge in formalizing the desired guarantees.


Motivated by the above observations, we aim at formalizing and establishing  general generalization bounds on unsupervised pre-training and fine-tuning paradigm that captures aforementioned factors in a unified manner.
We introduce the  notion of {\em representation transferrability} to quantify how much knowledge can be transferred from unsupervised representation learning stage to fine-tuned model, in the presence of task heterogeneity. We then  establish a bound on the generalization capability of fine-tuned model composed with pre-trained representation model that highlights how   representation-induced complexity  and distribution mismatch affects the generalization of fine-tuned model. We instantiate our theory to  the scenario of Context Encoder~\citep{pathak2016context} with deep neural networks and
Masked Autoencoder~\citep{devlin2018bert,he2022masked} with deep Transformer architectures  which highlights the relative merits of  learning representations. From a technical perspective, we establish  generalization bounds for multi-layer transformers, by deriving the worst case covering number of hypothesis space   by expanding upon the machinery that was developed in~\citep{edelman2022inductive}.  


Since our theoretical analysis reveals  the representation-induced Rademacher complexity as one of the  key facotors governing the  capacity of the transfer learning, it naturally motivates itself to be incorporated as a regularizer during pre-training. Inspired by this observation, we propose a novel  {\em Rademacher representation regularized} algorithm, dubbed as {\sffamily{RadReg}}, to enhance the generalization capability of the fine-tuned model. We show that by utilizing unlabeled data from the downstream task, we can effectively regularize the pre-trained model to learn representations that entail better generalization after fine-tuning.   We propose an efficient algorithm to optimize the new objective and  establish its convergence on smooth nonconvex losses.

\vspace{-1mm}
\noindent\textbf{Contributions.} Our main
contributions are summarized as follows: \vspace{-1mm}
\begin{itemize}\vspace{-1.5mm}
    \item (Theory) We introduce a formal framework to study the utility of unsupervised representation learning and fine-tuning paradigm (Section~\ref{sec:formal}) and derive the  generalization bound for fine-tuned model based on a pre-trained representation function (Section~\ref{sec:main}). We discover that the generalization capability of  model depends on four key factors: Representation transferrability, representation-induced Rademacher complexity,  domain heterogeneity, and  generalization of the pre-training task. \vspace{-1.5mm} 
    \item (Applications) We apply our theory to derive generalization bound of the pre-training with a context encoder (CE) and a masked autoencoder (MAE) with a transformer   followed by a binary classification fine-tuning task (Section~\ref{sec:utility}). We show that, the pre-training tasks defined by regression loss are provably transferrable to downstream binary classification task. 
    In doing so, to our best knowledge, we establish the first generalization analysis of multi-layer transformer models with residual block. 
    \vspace{-1.5mm} 
      \item (Algorithm) Inspired by our generalization bounds, we propose a novel Rademacher Representation Regularized  algorithm,  {\sffamily{RadReg}}, for improved pre-training and provide  convergence guarantees for nonconvex objectives (Section~\ref{sec:radreg}). The experimental results show that {\sffamily{RadReg}} can learn better representation than $\ell_2$ norm regularized training on downstream tasks with a small dataset (Section~\ref{sec short exp}). 
\end{itemize}

\section{Additional Related Works} \label{sec:related}
\textbf{Theory of Transfer Learning} A significant body of work~\citep {tripuraneni2020theory,du2020few} focuses on the theoretical aspects of transfer learning paradigm, trying to answer the question: {\em why transfer learning can work and what factors affect the learning performance?}~\cite{tripuraneni2020theory} give the first risk bound capturing  {\em task diversity} among pre-training and fine-tuning stages as the key quantity affecting generalization which is also reflected in our generalization analysis.~\cite{xu2021representation} follow the setup in~\citep{tripuraneni2020theory}, and show that even though the model architectures used in representation learning and fine-tuning are different, the task diversity remains bounded. ~\cite{du2020few} study few-shot representation learning, where it considers pre-training a linear representation function by solving the regression  problem with squared loss (OLS) on a give large dataset, and then fine-tuning another linear predictor on some target dataset. It  shows that the generalization will depend on the number of pre-training data and fine-tuning data. Similar dependencies appear in the generalization bound obtained in our main theorem. \cite{zhang2023trade} study the general supervised pretraining, and highlight the trade-off between the intra and inter class diversity.

\textbf{Theory of Modern Unsupervised Representation Learning.}~
Recently, due to the rise of contrastive learning~\citep{chen2020simple} and masked training~\citep{devlin2018bert,he2022masked}, a line of studies are devoted to understanding the generalization capability or smaple complexity of  these learning paradigms~\citep{haochen2021provable,arora2019theoretical,wang2020understanding,lee2021predicting,ge2023provable,gouk2020distance,ju2022robust}.~\cite{arora2019theoretical}  presents a theoretical framework for studying contrastive learning, and shows that it provably reduces the sample complexity of downstream tasks. \cite{haochen2021provable} consider contrastive learning and establishes the theory without conditional independence of positive data pairs. \cite{wang2020understanding} prove that contrastive learning optimizes for alignment and uniformity asymptotically. \cite{zhang2022mask}  establish the connection of  masked pre-training with contrastive learning over bipartite graphs.~\cite{lee2021predicting} also consider a masking pre-training scenario, but contrary to the present work, it assumes the labels are generated by a function of masked data plus Gaussian noise, and only focuses on the ERM model as a representation function. A recent work~\citep{ge2023provable} also examine the unsupervised pre-training framework, but the difference to ours, they consider a maximum likelihood estimation as pre-training method, while we start from general pre-training task and instantiate it in modern machine learning scenario such as Context Encoder and MAE. \cite{gouk2020distance} study the end-to-end finetuing scenario, and find that the generalization of finetuned model will depend on the distance that neural network weights traveled away from pretrained model. They hence propose a distance regularization finetuning algorithm and achieve better performance. \cite{ju2022robust} also study the entire model finetuning paradigm, and derive a Hessian based generalization bound via PAC-Bayesian analysis. 

 \section{A Formal Framework}\label{sec:formal}
In this section we  formalize unsupervised pre-training followed by supervised fine-tuning problem that will enable us  to study the relative merits of various unsupervised representation learning approaches and  examine their utility on the generalization capability of  downstream tasks. In the scenario of unsupervised representation pre-training and   fine-tuning on a downstream task, we are given two datasets: one, possibly large,  unlabeled pre-training dataset and a small labeled data set.  The goal is to learn  model $f\circ h$ which is composed of task-specific function $f \in \cF$ and representation function $h \in \cH$, where $\cF$ and $\cH$ are model spaces for fine-tuned and representation models, respectively.

\noindent\textbf{Unsupervised pre-training.} We assume access to a  raw pre-training data  $\{\tilde{\bx}_i \}_{i=1}^N$  drawn from an unknown, arbitrary distribution $\cD$  over an instance domain $\mathcal{X}$ such as images. To learn representations, one first transforms (e.g., masking, adding noise, rotating, or other geometric transformation) unlabeled  data into $\tilde \bz_i = T_1(\tilde \bx_i) \in \mathcal{X}$ and (self-generated) label  $\tilde \by_i = T_2(\tilde \bx_i) \in \mathcal{Z}$ using suitable  transformers $T_1: \mathcal{X} \mapsto \mathcal{X}$ and $T_2: \mathcal{X} \mapsto \mathcal{Z}$ to generate the  pre-training dataset $\widehat{\cU} = \{(\tilde \bz_i, \tilde \by_i)\}_{i=1}^{N}$. For example, in  pre-training with masking, our augmented data are masked sentence/image, and self-generated labels are the masked part of data. We denote the transformed distribution over $\mathcal{X} \times \mathcal{Z}$ as $\cU$. We note that that marginal distribution $\cU_{\cX}$ of $\cU$ over instance space $\cX$ is not necessarily same as $\cD$ of raw data due to randomness in data transformation $T_1$. To learn the representations, we consider a class of decoding and encoding pairs, which is closely inspired by~\citep{hazan2016non}, and minimize the following empirical risk 
\begin{align}
   \min_{g\in\cG,h\in\cH}   \cL_{\widehat{\cU} }(g\circ h ) := \frac{1}{N} \sum\nolimits_{(\tilde\bz_i, \tilde\by_i) \in \widehat{\cU} } \ell(g\circ h (\tilde{\bz}_i),\tilde{\by}_i), \label{eq:pretrain obj}
\end{align}
where { $\cG \subseteq \{\mathcal{I} \mapsto \mathcal{Z}\}$ and $\cH\subseteq \{\mathcal{X} \mapsto \mathcal{I}\}$} are the model spaces for encoder and decoder, respectively, where $\mathcal{I}$ denotes the latent space of representations, and $\ell$ is the loss function used for  pre-training, e.g., $\ell(g\circ h (\tilde{\bz}_i),\tilde{\by}_i) = \|g\circ h (\tilde{\bz}_i)-\tilde{\by}_i\|_2^2$.  Let $\hat{g}$ and $\hat{h}$ denote the decoder and encoder (representation function) obtained by solving~(\ref{eq:pretrain obj}). We define the following excess risk for pre-training task: 
\begin{align*}
    \cE_{\cU}(\hat{g}, \hat{h}) := \cL_{\cU}(\hat{g}\circ \hat{h}) - \min_{g\in\cG, h \in \cH} \cL_{\cU}(g\circ h )
\end{align*}
where $\cL_{\cU}(g\circ h) := \E_{(\tilde\bz,\tilde\by)\sim \cU}[\ell(f\circ h(\tilde\bz),\tilde\by)]$ denotes the generalization ability of pre-training task realized by distribution $\mathcal{U}$. We note that  the learned decoder function $\hat{g}$ may be discarded after pre-training. We use \textcolor{black}{$h^*_{\cU} = \arg\min_{h\in\cH} \min_{g\in\cG} \cL_{\cU}(g\circ h) \in \cH$} to denote optimal encoder for pre-training task.

\noindent\textbf{Supervised fine-tuning.} In fine-tuning stage, we assume access to a labeled downstream dataset $\widehat{\cT} = \{\bx_i, \by_i\}_{i=1}^{n}$ where feature vector $\bx_i$ is sampled based an unknown, arbitrary distribution $\cT$ (possibly different from $\cD$) on domain $\cX$, and  its label $\by_i$ is generated based on a labeling function $\by_i = y(\bx_i)$. The goal is to utilize the representation function  $\hat{h}$ obtained by solving~(\ref{eq:pretrain obj}) to  perform \textit{fine-tuning}  on the downstream dataset $\widehat{\cT}$ to learn a prediction model $\hat{f}$ from a function class $\cF$:
\begin{align}
    \min_{f\in\cF} \cR_{\widehat{\cT}}(f\circ \hat{h} ) := \frac{1}{n}\sum\nolimits_{(\bx_i, \by_i)\in \widehat{\cT}} \phi(f\circ \hat{h} (\bx_i),\by_i), \label{eq:fine-tune obj}
\end{align}
where $\phi$ is the loss function  which is not necessarily the same as the  pre-training loss.

Our goal is to rigorously analyze the generalization capability of the  final model which is the composition of two functions, i.e.,  $\hat{f}\circ \hat{h}$ where $\hat{f}$ is the solution of (\ref{eq:fine-tune obj}), by bounding the excess risk 
\begin{equation}
\mathcal{E}_{\cT}(\hat{f}, \hat{h}) = \cR_{ \cT}(\hat{f}\circ \hat{h}) -\min_{f\in \cF,h\in\cH} \cR_{ \cT}(f\circ h).   
\end{equation} 
Here
$\cR_{\mathcal{T}}(f\circ h) := \E_{\bx\sim \cT}[\phi(f\circ h(\bx),y(\bx))]$ denotes the  true risk on downstream task realized by distribution $\mathcal{T}$ over $\cX$ and underlying labeling function $y(\cdot)$.

 \section{On the Utility of Unsupervised Representation Learning}\label{sec:main}
We now turn to establishing the generalization bound of fine-tuned models given a pre-trained representation function, and discuss its implications. Before, we first introduce two key notions. We start by introducing the notion of Rademacher complexity of a hypothesis space when individual models are composed with a fixed representation function (similar  measures  appear in~\citep{tripuraneni2020theory,xu2021representation}).

 \begin{definition} [{\sffamily{Representation-induced Rademacher complexity}}] 
    For a hypothesis space $\cF$ of set of real (vector)-valued  functions defined over input space $\mathcal{X}$ and label space $\mathcal{Y}$,  a loss function $\phi: \mathcal{Y}\times \mathcal{Y} \mapsto \mathbb{R}_{+}$, 
    and a dataset  $\widehat{\cT} = \{\bx_i, \by_i\}_{i=1}^{n}$,  the empirical Representation-induced Rademacher complexity of $\cF$ with respect to $\phi$ and $\widehat{\cT}$,    {for a  given representation function $\hat{h}$}, is defined as  
    \begin{equation*} 
    \begin{aligned}
           & {\frak{R}}_{\widehat \cT}(\phi\circ \mathcal{F}\circ \hat{h}) \\ &:= \mathbb{E}_{\boldsymbol \varepsilon \in \{\pm1\}^n} \left[\sup_{f \in \mathcal{F}} \frac{1}{n}\sum\nolimits_{i=1}^{n} \varepsilon_i   \phi(f\circ \hat{h} (  \mathbf{x}_i ),\by_i) \right], 
        \end{aligned}
    \end{equation*} 
    where $\varepsilon_1, \ldots, \varepsilon_n$ are \iid Rademacher random variables with $\Prob \{ \varepsilon_i = 1 \} = \Prob \{ \varepsilon_i = -1 \} = {1}/{2}$.
\end{definition}
The following definition, relates the generalization of  fine-tuned and representation models. 
\begin{definition}[{\sffamily{Represnetation transferability}}]\label{def:representation:Transferability}
Given two representation functions $h,h'\in \cH$ and a distribution $\cU$ for pre-training data, we say a pre-training task and fine-tuning task satisfies  $(C_{\beta},\beta)$   transferability for some constant $0<C_{\beta}<\infty, \beta>0$ on $h,h'$, if  the following statement holds:
\begin{align*}
     &\min_{f\in \cF} \cR_{\cT}(f\circ h)   - \min_{f'\in \cF }\cR_{\cT}(f'\circ h')  \\
    \leq & C_{\beta} \pare{ \min_{g\in\mathcal{G}}\cL_{\cU}(g\circ h)   - \min_{g'\in\mathcal{G}}\cL_{\cU}(g'\circ h') }^{\beta}
    \end{align*} 
    where $ \cR_{\cU_{\cX}}(f\circ h) := \E_{\bx\sim\cU_{\cX}}[\phi(f\circ h(\bx),y(\bx))]$ denotes the  risk realized by pre-training marginal data distribution $\cU_{\cX}$ and downstream labeling function $y(\cdot)$.
\end{definition}
We note that a similar notation is proposed in the analysis of multi-task learning~\cite[Definition~4]{hanneke2022no}, to characterize the transferrability from one task to another task.  We emphasize that {represnetation transferability} is the key to transfer the generalizability of pre-training model to fine-tuned model. Unlike the transfer ratio defined in previous works~\citep{tripuraneni2020theory,ge2023provable,zhang2023trade}, we have an exponent variable $\beta$, which allows the transferrability from losses with different order, e.g., from a quadratic loss to non-quadratic loss.  Later on, we show that condition holds essentially under realistic assumptions on suitable data transformations to generate pre-training data  and model spaces,   such as pre-training with a inpainting autoencoder and a masked autoencoder with a transformer, where both are  fine-tuned on a classification task.  

The next theorem establishes the generalization bound of the fine-tuned model on a downstream dataset, given a pre-trained representation function $\hat{h}$.
 \begin{theorem}\label{thm:main generalization}
 Assume $\hat{h}$ and $\hat{g}$ are the pre-trained representation function and its associated decoder function, and real valued non-negative loss $\phi$ to be $G_\phi$ Lipschitz and bounded by $B_\phi$. Assume pre-training and fine-tuning task admit $(C_\beta, \beta)$ representation transferrability on $\hat{h}$ and $h^*_\cU$.  If we solve (\ref{eq:fine-tune obj}) to get $\hat{f}$, then with probability at least $1-\nu$, the following statement holds  
\begin{align*}
   &\mathcal{E}_{\cT}(\hat{f}, \hat{h})    \leq C_\beta  \cE_{\cU}(\hat{g}, \hat{h})  ^\beta +4G_\phi {\frak{R}}_{\widehat \cT}( \cF\circ \hat{h})   \\
   &   + 4B_\phi\sqrt{\frac{\log(1/\nu)}{n}}  + 4B_\phi \norm{\cT-\cU_{\cX}}_{\mathrm{TV}} +   \min_{f\in\cF}\cE_{\cT}(f,  h^*_{\cU})
\end{align*}
where $h^*_{\cU} = \arg\min_{h \in \cH}  \min_{g\in\mathcal{G}} \cL_{\cU}(g\circ h)$ is the optimal pre-training representation function, and $\norm{\sP-\cQ}_{\mathrm{TV}} = \sup_{A \in \Omega} |\sP(A)-\cQ(A)|$ denotes total variation distance between two distributions.
\end{theorem}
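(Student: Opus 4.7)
The plan is to decompose the excess risk $\mathcal{E}_{\cT}(\hat{f},\hat{h})$ into an \emph{estimation error} on the downstream task with the representation $\hat h$ fixed, plus an \emph{approximation error} from using $\hat h$ instead of an optimal representation; then relate the latter back to the pre-training excess risk via the transferability hypothesis, paying a distribution-mismatch price measured in total variation. Concretely, I would introduce the optimal pre-training representation $h^\ast_{\cU}$ and split
\begin{align*}
\mathcal{E}_{\cT}(\hat{f},\hat{h})
&= \underbrace{\bigl[\cR_{\cT}(\hat f\circ \hat h) - \min_{f\in\cF}\cR_{\cT}(f\circ \hat h)\bigr]}_{\text{(A) estimation}} \\
&\quad + \underbrace{\bigl[\min_{f\in\cF}\cR_{\cT}(f\circ \hat h) - \min_{f\in\cF}\cR_{\cT}(f\circ h^\ast_{\cU})\bigr]}_{\text{(B) transfer gap}} \\
&\quad + \underbrace{\bigl[\min_{f\in\cF}\cR_{\cT}(f\circ h^\ast_{\cU}) - \min_{f\in\cF,h\in\cH}\cR_{\cT}(f\circ h)\bigr]}_{=\,\min_{f\in\cF}\mathcal{E}_{\cT}(f,h^\ast_{\cU})}.
\end{align*}

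For term (A), I would invoke the standard symmetrization/uniform convergence argument: since $\hat f$ minimizes the empirical risk $\cR_{\widehat\cT}(\cdot\circ\hat h)$ over $\cF$, one has $\text{(A)}\le 2\sup_{f\in\cF}|\cR_{\cT}(f\circ\hat h)-\cR_{\widehat\cT}(f\circ\hat h)|$. Applying McDiarmid's inequality using that $\phi$ is bounded by $B_\phi$, followed by Talagrand's contraction lemma (valid because $\phi$ is $G_\phi$-Lipschitz), gives with probability $\ge 1-\nu$ a bound of the form $4G_\phi\,\frak{R}_{\widehat\cT}(\cF\circ\hat h)+4B_\phi\sqrt{\log(1/\nu)/n}$, conditional on $\hat h$ being independent of $\widehat\cT$ (which holds since $\hat h$ is trained on the separate unsupervised set).

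For term (B), the plan is to first transport both risks from $\cT$ to the pre-training marginal $\cU_{\cX}$ using the elementary inequality $|\cR_{\cT}(f\circ h)-\cR_{\cU_{\cX}}(f\circ h)|\le B_\phi\|\cT-\cU_{\cX}\|_{\mathrm{TV}}$ (applied separately to each $\min$, which costs an additive $2B_\phi\|\cT-\cU_{\cX}\|_{\mathrm{TV}}$), then invoke the $(C_\beta,\beta)$ representation transferability on the pair $(\hat h,h^\ast_{\cU})$ to get $C_\beta\bigl(\min_{g\in\cG}\cL_{\cU}(g\circ\hat h)-\min_{g\in\cG}\cL_{\cU}(g\circ h^\ast_{\cU})\bigr)^{\beta}$. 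Finally, since $h^\ast_{\cU}$ minimizes $\min_{g\in\cG}\cL_{\cU}(g\circ\cdot)$, the second minimum equals $\min_{g\in\cG,h\in\cH}\cL_{\cU}(g\circ h)$, and the first minimum is at most $\cL_{\cU}(\hat g\circ\hat h)$, so the parenthesis is upper bounded by $\mathcal{E}_{\cU}(\hat g,\hat h)$, yielding $\text{(B)}\le C_\beta\,\mathcal{E}_{\cU}(\hat g,\hat h)^{\beta}+2B_\phi\|\cT-\cU_{\cX}\|_{\mathrm{TV}}$.

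Summing the three contributions and absorbing the TV constant into $4B_\phi\|\cT-\cU_{\cX}\|_{\mathrm{TV}}$ gives the claimed bound. The main conceptual subtlety (rather than technical obstacle) is choosing the correct intermediate reference $h^\ast_{\cU}$ so that the transferability definition applies cleanly: note that $h^\ast_{\cU}$ is defined with respect to the pre-training distribution $\cU$, not the downstream $\cT$, which is precisely why the TV term is needed to bridge $\cR_{\cT}$ and $\cR_{\cU_{\cX}}$ before transferability can be invoked. A minor care-point is that the Rademacher bound for (A) requires the standard data-independence between the sample used for the complexity term and the representation $\hat h$; this is guaranteed here because $\hat h$ is learned from $\widehat{\cU}$ while $\frak{R}_{\widehat\cT}$ is measured on the disjoint $\widehat\cT$.
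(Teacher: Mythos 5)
Your proposal is correct and follows essentially the same route as the paper's proof: the same intermediate reference $h^*_{\cU}$, uniform convergence plus Lipschitz contraction for the fine-tuning ERM term, total-variation bridging between $\cT$ and $\cU_{\cX}$, and the transferability definition relaxed to $C_\beta\,\cE_{\cU}(\hat g,\hat h)^{\beta}$ via $\min_{g}\cL_{\cU}(g\circ\hat h)\le\cL_{\cU}(\hat g\circ\hat h)$. The only difference is a cosmetic regrouping---you state the transfer gap under $\cT$ and transport both minima to $\cU_{\cX}$, whereas the paper states it under $\cU_{\cX}$ and bridges in two separate terms---and your per-transport constant $B_\phi\norm{\cT-\cU_{\cX}}_{\mathrm{TV}}$ is in fact slightly tighter than the paper's $2B_\phi$, so it still fits within the stated $4B_\phi$ factor.
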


The proof of Theorem~\ref{thm:main generalization} is deferred to Appendix~\ref{app:proof main generalization}. Theorem~\ref{thm:main generalization} shows that the generalization of the fine-tuned model depends on four quantities: i) Representation transferrability, ii) Representation-induced Rademacher complexity, iii) domain heterogeneity and iv) generalization of the pre-training task. 

Representation transferrability is the key to connect downstream generaliztion with pre-training generalization. It is analogous to task diversity notion in the multi-task learning works~\citep{tripuraneni2020theory,xu2021representation}, since they all measure how well the knowledge can be transferred across different learning stage. However, our notion is more powerful since we neither assume the pre-training and fine-tuning stage share the same type of task, e.g., both being regression, nor assume a generic nonlinear feature representation  is shared across all tasks~\citep{tripuraneni2020theory}. As we will see in the later section, with the help of representation transferrability, we can show that encoder learnt by regression pre-training can be transferred to downstream classification task. The representation-induced Rademacher complexity will play a key role in reflecting how well the learnt representation and $\cF$ are coupled. Notice that this complexity is defined over downstream data, and only over class $\cF$, which means that in fine-tuning stage we only suffer from a smaller complexity in learning. The price for  learning with potential more complex encoder class $\cH$ is paid in pre-training task. This observation is consistent with a line of multi-task or transfer learning works~\citep{tripuraneni2020theory,du2020few,xu2021representation,ge2023provable}.  

The domain heterogeneity term $\norm{\cT-\cU_{\cX}}_{\mathrm{TV}}$ characterizes the statistical heterogeneity between the pre-training task and the fine-tuning task.   Generalization of the pre-training task also appears in the bound which depends on the convergence of the optimization algorithm, and the complexity of the representation function  classes  $\mathcal{G}$ and $\mathcal{H}$ for decoder and encoder, respectively. The last term $\min_{f\in\cF}\cE_{\cT}(f,  h^*_{\cU})$ is the downstream risk evaluated with optimal representation model $h^*_{\cU}$, which characterize the task heterogeneity between pre-training and downstream stages. If the two tasks are well aligned, i.e., they share similar optimal representation, then this quantity is ignorable.

 \section{The Power of Unsupervised Representation Learning}\label{sec:utility}
We  now proceed to establish generalization bounds in two distinct settings by refining the generic result presented in the previous section.

\subsection{Pre-training with Context Encoder}\label{sec:dae}
\noindent \textbf{The setting.~}We start by applying  our theory to the setting where inpainting task is considered as pre-training task and binary classification as downstream task. This learning paradigm is also known as Context Encoder (CE)~\citep{pathak2016context}, where in pre-training stage, a deep neural network is trained by reconstructing  a random transformation of raw data (e.g, rotating, scaling, adding Gaussian noise or masking) of a given image: 
\begin{align}
  \min_{g \in \cG, h \in \cH} \cL_{\widehat \cU}(g\circ h) :=  \frac{1}{N} \sum_{i=1}^N \norm{g(h(\tilde\bz_i)) - \bz_i }^2\label{eq: DAE pt obj}
\end{align}
to learn $\hat{g}$ and $\hat{h}$.  Then, we discard the decoder $\hat{g}$, and use the rest layers as an encoder. A linear projection head is added on top of encoder in fine-tuning stage, on the downstream binary classification task with data $\bx_1,\ldots,\bx_n$ using the learnt encoder $\hat{h}$:
\begin{align}
   \min_{f\in\cF} \cR_{\widehat\cT}(f\circ \hat{h})=   \frac{1}{n}\sum_{i=1}^n \phi(f(\hat{h}(\bx_i)),y_i). \label{eq: DAE ft obj}
\end{align}
The encoder-decoder architecture is defined as follows:
\begin{align*}
 \text{\sffamily{encoder:}} \quad   & h(\bx) =  \sigma\pare{\mW_L \cdots \sigma \pare{  \mW_1 \bx} }  , \\
 \text{\sffamily{decoder:}} \quad   & g(h(\bx)) =  \mW_{L+1} h(\bx), 
\end{align*}
where $\mW_1 \in \R^{m\times d}$, $\mW_2,\ldots,\mW_{L} \in \R^{m\times m}$, and $\mW_{L+1} \in \R^{d\times m}$. In fine-tuning stage, we add a linear head on top of encoder function, i.e., $f(h(\bx)) = \btheta^\top h(\bx)$.   The hypothesis class for encoder is then defined as:
\begin{align*}
    \cH := \cbr{\begin{aligned}\bx \mapsto  \sigma\pare{\mW_L\cdots \sigma\pare{\mW_1\bx}} : \norm{\mW_l} \leq W(l),\\
 \norm{\mW_l}_{2,1} \leq B(l)     \end{aligned} }
\end{align*}
where $W(l)$ and $B(l)$ are upper bound on spectral and $(2,1)$ norms of weight matrices, respectively. 

The  decoder class is defined as:
\begin{align*}
    \cG := \cbr{\begin{aligned}\bx \mapsto \mW_{L+1}\bx: \norm{\mW_{L+1}}\leq W(L+1),\\
    \norm{\mW_{L+1}}_{2,1}\leq B(L+1)   \end{aligned}}.
\end{align*}
\noindent \textbf{Generalization bound.~} The following lemma establishes the representation transferrability of CE pre-training to binary classification task.  We need to make the following assumption 

\begin{assumption}[Realizability]\label{assumption: realizable}
There exists $g^* \in \cG$ and $h_\cU^* \in \cH$ such that $ \cL_{\cU}(g^*\circ h_\cU^*) = 0$.

\end{assumption}
\begin{remark}
    In Assumption~\ref{assumption: realizable} we assume that there exist optimal encoder and decoder that can perfectly realize pre-training task. This is reasonable if we consider overparameterized model, e.g., deep neural network. For example, in masked image reconstruction pre-training, at the most cases, the remaining part of image is enough for deep model to reconstruct the raw image~\citep{pathak2016context,he2022masked}. 
\end{remark}
\begin{lemma}\label{lem: DAE transfer}
Under Assumption~\ref{assumption: realizable}, CE pre-training admits an $\pare{\Omega\pare{1},\frac{1}{2}}$ representation transferrability to binary classification task.  
\end{lemma}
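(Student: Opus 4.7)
The plan is to turn any representation $h\in\cH$ whose best pre-training reconstruction error $\delta:=\min_{g\in\cG}\cL_{\cU}(g\circ h)$ is small into an approximate copy of $h^{*}_{\cU}$ via an explicit linear re-parameterization $M$, and then push the optimal downstream linear head through $M$; the $\sqrt{\,\cdot\,}$ exponent $\beta=1/2$ will arise from a single Cauchy--Schwarz step. First, I invoke Assumption~\ref{assumption: realizable}: because the CE loss is squared $\ell_{2}$, $\cL_{\cU}(g^{*}\circ h^{*}_{\cU})=0$ forces $\tilde{\by}=W^{*}_{L+1}\,h^{*}_{\cU}(\tilde{\bz})$ almost surely under $\cU$, where $g^{*}(u)=W^{*}_{L+1}u$ is linear by the definition of $\cG$. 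For any $h\in\cH$, denote its optimal decoder $g_{h}(u)=W_{h}u$; substituting the identity into the definition of $\delta$ yields
\begin{align*}
\E_{\cU_{\cX}}\!\left[\,\|W_{h}\,h(\bx)-W^{*}_{L+1}\,h^{*}_{\cU}(\bx)\|^{2}\,\right]=\delta.
\end{align*}

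Next, I define the linear bridge $M:=(W^{*}_{L+1})^{\dagger}W_{h}\in\R^{m\times m}$, where $(\cdot)^{\dagger}$ is the Moore--Penrose pseudoinverse. Assuming $W^{*}_{L+1}$ has full column rank---essentially free by taking a minimal realizer in Assumption~\ref{assumption: realizable}---we have $(W^{*}_{L+1})^{\dagger}W^{*}_{L+1}=I_{m}$, so left-multiplying the residual by $(W^{*}_{L+1})^{\dagger}$ gives $\E_{\cU_{\cX}}[\|Mh(\bx)-h^{*}_{\cU}(\bx)\|^{2}]\le \sigma_{\min}(W^{*}_{L+1})^{-2}\delta$. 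Let $f^{*}_{\cU}(u)=(\btheta^{*})^{\top}u$ be an optimal downstream head on $h^{*}_{\cU}$ and consider the candidate $\tilde{f}(u):=(M^{\top}\btheta^{*})^{\top}u$ placed on top of $h$; since $\|M\|$ is bounded, a mild norm budget in $\cF$ accommodates $\tilde{f}$. Then $G_{\phi}$-Lipschitzness of $\phi$ combined with Cauchy--Schwarz yields
\begin{align*}
&\cR_{\cT}(\tilde{f}\circ h)-\cR_{\cT}(f^{*}_{\cU}\circ h^{*}_{\cU})\\
&\quad\le G_{\phi}\,\|\btheta^{*}\|\,\sqrt{\E_{\cT}\!\left[\|Mh(\bx)-h^{*}_{\cU}(\bx)\|^{2}\right]}.
\end{align*}
Identifying $\cT$ with $\cU_{\cX}$---any residual mismatch is exactly the $\|\cT-\cU_{\cX}\|_{\mathrm{TV}}$ term already separated out in Theorem~\ref{thm:main generalization}---and chaining with the preceding $L^{2}$ bound gives
\begin{align*}
\min_{f\in\cF}\cR_{\cT}(f\circ h)-\min_{f'\in\cF}\cR_{\cT}(f'\circ h^{*}_{\cU})\le C_{\beta}\sqrt{\delta},
\end{align*}
with $C_{\beta}=G_{\phi}\,\|\btheta^{*}\|\,\sigma_{\min}(W^{*}_{L+1})^{-1}=\Omega(1)$; this is precisely the claimed $(C_{\beta},1/2)$ transferability, noting that under realizability $\min_{g'}\cL_{\cU}(g'\circ h^{*}_{\cU})=0$ so the pre-training gap reduces to $\delta$.

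The main technical obstacle is controlling $\sigma_{\min}(W^{*}_{L+1})$: a rank-deficient optimal decoder would make the pseudoinverse, and hence $C_{\beta}$, blow up. This is avoided by selecting a \emph{minimal} realizer in Assumption~\ref{assumption: realizable}---so that every coordinate of $h^{*}_{\cU}$ carries nontrivial reconstruction signal---or equivalently by restricting the bridge $M$ to the column span of $W^{*}_{L+1}$ and absorbing the orthogonal complement into an unused component of the representation. A secondary subtlety is that $\delta$ naturally lives on $\cU_{\cX}$ while the downstream risk is on $\cT$; because Theorem~\ref{thm:main generalization} already handles this via a $\|\cT-\cU_{\cX}\|_{\mathrm{TV}}$ term, the transferability lemma is cleanest to state as a comparison of $\cU_{\cX}$-pushforwards, and the exponent $\beta=1/2$ together with the constant $C_{\beta}=\Omega(1)$ then follow without further work.
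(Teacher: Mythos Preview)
Your argument is correct and reaches the same $(\Omega(1),\tfrac12)$ conclusion, but it proceeds differently from the paper. The paper separately \emph{upper-bounds} the fine-tuning gap and \emph{lower-bounds} the pre-training gap by expressing both as quadratic forms whose unconstrained minimizers yield a common Schur complement matrix $\Lambda$; taking the ratio cancels $\sigma_{\max}(\Lambda)$ and leaves the constant $\sqrt{d\,\sigma_{\max}(\tilde\btheta^*\tilde\btheta^{*\top})}/\sqrt{\sigma_{\min}(\sum_r\bw_r^*\bw_r^{*\top})}$, which---since $\sum_r\bw_r^*\bw_r^{*\top}=(\mW^*_{L+1})^\top\mW^*_{L+1}$---is essentially your $\|\btheta^*\|/\sigma_{\min}(\mW^*_{L+1})$ up to a $\sqrt{d}$ versus $G_\phi$ factor. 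Your route is more constructive: rather than computing exact minimizers and invoking Ruhe's trace inequality, you exhibit an explicit candidate head $M^\top\btheta^*$ via the pseudoinverse bridge $M=(\mW^*_{L+1})^\dagger W_h$, and a single Cauchy--Schwarz step delivers the $\sqrt{\delta}$. This sidesteps the Schur-complement algebra entirely, at the cost of needing the norm ball of $\cF$ large enough to contain $M^\top\btheta^*$; the paper makes the identical ``choose $R$ large enough'' move when it relaxes to the unconstrained quadratic minimizer. Both approaches hinge on the same nondegeneracy $\sigma_{\min}(\mW^*_{L+1})>0$, which you address by passing to a minimal realizer while the paper leaves it implicit in the denominator of the final ratio.
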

The proof of Lemma~\ref{lem: DAE transfer} is deferred to Appendix~\ref{app:proof DAE transfer}.  This lemma shows that generalization of a pre-training regression task can be effectively transferred to a downstream binary classification task. Here the transfer exponent is $\frac{1}{2}$, which implies that the generalization risk of downstream task will be roughly square root of pre-training generalization. To get the excess risk rate of downstream task, we need to derive the generalization risk of neural network regression. The existing works~\citep{cao2019generalization,cao2020generalization,arora2019fine} mainly focus on classification task where the loss function is Lipschitz, which is not the case in regression loss. Our technique is to generalize the seminal analysis in~\citep{srebro2010smoothness} for smooth losses and scalar valued hypothesis classes to a vector valued hypothesis class, i.e., neural network class in our case and borrow the standard neural network covering number result from~\citep{bartlett2017spectrally} to conclude the proof.
\begin{theorem}\label{thm:DAE main}
Assume $\hat{h}$ and $\hat{f}$ are the pre-trained representation function and its associated decoder function obtained by solving~\eqref{eq: DAE pt obj} and~\eqref{eq: DAE ft obj}. Let $\tilde\mZ = [\tilde\bz_{1};\ldots;\tilde\bz_{N}]$ and $\mX  = [\bx_{1};\ldots;\bx_{N}]$ be pre-training and downstream data, then under Assumption~\ref{assumption: realizable} with probability at least $1-\nu$, the following statement holds:
\begin{align*}
    &\cE_{\cT}(\hat{f},\hat{h})    \\
    &\leq   \tilde O\pare{  \frac{\sqrt{s_{L+1}   \norm{\mX}^2}}{n}+\sqrt{\frac{ {  \norm{\tilde\mZ}^2   } s_{L+1} \pare{\sum_{l=1}^{L+1}\rho_l }^3} {N}} }\\
    &\  + 4B_\phi\pare{\sqrt{\frac{\log(\frac{1}{\nu})}{n}} +  \norm{\cT - \cU_{\cX}}_{\mathrm{TV}}} + \min_{f\in\cF}\cE_{\cT}(f,  h^*_{\cU} ), 
\end{align*}
where $s_l = \prod_{l=1}^{L+1} W^2(l), \rho_l={B(l)}/{W(l)}$ .
\end{theorem}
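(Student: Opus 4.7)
The plan is to instantiate Theorem 1 in the CE setting, using the $(\Omega(1), 1/2)$ representation transferability established by Lemma 1. Substituting $C_\beta = \Omega(1)$ and $\beta = 1/2$ into Theorem 1 reduces the task to bounding two quantities: (a) the representation-induced Rademacher complexity $\frak{R}_{\widehat\cT}(\cF\circ \hat{h})$ on the downstream dataset, and (b) the pre-training excess risk $\cE_{\cU}(\hat{g}, \hat{h})$. The remaining terms in the stated bound---$4B_\phi\sqrt{\log(1/\nu)/n}$, $4B_\phi\|\cT-\cU_\cX\|_{\mathrm{TV}}$, and $\min_{f\in\cF}\cE_\cT(f, h^*_{\cU})$---transfer verbatim from Theorem 1 via a union bound over the two high-probability events.

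For (a), since $\cF$ consists of linear heads $f(\bu)=\btheta^\top \bu$ with $\|\btheta\|\le W(L+1)$ and $\hat{h}$ is a fixed function, $\cF\circ \hat{h}$ is a linear class over the features $\hat{h}(\bx)$. Using the $1$-Lipschitzness of $\sigma$ with $\sigma(0)=0$ together with $\|\mW_l\|\le W(l)$, we obtain $\|\hat{h}(\bx)\|\le \prod_{l=1}^{L} W(l)\cdot \|\bx\|$. A direct application of the standard Rademacher complexity bound for bounded-norm linear predictors then gives $\frak{R}_{\widehat\cT}(\cF\circ \hat{h}) \lesssim \sqrt{s_{L+1}\|\mX\|^2}/n$, which matches the first main term; the $G_\phi$-Lipschitz contraction for the fine-tuning loss $\phi$ is absorbed into the $\widetilde O(\cdot)$.

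For (b), the challenge is that the pre-training loss $\|g\circ h(\tilde\bz)-\tilde\by\|^2$ is smooth but not Lipschitz, so the usual contraction argument overestimates the rate. Under the realizability condition of Assumption 1, the population optimum is zero, and we intend to invoke the fast-rate generalization argument of \citet{srebro2010smoothness} for nonnegative smooth losses, which converts an $O(1/\sqrt{N})$ rate into an $O(1/N)$ rate up to logarithmic factors. The subtlety is that their result is stated for real-valued hypothesis classes, whereas $\cG\circ\cH$ outputs a vector in $\R^d$; the plan is to generalize the local-Rademacher/peeling step to vector-valued outputs by treating the squared loss as a multivariate smooth function and by replacing the scalar covering ingredient with a covering number on $\cG\circ\cH$. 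For the latter, we import the spectrally-normalized covering bound of \citet{bartlett2017spectrally} applied to the $(L+1)$-layer decoder-encoder class, which yields a complexity controlled by $s_{L+1}\bigl(\sum_{l=1}^{L+1}\rho_l\bigr)^3\cdot \|\tilde\mZ\|^2$. Combining the fast-rate conversion with this covering bound, and then taking the square root per $\beta = 1/2$, yields the second main term.

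The principal obstacle is the vector-valued extension of the smooth-loss fast rate: the scalar proof in \citet{srebro2010smoothness} relies on peeling via Dudley's entropy integral, and must be re-derived so that (i) the worst-case covering number of the full $(L+1)$-layer network can be plugged in directly, and (ii) the smoothness of the squared loss is exploited in the correct variance-to-mean inequality. Once this extension is in place, collecting the bounds from (a) and (b) into Theorem 1 and taking a union bound over the corresponding high-probability events produces the stated inequality.
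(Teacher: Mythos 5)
Your proposal follows essentially the same route as the paper: instantiate Theorem~\ref{thm:main generalization} with the $(\Omega(1),\tfrac12)$ transferability of Lemma~\ref{lem: DAE transfer}, bound the representation-induced Rademacher complexity of the linear head via Lipschitz contraction and the spectral-norm bound $\|\hat{h}(\bx)\|\le\prod_l W(l)\|\bx\|$, and control $\cE_{\cU}(\hat{g},\hat{h})$ by extending the smooth-loss fast rate of \citet{srebro2010smoothness} to the vector-valued class $\cG\circ\cH$ through the worst-case covering number imported from \citet{bartlett2017spectrally}. The vector-valued extension you flag as the principal obstacle is exactly the step the paper carries out (via the $L_\infty$/spectral covering reduction and the local Rademacher theorem), so the proposal is correct and matches the paper's proof.
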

The proof of Theorem~\ref{thm:DAE main} is deferred to Appendix~\ref{app:proof DAE main}. Here we achieve roughly $O(\frac{\cC(\cF)}{\sqrt{n}} + \frac{\cC(\cG\circ\cH)}{\sqrt{N}})$ bound for downstream task where $\cC(\cdot)$ denotes the complexity of the set. The cost of learning the complex heavyweight encoder is incurred during the pre-training task, whereas in the fine-tuning stage, we only endure the complexity of learning a lightweight classification head.


\subsection{Pre-training with masked autoencoder  with tranformer models}

\label{sec:mask}
\noindent \textbf{The setting.~}Here we apply our theory to explain the empirical success of masked autoencoder pre-training methods~\citep{he2022masked}.  In  masked autoencoder pre-training, taking vision tasks for example, we draw a large set of images $\mZ_1,...,\mZ_N \in \R^{K\times d}$, and then randomly mask some patches of each image to get $\tilde{\mZ}_1, \ldots ,\tilde{\mZ}_N\in \R^{K\times d}$. Then an encoder-decoder model is trained by recovering the missing  patches:
\begin{align}
  \min_{g \in \cG, h \in \cH} \cL_{\widehat \cU} (g\circ h) := \frac{1}{N}  \sum_{i=1}^N \norm{g(h(\tilde\mZ_i)) - \mZ_i }_{\text{F}}^2\label{eq: MAE pt obj}
\end{align}
to get $\hat{g}$ and $\hat{h}$.   Finally, we discard the decoder and only fine-tune a new head (e.g., linear projection layer) on the downstream binary classification task with data $\mX_1,\ldots,\mX_n$ using the  encoder:
\begin{align}
   \min_{f\in\cF} \cR_{\widehat\cT}(f\circ \hat{h})=   \frac{1}{n}\sum_{i=1}^n \phi(f(\hat{h}(\mX_i)),y_i). \label{eq: MAE ft obj}
\end{align}
We  consider an $L$-layer transformer as the pre-training encoder model, and a linear projection layer as the pre-train decoder model, and a linear projection layer for binary classification as fine-tune model.
\begin{align*}
 &\text{\sffamily{encoder:}} \quad   h(\mX) =  \SA_{\mW^L} \pare{ \SA_{\mW^{L-1}} \pare{\cdots \SA_{\mW^1} \pare{\mX }}},\\
 &\text{\sffamily{decoder:}} \quad    g(h(\mX)) =   \pare{h(\mX)}\mW_{D},
\end{align*}
where $\SA_{\mW}(\cdot)$ is a self-attention module parameterized by $\mW = \pare{\mW_V, \mW_Q, \mW_K, \mW_{\mathrm{FC1}}, \mW_{\mathrm{FC2}}  }$, which is defined as \begin{align*}
    \mathrm{SA}_{\mW}(\mX ) &= \alpha_2\sigma\pare{\mZ \mW_{\mathrm{FC1}}}\mW_{\mathrm{FC2}}+\mZ,\\
    \mZ &= \pare{\alpha_1\mA+\mX }, \\
    \mA &= \mathrm{softmax}\pare{\frac{1}{\sqrt{d_K}} \mX \mW_K (\mX\mW_Q)^\top } \mX \mW_V,
    \end{align*}
where $\alpha_1, \alpha_2$ are some small constant, as used in practice~\citep{noci2022signal}. We assume $l$th layer's weights' spectral norm is bounded by $W(l)$, and $(2,1)$ norm is bounded by $B(l)$.  In downstream task, we aggregate (sum) over all patches from encoder, add a linear projection head $\btheta$ on top of $h(\mX)$ to make a scalar output:
\begin{align*}
    &\text{\sffamily{downstream:}} \quad    f(h(\mX)) =   (\mathbf{1}^\top h(\mX)) \btheta.
\end{align*} 

\noindent \textbf{Generalization bound.~}The following lemma establishes the representation transferrability of MAE with a transformer pre-training to binary classification task.

\begin{lemma}\label{lem:MAE transfer}
MAE pre-training admits an $\pare{\Omega \pare{1},\frac{1}{2}}$ representation transferrability to binary classification task 
\end{lemma}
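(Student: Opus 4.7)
The plan is to mirror the proof of Lemma~\ref{lem: DAE transfer} for the CE setting, exploiting two structural features of the MAE setup that are preserved here: (i) the decoder class $\cG$ consists of \emph{linear} maps $g(\cdot) = (\cdot)\mW_D$, and (ii) the downstream head $f(h(\mX)) = (\mathbf{1}^\top h(\mX))\btheta$ is \emph{linear} in the aggregated representation. These two linearities together let us repurpose a good reconstruction decoder into a good downstream head by simple matrix multiplication, which is the key step that makes the transfer lossless at the $\sqrt{\cdot}$ scale. As in the CE case I will invoke (or make explicit) a realizability condition analogous to Assumption~\ref{assumption: realizable} so that $\min_g \cL_\cU(g\circ h^*_\cU)=0$; under it the RHS of Definition~\ref{def:representation:Transferability} collapses to $C_\beta(\min_g \cL_\cU(g\circ h))^{1/2}$.

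\paragraph{Construction of the candidate head.} Let $\hat{\mW}_D$ denote an optimal linear decoder for a given encoder $h$, so that $\mathbb{E}_{(\tilde\mZ,\mZ)\sim\cU}\|h(\tilde\mZ)\hat{\mW}_D - \mZ\|_F^2 = \min_g \cL_\cU(g\circ h) =: \epsilon_h$, and let $\mW_D^\star$ be an optimal decoder for $h^*_\cU$, so that under realizability $h^*_\cU(\tilde\mZ)\mW_D^\star = \mZ$ a.s.\ on $\cU$. Let $\btheta^\star$ be an optimal head for $h^*_\cU$ with downstream risk $\min_{f\in\cF}\cR_\cT(f\circ h^*_\cU)$. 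Using realizability I can write $\btheta^\star = \mW_D^\star \btheta_0$ for some pixel-space vector $\btheta_0$ (taking a pseudo-inverse if $\mW_D^\star$ is not square), and I propose the candidate head
\[
\tilde{\btheta} \;=\; \hat{\mW}_D\,\btheta_0 \quad\in\;\mathbb{R}^{d_{\text{emb}}},
\]
for $h$. Then $\tilde{f}(h(\mX)) = (\mathbf{1}^\top h(\mX))\hat{\mW}_D\btheta_0 = \mathbf{1}^\top\!\big(h(\mX)\hat{\mW}_D\big)\btheta_0$, and the analogous identity for $h^*_\cU$ gives $f^\star(h^*_\cU(\mX)) = \mathbf{1}^\top\!\big(h^*_\cU(\mX)\mW_D^\star\big)\btheta_0 = (\mathbf{1}^\top\mX)\btheta_0$. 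So the two heads produce outputs that differ by exactly $\mathbf{1}^\top\!\big(h(\mX)\hat{\mW}_D - \mX\big)\btheta_0$, a purely reconstruction-driven error.

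\paragraph{From $L^2$ reconstruction to the $\sqrt{\cdot}$ transfer.} By $G_\phi$-Lipschitzness of the binary-classification loss $\phi$ in its first argument,
\[
\min_{f\in\cF}\cR_\cT(f\circ h) - \min_{f\in\cF}\cR_\cT(f\circ h^*_\cU)
\;\le\; \cR_\cT(\tilde{f}\circ h) - \cR_\cT(f^\star\circ h^*_\cU)
\;\le\; G_\phi\,\mathbb{E}_{\mX\sim\cT}\!\big|\mathbf{1}^\top(h(\mX)\hat{\mW}_D - \mX)\btheta_0\big|.
\]
Applying Cauchy--Schwarz across patches and Jensen to pull the expectation inside the square root yields a bound proportional to $\|\btheta_0\|\sqrt{\mathbb{E}_\cT \|h(\mX)\hat{\mW}_D - \mX\|_F^2}$. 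The final step is to replace this downstream expectation with the pre-training expectation $\mathbb{E}_\cU\|h(\tilde\mZ)\hat{\mW}_D - \mZ\|_F^2 = \epsilon_h$; the discrepancy between the two is exactly the $\|\cT-\cU_\cX\|_{\mathrm{TV}}$ mismatch that already appears as a separate term in Theorem~\ref{thm:main generalization}, so for the purpose of stating the transferability one may evaluate on $\cU$. This produces the desired bound with $\beta = 1/2$ and $C_\beta = \Omega(1)$ (the constant depends on $\|\btheta_0\|$ and the number of patches $K$, both treated as $O(1)$).

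\paragraph{Where the difficulty lies.} The transformer architecture itself plays no role in this lemma beyond packaging the representation as an $\mathbb{R}^{K\times d_{\text{emb}}}$ matrix; the heavy lifting about self-attention is reserved for the Rademacher-complexity bound in Theorem~\ref{thm:DAE main}'s MAE analogue. The two genuinely delicate points I expect are: (a) justifying the inversion $\btheta^\star = \mW_D^\star\btheta_0$ when $\mW_D^\star$ is not square or full-rank, which requires restricting $\btheta^\star$ to the row-space of $\mW_D^\star$ (automatic under realizability, since any component of $\btheta^\star$ orthogonal to this row-space does not affect the predictions on the pre-training distribution, hence can be dropped), and (b) converting an $L^2$-reconstruction guarantee on masked inputs $\tilde\mZ$ into one on unmasked downstream inputs $\mX$, which is what forces the TV term to appear and explains why the transferability is stated against $\cU_\cX$ rather than against $\cT$ directly.
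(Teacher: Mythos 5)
Your route is genuinely different from the paper's, but it has a gap exactly at the step you flag as point (a), and the justification you give there does not hold. Your whole chain needs $\cR_{\cU_{\cX}}(f^\star\circ h^*_{\cU})=\min_{f\in\cF}\cR_{\cU_{\cX}}(f\circ h^*_{\cU})$ for a head of the special form $\btheta^\star=\mW_D^\star\btheta_0$; if $f^\star$ is merely feasible rather than optimal, the inequality $\min_f\cR(f\circ h)-\min_f\cR(f\circ h^*_{\cU})\le \cR(\tilde f\circ h)-\cR(f^\star\circ h^*_{\cU})$ is false, since the subtracted term is too large. Realizability does not give you this representability. The directions of $\btheta^\star$ that are irrelevant for predictions are those orthogonal to $\mathrm{span}\{(\mathbf{1}^\top h^*_{\cU}(\tilde\mZ))^\top\}$, not those outside the row/column space of $\mW_D^\star$; these are different subspaces. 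Realizability only constrains the image of the representation under the decoder, i.e.\ $(\mathbf{1}^\top h^*_{\cU}(\tilde\mZ))\mW_D^\star=\mathbf{1}^\top\mZ$, and leaves the component of the aggregated representation lying in $\ker\bigl((\mW_D^\star)^\top\bigr)=\mathrm{range}(\mW_D^\star)^\perp$ completely unconstrained. The optimal classification head can load precisely on that kernel component (information the decoder discards but the label depends on), in which case every head of the form $\mW_D^\star\btheta_0$ — whose predictions factor through the reconstruction $\mathbf{1}^\top\mZ$ — can be strictly, even arbitrarily, suboptimal, and your bound no longer controls the transferability gap. A secondary issue of the same flavor is that your ``constant'' contains $\|\btheta_0\|=\|(\mW_D^\star)^\dagger\btheta^\star\|$, which can blow up with the conditioning of the optimal decoder; this is comparable in spirit to the spectral quantities the paper hides in its $\Omega(1)$, so it is not decisive, but it should be stated rather than treated as $O(1)$.

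For comparison, the paper never constructs a downstream head through the decoder, so it never needs the optimal head to lie in the decoder's range. It upper-bounds the fine-tuning gap $\Delta^{ft}_{\cU}(\hat h,h^*_{\cU})$ by Lipschitzness plus an \emph{unconstrained} quadratic minimization over $\btheta$, whose closed-form value is a Schur-complement quadratic form in the aggregated representations $\mathbf{1}^\top\hat h(\tilde\mZ),\ \mathbf{1}^\top h^*_{\cU}(\tilde\mZ)$; it lower-bounds the pre-training gap $\Delta^{pt}_{\cU}$ (using realizability only to drop the optimal reconstruction error) by the analogous Schur complement of the full, non-aggregated representations via a column-wise decomposition and Ruhe's trace inequality; the exponent $1/2$ comes from the square root in the upper bound, and the ratio of the two Schur-complement spectral quantities (together with $\sigma_{\max}(\tilde\btheta^*\tilde\btheta^{*\top})$ and $\sigma_{\min}(\sum_r\bw_r^*\bw_r^{*\top})$) is what gets absorbed into the $\Omega(1)$ constant. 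If you want to salvage your construction-based argument, you would need an additional assumption — e.g.\ that an optimal (or near-optimal) head for $h^*_{\cU}$ factors through the decoder, or that the aggregated representation carries no label-relevant component in $\ker\bigl((\mW_D^\star)^\top\bigr)$ — which the paper's argument does not require.
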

The proof of Lemma~\ref{lem:MAE transfer} is deferred to Appendix~\ref{app:proof MAE transfer}.
This implies that  MAE pre-training with  a multi-layer transformer can be transfered to binary classification task, with a constant factor. The exponent is also $1/2$. To derive the excess risk bound of downstream task, we need to find the generalization risk of transformer regression, which is characterized by the following lemma.
 \begin{lemma}[Generalization of MAE pre-training task]\label{lem:MAE gen}
Let $\hat{g},\hat{h}$ be the solution of~\eqref{eq: MAE pt obj}, and $\tilde\mZ_{[N]} = [\tilde\mZ_{1};\ldots;\tilde\mZ_{N}]$ is the concatenated pre-training data. Then under Assumption~\ref{assumption: realizable} with probability at least $1-\nu$ the following statement holds: 
\begin{align*}
     \cE_{\cU} (\hat{g}, \hat{h}) \leq O\pare{ s^2_{L}\norm{\tilde\mZ_{[N]}}^2 \sum_{l=1}^{L+1} \frac{\rho_l  }{N} + \frac{\log(\frac{1}{\nu})}{N}},
\end{align*} 
where  
\begin{align*} 
s_l &:= \prod_{j=1}^l \pare{\alpha_2  W^2(j) +1} \pare{ W^2(j)\alpha_1  K   + 1  },\\
     {\rho}_{l} &:= O\pare{ {(\alpha_1\alpha_2W^2(l) +  \alpha_1)^2  B^2(l) } \ln(2d^2)}\\
    &\quad \times \pare{K^2 + \frac{\alpha_1W^4(l)\pare{s_{l-1}\max_{i\in[N]} \norm{\mX_i}  }^4}{d_K}  }   \\
        &\quad+ O\pare{ {\alpha_2^2W^2(l) B^2(l) (1 + \alpha_1^2 K^2W^2(l) )}   \ln (2dm)} .
\end{align*} 
\end{lemma}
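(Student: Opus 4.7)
The plan is to obtain the fast $O(1/N)$ rate from smoothness together with realizability, then reduce the problem to bounding the Rademacher complexity of $\cG\circ\cH$ via a layer-by-layer covering argument tailored to multi-layer transformers with residual blocks. The squared Frobenius-norm loss is smooth and non-negative, and Assumption~\ref{assumption: realizable} guarantees that the population (and hence empirical) minimum is zero, so I would invoke a smoothness-based generalization bound in the spirit of Srebro, Sridharan and Tewari~(2010)---extended to vector-valued hypotheses in the same manner as the proof of Theorem~\ref{thm:DAE main}---to obtain $\cE_{\cU}(\hat g, \hat h) \lesssim \frakR_{\widehat\cU}(\cG\circ\cH)^2 + \log(1/\nu)/N$. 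The claim then reduces to showing $\frakR_{\widehat\cU}(\cG\circ\cH)^2 \lesssim s_L^2 \norm{\tilde\mZ_{[N]}}^2 \sum_l \rho_l / N$.

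To bound $\frakR_{\widehat\cU}$, I would use Dudley's entropy integral and control the $\varepsilon$-covering number $\cN_\varepsilon(\cG\circ\cH)$ in the empirical $L_2$ metric on the pre-training sample. The decoder $g=(\cdot)\mW_D$ is a single linear map with controlled spectral and $(2,1)$-norms, so its contribution is handled by a Maurer-style matrix covering lemma---this is the source of the $\ln(2d^2)$ and $\ln(2dm)$ factors inside $\rho_l$. For the encoder I would build a layer-wise cover: a perturbation of the weights in layer $l$ propagates to the final output with an amplification of order $s_L/s_l$, where $s_l$ is the Lipschitz constant of the first $l$ self-attention blocks on their realized inputs. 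Each block's Lipschitz constant $(\alpha_2 W^2(l)+1)(W^2(l)\alpha_1 K+1)$ factors cleanly into two pieces of the form $1+(\text{non-residual part})$ thanks to the two residual connections, which is precisely what telescopes into $s_L$.

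The hardest step---and the main obstacle---is covering a single self-attention layer, because it contains both a softmax and the quadratic query-key product $\mX\mW_K(\mX\mW_Q)^\top/\sqrt{d_K}$. For each of the five weight matrices $(\mW_V,\mW_Q,\mW_K,\mW_{\mathrm{FC1}},\mW_{\mathrm{FC2}})$, I would (i) decompose $\SA_\mW$ into affine, softmax, and product blocks; (ii) use that softmax is $1$-Lipschitz from $\ell_\infty$ to $\ell_1$, with a factor of $K$ appearing when aggregating attention weights across the $K$ tokens; (iii) propagate perturbations through products via $\norm{AB-A'B'}\le\norm{A-A'}\norm{B}+\norm{A'}\norm{B-B'}$; and (iv) track how residual connections yield the $1+\alpha_1(\cdot)$ and $1+\alpha_2(\cdot)$ form. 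The quartic factor $\alpha_1 W^4(l)(s_{l-1}\max_i\norm{\mX_i})^4/d_K$ inside $\rho_l$ arises precisely because a perturbation of $\mW_K$ or $\mW_Q$ enters the attention logits multiplied by two copies of the layer-$l$ input---itself bounded by $s_{l-1}\max_i\norm{\mX_i}$---and two powers of $\mW$; squaring (since the cover is in squared norm) and dividing by $d_K$ gives the fourth power. Once each layer's perturbation budget is quantified, a Bartlett--Foster--Telgarsky-style induction extended to transformers as in Edelman et al.~(2022) yields $\log\cN_\varepsilon(\cG\circ\cH)=\tilde O(s_L^2\norm{\tilde\mZ_{[N]}}^2\sum_l\rho_l/\varepsilon^2)$; integrating via Dudley and squaring, then combining with the Bernstein concentration term from Step~1, produces the stated bound.
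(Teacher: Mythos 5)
Your proposal follows essentially the same route as the paper: it invokes the smooth-loss (optimistic-rate) machinery of Srebro et al.\ together with realizability to get the fast $1/N$ rate, reduces to a covering number of $\cG\circ\cH$ built layer-by-layer (matrix covering for each weight matrix, softmax/query--key perturbation analysis giving the quartic $d_K$ term, residual structure producing the $s_l$ products), and avoids an extra factor of $N$ by covering with respect to the concatenated data so only $\norm{\tilde\mZ_{[N]}}$ enters. The only cosmetic difference is that the paper phrases the fast-rate step via local Rademacher complexity with an $L_\infty$ worst-case covering number and Bousquet's fixed-point theorem rather than ``Dudley then square,'' but this is the same underlying argument.
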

The proof of Lemma~\ref{lem:MAE gen} is deferred to Appendix~\ref{app:proof MAE generalization pre-train}. The proof idea is similar to analysis of CE pre-training, where we connect local Rademacher complexity to the covering number of model class, and then use techniques from~\citep{srebro2010smoothness} to establish the generalization of a smooth loss, i.e. MSE.   At the heart of our proof is to carefully control the norm of stacked output of transformer on $N$ samples, so that the final covering number does not scale with $N$, but only depends on spectral norm of concatenated samples. We note that  a similar study~\citep{edelman2022inductive} also establishes the capacity  of transformers, but they do not consider residual blocks.

We now proceed to determine the excess risk associated with fine-tuning a transformer model on a binary classification task.
\begin{theorem}\label{thm: MAE main}
 Assume $\hat{h}$ and $\hat{f}$ are the pre-trained representation function and its associated decoder function obtained by solving~\eqref{eq: MAE pt obj} and~\eqref{eq: MAE ft obj}. Let $\tilde\mZ_{[N]} = [\tilde\mZ_{1};...\tilde\mZ_{N}]$ and $\mX_{[N]} = [\mX_{1};...\mX_{N}]$ be pre-training and downstream data, then under Assumption~\ref{assumption: realizable}, with probability at least $1-\nu$, the following statement holds:
    \begin{align*}
        &\cE_{\cT} (\hat{f}, \hat{h})  \\
        & \leq O\pare{ \frac{\sqrt {s_L   \norm{\mX_{[N]}}^2}} {n}+ \sqrt{ \frac{s^2_{L}\norm{\tilde\mZ_{[N]}}^2  \sum_{l=1}^L\rho_l  }{N} } } \\
        &   + 4B_\phi\pare{\sqrt{\frac{\log(\frac{1}{\nu})}{n}} + \norm{\cT - \cU_{\cX}}_{\mathrm{TV}} } + \min_{f\in\cF}\cE_{\cT}(f,  h^*_{\cU} ),
    \end{align*}
where $s_l, \rho_l$ are constants as defined in Lemma~\ref{lem:MAE gen}.
\end{theorem}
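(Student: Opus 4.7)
The plan is to invoke the master bound of Theorem~\ref{thm:main generalization} and then specialize each of its four data-dependent quantities to the MAE+transformer setting. Concretely, I will (i) apply Lemma~\ref{lem:MAE transfer} to control the representation-transferability term, (ii) apply Lemma~\ref{lem:MAE gen} to control the pre-training excess-risk term $\cE_\cU(\hat g,\hat h)^{1/2}$, and (iii) bound the representation-induced Rademacher complexity $\frak{R}_{\widehat\cT}(\cF\circ\hat h)$ directly using the structure of the linear downstream head together with a spectral bound on transformer outputs. The remaining two terms in Theorem~\ref{thm:main generalization} (the $\sqrt{\log(1/\nu)/n}$ concentration term, the TV-distance heterogeneity term, and the additive $\min_{f\in\cF}\cE_\cT(f,h^*_\cU)$ term) pass through unchanged.

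The first two steps are essentially substitution. By Lemma~\ref{lem:MAE transfer} we have $C_\beta=\Omega(1)$ and $\beta=\tfrac12$, so $C_\beta\,\cE_\cU(\hat g,\hat h)^\beta$ inherits the square root of Lemma~\ref{lem:MAE gen}'s bound, producing the term
\[
\sqrt{\tfrac{s_L^{2}\|\tilde\mZ_{[N]}\|^{2}\sum_{l=1}^{L}\rho_l}{N}}
\]
(the lower-order $\log(1/\nu)/N$ summand from Lemma~\ref{lem:MAE gen} is absorbed into the $\sqrt{\log(1/\nu)/n}$ term since $n\le N$ in the regime of interest, or can be folded into the $\tilde O$).

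The main technical step is step (iii). Because the fine-tuning model is $f(h(\mX))=(\mathbf 1^\top h(\mX))\btheta$ with $\btheta$ the only free parameter, the composed class $\cF\circ \hat h$ is effectively a class of linear functions in the feature vector $\bu_i := \hat h(\mX_i)^\top \mathbf 1 \in \mathbb R^{m}$. Using the standard contraction bound (with Lipschitz constant $G_\phi$) and the linear-class Rademacher identity,
\[
\frak{R}_{\widehat\cT}(\cF\circ\hat h)\;\le\; \tfrac{G_\phi\|\btheta\|}{n}\sqrt{\sum_{i=1}^{n}\|\bu_i\|^{2}}.
\]
I then need to control $\sum_i \|\bu_i\|^2$. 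Iterating the self-attention recursion used in the proof of Lemma~\ref{lem:MAE gen} (each attention+FFN block scales the aggregated feature norms by a factor of the form $(\alpha_2 W^2(j){+}1)(\alpha_1 W^2(j)K{+}1)$), the stacked transformer output $\hat h(\mX_{[N]})$ satisfies $\|\hat h(\mX_{[N]})\mathbf 1\|^{2}\lesssim s_L\,\|\mX_{[N]}\|^{2}$. This yields the announced $\sqrt{s_L\,\|\mX_{[N]}\|^{2}}/n$ rate after the $4G_\phi$ multiplier of Theorem~\ref{thm:main generalization}.

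The main obstacle I anticipate is this last norm-propagation step: one must show that the spectral/Frobenius norm of the concatenated, aggregated transformer output grows at the controlled rate $\sqrt{s_L}\|\mX_{[N]}\|$, \emph{without} paying an extra $\sqrt{N}$ for having stacked $N$ samples. The key is that self-attention acts row-wise only through the softmax (which has operator norm $1$ along rows), so that spectral norms on the batch dimension are preserved; the value/query/key projections then contribute only their operator norm. Assembling the per-layer factors gives exactly $s_L$, matching the bound in Lemma~\ref{lem:MAE gen}. Combining the three specialized terms and collecting the unchanged pieces from Theorem~\ref{thm:main generalization} yields the claimed bound.
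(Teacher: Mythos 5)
Your proposal follows essentially the same route as the paper's proof: invoke Theorem~\ref{thm:main generalization}, substitute Lemma~\ref{lem:MAE transfer} (with $\beta=\tfrac12$) and Lemma~\ref{lem:MAE gen}, and bound $\frak{R}_{\widehat\cT}(\cF\circ\hat h)$ by contraction plus the linear head and a layer-wise norm-propagation bound on the transformer output. The only cosmetic difference is that the paper handles the Rademacher term per sample, using $\|\mathbf 1^\top \hat h(\mX_i)\|\le\sqrt{K}\,\|\hat h(\mX_i)\|$ together with the recursion $\|\hat h(\mX_i)\|\le s_L\|\mX_i\|$ (so the batch-level spectral-norm argument you worry about is only needed in the covering-number analysis of Lemma~\ref{lem:MAE gen}, not here), which does not change the substance of your argument.
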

The proof of Theorem~\ref{thm: MAE main} is deferred to Appendix~\ref{app:proof MAE main}.  Our observations are similar to those in the CE scenario: since we train the encoder only on the pre-training dataset, during downstream learning, we mainly contend with the intricacies of a smaller model class, which results in the generalization bound of  $O(\frac{\cC(\cF)}{\sqrt{n}} + \frac{\cC(\cG\circ\cH)}{\sqrt{N}})$. Meanwhile, it's worth noting that the introduction of masking may potentially reduce the norm of the data, denoted as $\norm{\tilde \mZ_{[N]}}^2$, thereby diminishing the influence of the second term. 
On the other hand, it could also amplify the domain discrepancy, i.e., $\norm{\cT - \cU_{\cX}}_{\mathrm{TV}}$.



\section{Effective  Learning via Rademacher Representation Regularization}\label{sec:radreg}
 As shown in Theorem~\ref{thm:main generalization}, a significant quantity that affects the generalization risk of downstream task is $\frak{R}_{\widehat{\cT}}(\phi\circ\cF\circ \hat{h} )$, the Rademacher complexity of $\cF$ given learnt representation function $\hat{h} $. Here we devise an algorithm to  leverage the unlabeled downstream data in the pre-training stage, to regularize the representation function and further improve the accuracy of fine-tuned model.
 
 Let us first consider binary classification case with binary label $y_i \in \{-1,+1\}$.  The idea is that, in the binary classification setting, the Rademacher complexity is {\em independent} of labels, and hence it can be precisely estimated by only unlabeled downstream dataset. If we assume $\phi$ is $G_\phi$ Lipschitz, then according to the contraction property of Rademacher complexity~\citep{ledoux2013probability}, we have:
 $ \frak{R}_{\widehat{\cT}}(\phi\circ\cF\circ \hat{h} ) \leq G_\phi \E_{\bsigma  } \left[\sup\nolimits_{f\in\cF}\frac{1}{n} \sum_{i=1}^n \sigma_i  f(\hat{h}(\bx_i))  \right]$,
 where we can see that, due to the randomness of Rademacher variables, the upper bound of Rademacher complexity can be estimated \emph{without} knowing the actual labels $\{y_i\}_{i=1}^n$. Hence, we can leverage the unlabeled downstream data in the pre-training stage, to regularize the representation function. This can be cast as the following problem:
 \begin{align*}
     \min_{g\in \mathcal{H},h \in \cH} \cL_{\widehat{\cU}} (g\circ h) +\lambda \E_{\bsigma  } \left[\sup\nolimits_{f\in\cF}\frac{1}{n} \sum_{i=1}^n  \sigma_i f(h(\bx_i))  \right],
 \end{align*}
 where $\lambda$ is the regularization coefficient. The  idea  can also be generalized to multi-class classification, where we use a vector contraction lemma to estimate the upper bound of this complexity which we discuss in Section~\ref{subsec:multi-class}. To estimate the expectation, we sample $B$ configurations of Rademacher variables $\{\bsigma^j = [\sigma^j_1,...,\sigma^j_n]\}_{j=1}^B$. If we assume $f$ and $h$ are parameterized by $\bv \in \cV$ and $\bw \in \cW$, respectively, we have
 \begin{align}\label{eq:radreg obj}
     \min_{\bw\in \cW} \cL_{\widehat{\cU}} ( \bw) +  \frac{\lambda}{B}\sum_{j=1}^B \left[\max_{\bv_j \in \cV}\frac{1}{n} \sum_{i=1}^n   \frak{R}_j (\bv_j  ,\bw ;   \bx_i )  \right]  
 \end{align}
where $\frak{R}_j (\bv  ,\bw ;   \bx_i )  : = \sigma_i^j f_{\bv}(h_{\bw} (\bx_i))$. 


\begin{algorithm2e}[t]
\renewcommand{\algorithmicrequire}{\textbf{Input:}}
    \DontPrintSemicolon
    \caption{{\sffamily{RadReg}}: Rademacher Regularized Pre-training}
    \label{algorithm: RPT_stoch}
    \textbf{Input:} Number of iterations $T$;  regularization parameter $\lambda$ \\
     Sample $B$ configurations of Rademacher variables $\{\bsigma^1,\ldots ,\bsigma^B\}$, \\
    Initialize $\bv_j^{0} = \mathbf{0}, ~\forall j\in [B]$ \\

    \For{$t = 0,\ldots,T-1$}{  
       
    Sample a batch of data from pre-training dataset $\{\tilde \bz_1^t,\ldots, \tilde \bz_{n'}^t\}$ \\
        Sample a batch of data from downstreaming dataset $\{\tilde \bx_1^t,..., \tilde \bx_{n'}^t\}$ \\
        \For{$j = 1,...,B$}{
           
           \algemph{antiquewhite}{0.85}{  $\bv^{t+1}_j = \bv^{t}_j + \gamma \frac{1}{n'}\sum_{i=1}^{n'}  \nabla_{\bv} \frak{R}_j(\bv^t_j,\bw^t; \tilde \bx_i^t)$. \\
           \quad  \quad \quad  \quad \# Dual variable update}\\
            
        }
        \algemph{blizzardblue}{0.9 }{ $\bw^{t+1}  = \bw^{t} - \eta  \frac{1}{n'}\sum_{i=1}^{n'}\nabla \cL_{\widehat\cU}(\bw^{t};\tilde{\bz}^t_i) \\
          -\eta \lambda \frac{1}{B}\sum_{j=1}^B \left[\frac{1}{n'}\sum_{i=1}^{n'} \nabla_{\bw}\frak{R}_j(\bv^{t}_j,\bw^{t}; \tilde \bx_i^t)\right]  $ \\ \  \#      Representation model update}
         
    }
\textbf{Output:} $\hat{\bw}$ uniformly  sampled from $\{\bw^t\}_{t=1}^T$. 
\end{algorithm2e} 
\noindent\textbf{Optimization method.~}To solve the aforementioned optimization problems,  we adapt the celebrated SGDA algorithm~\citep{lin2019gradient} (Algorithm~\ref{algorithm: RPT_stoch}). At the beginning of each iteration, we first sample a batch of $n'$ pre-training data $\{\bz_i^t \}_{i=1}^{n'}$, and then do mini-batch stochastic gradient descent: 
\begin{align*}
    \bw^{t+1} &= \bw^{t} - \eta  \frac{1}{n'}\sum\nolimits_{i=1}^{n'}\nabla \cL_{\widehat{\cU}}(\bw^{t};\tilde{\bz}^t_i)  \\
    &\quad -\eta  \lambda \frac{1}{B}\sum\nolimits_{j=1}^B \left[\frac{1}{n'}\sum\nolimits_{i=1}^{n'} \nabla_{\bw}\frak{R}_j(\bv^{t}_j,\bw^{t}; \tilde \bx_i^t)\right]. 
\end{align*}
To solving the inner max problem, we sample another batch of $n'$ downstream (unlabeled) data, and then we  do one step mini-batch stochastic gradient ascent:$
\bv^{t+1}_j = \bv^{t}_j + \gamma \frac{1}{n'}\sum\nolimits_{i=1}^{n'}  \nabla_{\bv} \frak{R}_j(\bv^t_j,\bw^t; \tilde \bx_i^t) .$
 
\noindent\textbf{Convergence analysis of RadReg.~}To establish the convergence of {\sffamily{RadReg}} on (\ref{eq:radreg obj}), we consider the following primal function:$
    \Psi(\bw) := \cL_{\widehat{\cU}} ( \bw) + \lambda \frac{1}{B}\sum\nolimits_{j=1}^B \left[\max_{\bv_j \in \cV}\frac{1}{n} \sum\nolimits_{i=1}^n   \frak{R}_j (\bv_j  ,\bw ;   \bx_i )  \right].$
Then we follow~\citep{lin2020gradient} and consider the following Moreau envelope function:
\begin{definition}[Moreau Envelope] A function $\Psi_{\rho} (\bw)$ is the $\rho$-Moreau envelope of a function $\Psi$ if $    \Psi_{\rho} (\bw) := \min_{\bw'\in \cW} \{ \Psi  (\bw') + \frac{1}{2\rho}\|\bw'-\bw\|^2\}$. 
 \end{definition}
 
 \begin{theorem}[Convergence of {\sffamily{RadReg}} with Linear Top Layer, Informal] \label{thm:convergence of radreg informal}
 {\sffamily{RadReg}} (Algorithm~\ref{algorithm: RPT_stoch} converge to $\epsilon$-stationary point of $\Psi_{1/4L} (\bw)$ with gradient complexity bounded by $  
     O\pare{{B}/{\epsilon^8}}.$ 
\end{theorem}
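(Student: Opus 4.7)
The plan is to cast the pre-training update as a standard two-timescale stochastic gradient descent–ascent (SGDA) applied to a nonconvex-concave minimax problem, and then invoke the rate of \cite{lin2020gradient} in a form that accounts for the $B$ Rademacher copies. First I would rewrite the regularized objective by stacking the $B$ dual blocks into a single ascent variable $\mathbf{V}=(\bv_1,\ldots,\bv_B)\in\cV^B$, so that the primal-dual problem takes the compact form $\min_{\bw\in\cW}\max_{\mathbf{V}\in\cV^B} F(\bw,\mathbf{V})$ where
\begin{equation*}
F(\bw,\mathbf{V}) \defn \cL_{\widehat\cU}(\bw) + \frac{\lambda}{B}\sum_{j=1}^{B}\frac{1}{n}\sum_{i=1}^{n}\sigma_i^{j} f_{\bv_j}(h_{\bw}(\bx_i)).
\end{equation*}
Because the top layer $f_{\bv}$ is linear in $\bv$, each inner term is \emph{linear}—hence concave—in $\bv_j$, so $F$ is nonconvex in $\bw$ but concave in $\mathbf{V}$; moreover the blocks $\bv_j$ are decoupled in the ascent, which means the per-iteration dual updates in Algorithm~\ref{algorithm: RPT_stoch} are exactly block-coordinate ascent on the joint variable $\mathbf{V}$.

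Next I would verify the three technical ingredients that~\cite{lin2020gradient} requires for nonconvex-concave SGDA: (i) joint $L$-smoothness of $F$ in $(\bw,\mathbf{V})$; (ii) boundedness of the dual feasible set $\cV^B$; and (iii) unbiased stochastic gradients with bounded variance. Smoothness of $\cL_{\widehat\cU}$ in $\bw$ follows from composition of smooth activations with linear layers (the standard deep-net smoothness lemma), and smoothness of the bilinear coupling $\sigma_i^j \bv_j^\top h_{\bw}(\bx_i)$ follows from boundedness of $\bv_j$ together with smoothness of $h_{\bw}$. Boundedness of $\mathbf{V}$ is either assumed through $\cV$ being compact or enforced by a projection step; I would state it explicitly as a standing assumption so that the primal function $\Psi(\bw)=\max_{\mathbf{V}}F(\bw,\mathbf{V})$ is well defined and weakly convex. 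Variance boundedness of the mini-batch gradients follows from standard i.i.d.~sampling of the pre-training and downstream batches together with bounded data and weights.

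With these ingredients in place, I would apply Theorem~4.5 of~\cite{lin2020gradient} (two-timescale SGDA on nonconvex-concave problems), which guarantees an $\epsilon$-stationary point of the Moreau envelope $\Psi_{1/2L}$ after $O(\epsilon^{-8})$ stochastic oracle calls, provided the step sizes are chosen as $\eta=\Theta(\epsilon^{6}/L)$ and $\gamma=\Theta(\epsilon^{2}/L)$. The only modification is that each outer iteration of Algorithm~\ref{algorithm: RPT_stoch} triggers $B$ dual gradient evaluations rather than one; since the dual diameter and Lipschitz constants of $F$ in $\mathbf{V}$ scale at most linearly with $\sqrt{B}$ after stacking (or, alternatively, the dual accuracy required scales with the number of blocks), a careful bookkeeping replaces the $\epsilon^{-8}$ bound by $O(B\,\epsilon^{-8})$. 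Converting from $\Psi_{1/2L}$ to $\Psi_{1/4L}$ only changes constants.

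The main obstacle I anticipate is the bookkeeping in the previous step: propagating the factor $B$ correctly through the smoothness, Lipschitz, and diameter constants of the stacked dual. A clean way is to observe that $F(\bw,\mathbf{V})=\cL_{\widehat\cU}(\bw)+\frac{\lambda}{B}\sum_j F_j(\bw,\bv_j)$ so that $\nabla_{\mathbf{V}}F$ has squared norm $\|\nabla F\|^2=\frac{\lambda^2}{B^2}\sum_j\|\nabla_{\bv_j}F_j\|^2$, which shrinks with $B$, while the dual domain volume grows—so an averaging argument over the $B$ independent dual trajectories yields the clean $O(B/\epsilon^{8})$ complexity claimed. The remaining steps (translating iterate-level stationarity of $F$ to stationarity of $\Psi_{1/4L}$ via the standard envelope inequality $\|\nabla\Psi_{1/4L}(\bw)\|\le 4L\,\mathrm{dist}(\bw,\mathrm{prox}_{\Psi/4L}(\bw))$) are routine.
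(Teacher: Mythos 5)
Your proposal is correct in substance and rests on the same machinery as the paper: Moreau-envelope stationarity for the weakly convex primal function $\Psi$ and the two-timescale stochastic GDA analysis of Lin et al., under the same standing assumptions you list (linear, bounded top layer so the coupling is concave in the dual; smoothness and Lipschitzness in $\bw$; bounded stochastic-gradient variance). The organizational difference is how $B$ is handled. You stack the $B$ dual blocks into one variable $\mathbf{V}\in\cV^{B}$ and invoke the nonconvex-concave SGDA theorem as a black box, which forces the bookkeeping you flag: the stacked diameter grows like $D\sqrt{B}$ while the stacked dual gradient shrinks like $\lambda/\sqrt{B}$, and these must be shown to cancel (and the algorithm's per-block stepsize must be reconciled with the stacked parametrization) before the clean $O(B/\epsilon^{8})$ emerges. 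The paper instead never stacks: it proves a bespoke one-iteration descent recursion for $\Psi_{1/4L}$ in which the error term is the \emph{average} over $j$ of the per-block dual optimality gaps $\max_{\bv}\frak{R}_j(\bv,\bw^t)-\frak{R}_j(\bv_j^t,\bw^t)$, and then controls each gap by the single-block lemma of Lin et al.\ (their Lemma D.4) with $B$-independent constants $D,G,\delta$; since the blocks are decoupled, averaging $B$ identical bounds costs nothing, and $B$ enters the final complexity only through the $B$ dual gradient evaluations per outer iteration, multiplying the $O(\epsilon^{-8})$ iteration count. Your route buys brevity by citing the theorem; the paper's per-block treatment buys transparency about why no power of $B$ beyond the first appears and yields the explicit constants $O\pare{BL^{3}(G^{2}+\delta^{2}/n')D^{2}\delta^{2}\Delta_{\Psi_{1/4L}}/\epsilon^{8}}$ with the stepsizes $\eta=\Theta(\epsilon^{6}/(L^{3}D^{2}G))$, $\gamma=\Theta(\epsilon^{2}/(L\delta^{2}))$. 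If you keep the stacked formulation, you should carry out the cancellation you sketch explicitly rather than asserting it, since a naive application with diameter $D\sqrt{B}$ would otherwise degrade the bound to $O(B^{2}/\epsilon^{8})$.
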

The formal version of Theorem~\ref{thm:convergence of radreg informal} as well as the proof is deferred to Appendix~\ref{app:proof convergence}. We can see that the proposed optimization algorithm can find  an $\epsilon$-stationary point with at most $O\pare{{B}/{\epsilon^8}}$ stochastic gradient evaluations. Given that complexity increases with respect to $B$, it becomes crucial to have an appropriately sized sample of Rademacher variable.\vspace{-1mm}

\subsection{Multi-class}\label{subsec:multi-class}



The proposed regularization idea  can also be generalized to multi-class classification.
If the model $f(\cdot)$ is a vector-valued function, i.e., in multi-class classification, $f(h(\bx)): \cX \mapsto \R^o$, we can apply the following \textit{vector-valued contraction lemma} of Rademacher complexity:
\begin{lemma}~\citep{maurer2016vector}
Let $\phi(\cdot,\cdot): \R^o \mapsto \R$ be $G_\phi$-Lipschitz in the first argument, and $f(\cdot): \R^{d'} \mapsto \R^o$ be vector-valued function. Then, the following facts hold true for Rademacher complexity over $\cF$ and any $h: \cX \mapsto \R^{d'}$:
\begin{align*}
   & \E_{\boldsymbol{\varepsilon} } \left[\sup_{f\in\cF}\frac{1}{n} \sum_{i=1}^n \varepsilon_i \phi( f(h(\bx_i)) ,y_i)\right] \\
    &\leq \sqrt{2} G_\phi \E_{\boldsymbol{\varepsilon}_i} \left[\sup_{f\in\cF}\frac{1}{n} \sum_{i=1}^n \boldsymbol{\varepsilon}_i   f(h(\bx_i)) \right],
\end{align*}
where $\boldsymbol{\varepsilon}_i \in \{-1,+1\}^o$ is Rademacher vector.
\end{lemma}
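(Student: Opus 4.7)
The statement is the vector-valued Rademacher contraction inequality of \citet{maurer2016vector}, so my plan is to reproduce the skeleton of that proof, which proceeds through a Gaussian comparison combined with a careful Rademacher/Gaussian conversion.

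First, I would symmetrize by passing from Rademacher variables to Gaussian variables. Let $g_i$ be i.i.d.\ standard Gaussians; using the standard identity $\E|g_i|=\sqrt{2/\pi}$ together with Jensen's inequality, one can dominate the scalar Rademacher supremum $\E_{\boldsymbol\varepsilon}\sup_{f\in\cF}\sum_i \varepsilon_i \phi(f(h(\bx_i)),y_i)$ by a constant multiple of the scalar Gaussian supremum of $X_f:=\sum_i g_i \phi(f(h(\bx_i)),y_i)$. This is the standard entry point into Slepian-type arguments, and it is where we start paying constants.

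Second, I would introduce the linearized comparison process on the vector Gaussian side. Let $\boldsymbol g_i\in\R^o$ be i.i.d.\ $\mathcal N(0,I_o)$ and set $Y_f := G_\phi \sum_i \langle \boldsymbol g_i,f(h(\bx_i))\rangle$. Using that $\phi$ is $G_\phi$-Lipschitz in its first argument and that for independent Gaussians the increments are controlled by their $L^2$ distance, one checks
\begin{align*}
\E(X_f-X_{f'})^2 &= \sum_i \bigl|\phi(f(h(\bx_i)),y_i)-\phi(f'(h(\bx_i)),y_i)\bigr|^2 \\
&\leq G_\phi^2 \sum_i \|f(h(\bx_i))-f'(h(\bx_i))\|_2^2 = \E(Y_f-Y_{f'})^2.
\end{align*}
The Sudakov--Fernique (Slepian) inequality then yields $\E\sup_f X_f \leq \E\sup_f Y_f$, which contracts the Lipschitz nonlinearity $\phi$ away at the cost of enlarging the ambient noise from a scalar $g_i$ to a vector $\boldsymbol g_i$.

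Third, I would convert the vector Gaussian process $Y_f$ back to a vector Rademacher process. Writing each coordinate $g_{ik}=|g_{ik}|\cdot\mathrm{sign}(g_{ik})$, noting that $\mathrm{sign}(g_{ik})$ is a Rademacher variable independent of $|g_{ik}|$, conditioning on $\{|g_{ik}|\}$, and invoking Jensen exchanges the Gaussian supremum for a Rademacher-vector supremum, losing another multiplicative constant $\sqrt{\pi/2}$. Collecting the Gaussian-to-Rademacher and Rademacher-to-Gaussian factors carefully yields the net prefactor $\sqrt{2}\,G_\phi$.

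The main obstacle is constant tracking: if one invokes the Rademacher-Gaussian comparison separately and independently on each side, the prefactor degrades to $\sqrt{\pi}$ or worse, and in the worst cases one also picks up a spurious $\sqrt{\log(no)}$ factor if the coordinates are treated naively. The sharpness of the $\sqrt{2}$ constant in \citet{maurer2016vector} comes from a direct coupling argument that recombines signs and absolute values of the $g_{ik}$ without passing them through two separate expectations; I would follow that combinatorial step verbatim to avoid the loss. No other step is delicate, since Slepian's inequality handles the contraction once the variance comparison is in place.
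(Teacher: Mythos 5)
First, note that the paper does not prove this lemma at all: it is imported verbatim from \citet{maurer2016vector} as a citation, so the only meaningful comparison is with Maurer's original argument, which you claim to be reproducing. Your sketch, however, does not match that argument, and the route you describe has a genuine gap in its final step. Steps one and two are fine: the Rademacher-to-Gaussian passage with constant $\sqrt{\pi/2}$ via $\E|g|=\sqrt{2/\pi}$ and Jensen is standard, and the Sudakov--Fernique comparison with $Y_f = G_\phi\sum_i\langle \bg_i, f(h(\bx_i))\rangle$ correctly absorbs the Lipschitz nonlinearity. The problem is step three. Writing $g_{ik}=|g_{ik}|\,\mathrm{sign}(g_{ik})$, conditioning on the moduli and applying Jensen gives
\begin{align*}
\E_{\bg}\sup_{f}\sum_{i}\langle \bg_i, f(h(\bx_i))\rangle \;\geq\; \sqrt{2/\pi}\;\E_{\boldsymbol{\varepsilon}}\sup_{f}\sum_{i}\langle \boldsymbol{\varepsilon}_i, f(h(\bx_i))\rangle,
\end{align*}
i.e.\ exactly the same computation as your step one, which bounds the Rademacher process \emph{by} the Gaussian process --- the opposite of what you need here. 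To close your chain you must dominate the vector Gaussian complexity by the vector Rademacher complexity, and without additional structure that conversion costs a factor of order $\sqrt{\log(no)}$, not $\sqrt{\pi/2}$. Even granting your constant accounting, $\sqrt{\pi/2}\cdot\sqrt{\pi/2}=\pi/2>\sqrt{2}$, so the advertised prefactor would not come out of the steps as stated.

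The proposed fix --- ``follow the combinatorial step of Maurer verbatim to recombine signs and absolute values of the $g_{ik}$'' --- does not exist in the form you describe, because Maurer's proof contains no Gaussian variables at all. His argument is a direct conditioning/symmetrization on the Rademacher variables: one fixes all indices but one, writes the expectation over the remaining $\varepsilon_i$ as an average of two suprema, merges them into a supremum over pairs $(f,f')$, bounds $|\phi(f)-\phi(f')|\le G_\phi\|f-f'\|_2$, and then reintroduces a Rademacher \emph{vector} through the Khintchine--Kahane lower bound $\|v\|_2\le\sqrt{2}\,\E_{\boldsymbol{\varepsilon}}|\langle\boldsymbol{\varepsilon},v\rangle|$, which is the sole source of the $\sqrt{2}$; iterating over $i$ finishes the proof. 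So the missing idea in your write-up is precisely this Khintchine-based one-coordinate contraction lemma; the Slepian route you chose cannot be patched into the $\sqrt{2}G_\phi$ bound by the step you gesture at.
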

Now the empirical minimization problem becomes:
  \begin{align*}
     \min_{\bw \in \cW} \cL_{\hat{\cU}} (\bw) +\lambda \frac{1}{B} \sum_{j=1}^B \left[\max_{\mV \in \cV}\frac{1}{n} \sum_{i=1}^n (\bsigma_i^j)^\top f( h(\bx_i))  \right] .
 \end{align*}
 \textit{Specifically,} if the top layer is a linear projection layer, i.e., $f(h(\bx)) = \mV h(\bx)$, the objective is equivalent to:
 {\small \begin{align*} 
    \min_{\bw \in \cW} \cL_{\hat{\cU}} (\bw) +\lambda \frac{1}{B} \sum_{j=1}^B \left[\max_{\mV \in \cV} \text{tr}\left(\mV \frac{1}{n} \sum_{i=1}^n h(\bx_i)(\bsigma_i^j)^\top \right) \right],  
 \end{align*}}
where $\text{tr}(\cdot)$ denotes the trace of a matrix. Here the inner problem is convex and easy to solve with simple (stochastic) gradient ascent.


 \section{Experiments} \label{sec short exp}\vspace{-2mm}
In this section, we empirically evaluate the proposed regularization method in improving the generalization of unsupervised pre-training to downstream tasks. 
We utilize the Masked AutoEncoder (MAE)~\citep{he2022masked} as the base unsupervised pre-training method. 
\begin{figure*}[t]
    \centering
    \includegraphics[width=0.75\textwidth]{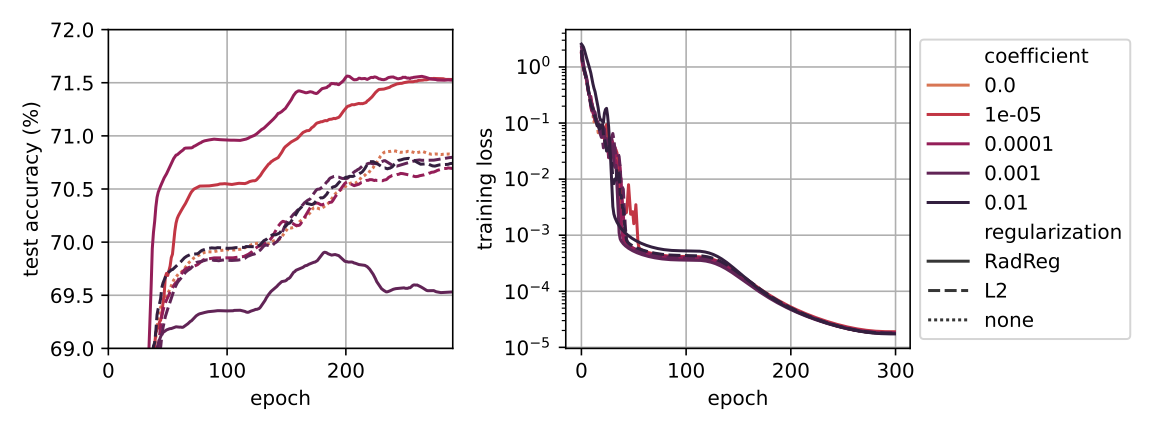}\vspace{-0.5cm}
    \caption{Testing and training accuracy by epochs, averaged by three repetitions.} \vspace{-0.25cm}
    \label{fig:ft_curves}
\end{figure*}
We conduct experiments using 50,000 images from CIFAR10 dataset~\citep{krizhevsky2009learning} for pre-training and 4,096 few-shot STL~\citep{coates2011analysis} samples for finetuning. Since our regularization requires unlabeled data from the downstream task, but L2 and non-regularization methods cannot leverage those data, for a fair comparison, we incorporate the fine-tuning data into a separate unsupervised loss with the same formulation as the MAE loss:
\begin{align*}
  &\min_{g,h}  \cL_{\widehat\cU} (g\circ h)  + \alpha \cdot \cL_{\widehat\cD}  (g\circ h)+ 
  \lambda \frak{R}_{\widehat\cT} (\cF\circ h) \   \text{(RadReg)},\\
  &\min_{g,h}  \cL_{\widehat\cU} (g\circ h)  + \alpha \cdot \cL_{\widehat\cD}  (g\circ h)+ 
  \lambda \norm{\mW}^2 \qquad \quad  \text{($L_2$)},\\
  &\min_{g,h}  \cL_{\widehat\cU} (g\circ h)  + \alpha \cdot \cL_{\widehat\cD}  (g\circ h) \ \qquad   \text{(Non-regularized)},
\end{align*}
where we assume $h$ is parameterized by $\mW$ and $\alpha$ is fixed as 0.01. Our proposed regularization will further leverage the data to control the complexity of learned representations.
The details of experiments are included in  Appendix~\ref{sec:app:exp}.

\begin{table}
\begin{subtable}[t]{0.48\textwidth}
    \setlength{\tabcolsep}{1.0mm}
    \begin{tabular}{cc|ccc}
    \toprule
    \textbf{Reg.} & ${\lambda}$ & \textbf{Final Acc} & \textbf{Best Acc} & \textbf{Train Acc} \\
    \midrule
     None & - & 70.8 (0.2) & 70.9 (0.3) & 100 (0.) \\
     \midrule
      & $10^{-4}$ & 70.7 (0.3) & 70.7 (0.3) & 100 (0.) \\
     $L_2$   & $10^{-3}$ & 70.8 (0.6) & 70.9 (0.5) & 100 (0.) \\
        & $10^{-2}$ & 70.7 (0.6) & 70.9 (0.5) & 100 (0.) \\
    \midrule
    \multirow{3}{*}{RadReg}   & $10^{-5}$ & \textbf{71.5} (0.2) & \textbf{71.8} (0.3) & 100 (0.) \\
        & $10^{-4}$ & 71.5 (0.4) & 71.6 (0.7) & 100 (0.) \\
     & $10^{-3}$ & 69.6 (0.6) & 69.6 (0.5) & 100 (0.) \\  
    \bottomrule
    \end{tabular}
    \caption{End-to-end fine-tuning}
\end{subtable}
\hspace{\fill}
\begin{subtable}[t]{0.48\textwidth}
  
    \setlength{\tabcolsep}{1.0mm}
    \flushright
    \begin{tabular}{cc|ccc}
    \toprule
    \textbf{Reg.} & ${\lambda}$ & \textbf{Final Acc} & \textbf{Best Acc} & \textbf{Train Acc} \\
    \midrule
     None & - & 55.3 (0.1) & 55.5 (0.0) & 65.7 (0.2) \\
     \midrule
     \multirow{3}{*}{$L_2$}   & $10^{-5}$ & 55.3 (0.1) & 55.5 (0.0) & 57.5 (0.1) \\
        & $10^{-4}$ & 55.9 (0.1) & 56.0 (0.1) & 58.2 (0.1) \\
        & $10^{-3}$ & 54.4 (0.0) & 54.5 (0.0) & 58.2 (0.1) \\
    \midrule
    \multirow{2}{*}{RadReg} & $10^{-5}$ & \textbf{56.8} (0.0) & \textbf{56.9} (0.1) & 65.7 (0.2) \\
        & $10^{-4}$ & 26.8 (0.0) & 27.0 (0.1) & 40.6 (0.1) \\
    \bottomrule
    \end{tabular}
    \caption{Linear fine-tuning}
\end{subtable}
\caption{Evaluation of MAE. 
    Average fine-tuning accuracy is reported with its standard deviations in brackets.}\vspace{-5mm}
\label{tab:mae}
\end{table}

In Table~\ref{tab:mae}, we compare our method to non-regularized MAE training and the one with $L_2$ regularization. We repeat the fine-tuning three times by randomly selecting 4096 samples from the preset STL10 training set, and report the mean and standard deviations.
We vary the coefficient for our and $L_2$ regularization and compare the best test accuracy on fine-tuning.
We observe that our method can effectively improve the downstream performance as early as in the pre-training stage without using any labels.
Compared to $L_2$ regularization, our method can achieve higher test accuracy.

In Figure~\ref{fig:ft_curves}, we show the learning curves by different regularization strategies.
Due to the large capacity of the pre-trained ViT encoder, all methods can sufficiently fit the training set approaching 100\% training accuracy, but the testing accuracy reaches the ceiling.
Our method can improve the best test accuracy by limiting the representation complexity as early as the pre-training stage.
Our method also improves the convergence rate at fine-tuning, when our method reaches the 71\% test accuracy at epoch 80 but the best baseline reaches the same accuracy after 200 epochs.

 \section{Conclusion}\vspace{-2mm}
This paper establishes a generic  learning bound in unsupervised representation pre-training and fine-tuning paradigm. We discover that the generalization depends on representation transferrbaility, representation-induced Rademacher complexity, task heterogeneity and generalization of pre-training task. We apply our theory to analyze the generalization of {CE and MAE} pre-training. Motivated by our theory, we propose Rademacher representation regularization, with a provable convergence guarantee. The experiments validate the superiority of our algorithm. As a future direction,  it would be interesting to expand our analysis to end-to-end model fine-tuning, where task specific head and encoder are jointly updated in fine-tuning stage. 


\section*{Acknowledgement}

The work of YD and MM was partially supported by NSF CAREER Award \#2239374 and NSF CNS   Award \#1956276. 
JZ was supported by NSF \#IIS-2212174 and IIS-1749940 and NIA \#1RF1AG072449.

\clearpage

\bibliographystyle{plainnat}
\bibliography{references.bib}
\clearpage
\section*{Checklist}

 \begin{enumerate}

 \item For all models and algorithms presented, check if you include:
 \begin{enumerate}
   \item A clear description of the mathematical setting, assumptions, algorithm, and/or model. [Yes]
   \item An analysis of the properties and complexity (time, space, sample size) of any algorithm. [Yes]
   \item (Optional) Anonymized source code, with specification of all dependencies, including external libraries. [Not Applicable]
 \end{enumerate}

 \item For any theoretical claim, check if you include:
 \begin{enumerate}
   \item Statements of the full set of assumptions of all theoretical results. [Yes]
   \item Complete proofs of all theoretical results. [Yes]
   \item Clear explanations of any assumptions. [Yes]     
 \end{enumerate}

 \item For all figures and tables that present empirical results, check if you include:
 \begin{enumerate}
   \item The code, data, and instructions needed to reproduce the main experimental results (either in the supplemental material or as a URL). [Yes]
   \item All the training details (e.g., data splits, hyperparameters, how they were chosen). [Yes]
         \item A clear definition of the specific measure or statistics and error bars (e.g., with respect to the random seed after running experiments multiple times). [Yes]
         \item A description of the computing infrastructure used. (e.g., type of GPUs, internal cluster, or cloud provider). [No]
 \end{enumerate}

 \item If you are using existing assets (e.g., code, data, models) or curating/releasing new assets, check if you include:
 \begin{enumerate}
   \item Citations of the creator If your work uses existing assets. [Yes]
   \item The license information of the assets, if applicable. [Not Applicable]
   \item New assets either in the supplemental material or as a URL, if applicable. [Not Applicable]
   \item Information about consent from data providers/curators. [Not Applicable]
   \item Discussion of sensible content if applicable, e.g., personally identifiable information or offensive content. [Not Applicable]
 \end{enumerate}

 \item If you used crowdsourcing or conducted research with human subjects, check if you include:
 \begin{enumerate}
   \item The full text of instructions given to participants and screenshots. [Not Applicable]
   \item Descriptions of potential participant risks, with links to Institutional Review Board (IRB) approvals if applicable. [Not Applicable]
   \item The estimated hourly wage paid to participants and the total amount spent on participant compensation. [Not Applicable]
 \end{enumerate}

 \end{enumerate}

 \newpage
\appendix
\clearpage
 \appendix
\onecolumn
\paragraph{Organization} The appendix is organized as follows. In Appendix~\ref{app:basic ineq} we will introduce some helper inequalities that we will be utilized in our proofs and prove the main generalization theorem in Appendix~\ref{app:main:thm:proof}. In Appendices~\ref{sec:app:ce} and~\ref{app:mae} we will provide the proofs in Section~\ref{sec:dae} (generalization of pre-training with a context encoder) and Section~\ref{sec:mask} (generalization of pre-training with masked autoencoder with a transformer), respectively. In Appendix~\ref{app:proof convergence}, we  provide the proof of convergence of the proposed algorithm in Section~\ref{sec:radreg}. At last, in Appendix~\ref{sec:app:exp} we will provide the   details of setup for our experiments.

\section{Basic Inequalities  }\label{app:basic ineq}
In this section, we provide  some general technical  results that will be used in our proofs. 

\begin{proposition}[Total variation distance and $L_1$ distance]~\cite[Proposition~4.2]{levin2017markov}
Given two probability measures $\mathcal{P}$ and $\cQ$ defined over instance space $\cX$, the following inequality holds:
\label{prop:TV distance}
    \begin{align*}
        \norm{\mathcal{P}- \cQ}_{\mathrm{TV}} = \frac{1}{2}\sum_{\bx\in\cX} |\mathcal{P}(\bx) - \cQ(\bx)|. 
    \end{align*}
\end{proposition}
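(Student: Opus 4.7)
The plan is to start from the definition $\|\mathcal{P}-\cQ\|_{\mathrm{TV}} = \sup_{A\subseteq \cX} |\mathcal{P}(A)-\cQ(A)|$ and identify a canonical event that attains this supremum, then relate that quantity to the $L_1$ distance via a symmetry argument.

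First, I would partition the instance space into $A^+ := \{\bx \in \cX : \mathcal{P}(\bx) \ge \cQ(\bx)\}$ and $A^- := \cX \setminus A^+$. For any event $A$, I would argue
\begin{align*}
\mathcal{P}(A) - \cQ(A) = [\mathcal{P}(A\cap A^+) - \cQ(A \cap A^+)] + [\mathcal{P}(A\cap A^-) - \cQ(A\cap A^-)] \le \mathcal{P}(A^+) - \cQ(A^+),
\end{align*}
since the first bracket is maximized by taking all of $A^+$ (every term is nonnegative on $A^+$) and the second bracket is nonpositive and therefore best dropped. An identical argument with the roles swapped yields $\cQ(A)-\mathcal{P}(A) \le \cQ(A^-) - \mathcal{P}(A^-)$. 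This shows the supremum is attained at $A = A^+$ (equivalently $A = A^-$), so $\|\mathcal{P}-\cQ\|_{\mathrm{TV}} = \mathcal{P}(A^+) - \cQ(A^+)$.

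Next I would exploit the fact that both measures sum to $1$: writing $\mathcal{P}(A^+)+\mathcal{P}(A^-)=\cQ(A^+)+\cQ(A^-)=1$ gives $\mathcal{P}(A^+)-\cQ(A^+) = \cQ(A^-) - \mathcal{P}(A^-)$. Averaging these two equal expressions produces
\begin{align*}
\|\mathcal{P}-\cQ\|_{\mathrm{TV}} \;=\; \tfrac{1}{2}\bigl[(\mathcal{P}(A^+) - \cQ(A^+)) + (\cQ(A^-) - \mathcal{P}(A^-))\bigr] \;=\; \tfrac{1}{2}\sum_{\bx \in \cX}|\mathcal{P}(\bx) - \cQ(\bx)|,
\end{align*}
since on $A^+$ the absolute value $|\mathcal{P}(\bx)-\cQ(\bx)|$ equals $\mathcal{P}(\bx)-\cQ(\bx)$ and on $A^-$ it equals $\cQ(\bx)-\mathcal{P}(\bx)$. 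This gives both inequalities simultaneously and completes the identification.

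Since the statement is really an elementary identity rather than a technical theorem, the only mildly nontrivial step is recognizing the canonical maximizer $A^+$; once that is in hand the rest is bookkeeping. If one wanted to extend the argument beyond countable $\cX$, the same proof goes through by replacing the sum with the integral of $|p-q|\,d\mu$ against a common dominating measure $\mu$ (which exists by Radon--Nikodym), and $A^+$ is then the set where the density $p$ dominates $q$; no new ideas are needed.
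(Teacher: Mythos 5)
Your proof is correct and is exactly the standard argument used in the cited source (Levin--Peres, Proposition~4.2): the paper itself gives no proof of this proposition, only that citation, and your identification of the maximizer $A^{+}=\{\bx:\mathcal{P}(\bx)\ge\cQ(\bx)\}$ followed by the averaging step using $\mathcal{P}(A^{+})-\cQ(A^{+})=\cQ(A^{-})-\mathcal{P}(A^{-})$ is precisely that textbook derivation. No gaps; the remark on extending to general $\cX$ via a dominating measure is also accurate.
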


\begin{proposition}[Ruhe’s trace inequality]~\citep{Ruhe1970PerturbationBF}
If $\mA$ and $\mB$ are positive semidefinite Hermitian matrices with eigenvalues,
\begin{align}
    a_1 \geq ... \geq a_n \geq 0, \ b_1 \geq ... \geq b_n \geq 0,
\end{align}
repsectively, then
\label{prop:trace ineq}
    \begin{align*}
         \sum_{i=1}^n a_{i} b_{n-i+1} \leq \trace\pare{\mA\mB} \leq \sum_{i=1}^n a_{i} b_{i}.
     \end{align*}
\end{proposition}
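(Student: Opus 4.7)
The plan is to reduce the trace to a weighted sum over the entries of a doubly stochastic matrix, and then invoke Birkhoff--von Neumann together with the rearrangement inequality to pin down the extremal permutations.

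First, I would use the spectral theorem to write $\mA = \mU D_a \mU^*$ and $\mB = \mV D_b \mV^*$, where $D_a = \diag(a_1,\dots,a_n)$ and $D_b = \diag(b_1,\dots,b_n)$ and $\mU,\mV$ are unitary. Using the cyclic property of the trace,
\begin{align*}
\trace(\mA\mB) = \trace(D_a \mU^* \mV D_b \mV^* \mU) = \trace(D_a Q D_b Q^*),
\end{align*}
where $Q \defn \mU^*\mV$ is unitary. Expanding the right-hand side entrywise gives
\begin{align*}
\trace(\mA\mB) = \sum_{i=1}^n \sum_{j=1}^n a_i\, b_j\, |Q_{ij}|^2.
\end{align*}

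Second, I would observe that the matrix $S$ with entries $s_{ij} \defn |Q_{ij}|^2$ is doubly stochastic: unitarity of $Q$ gives $\sum_j |Q_{ij}|^2 = 1$ for each row $i$ and $\sum_i |Q_{ij}|^2 = 1$ for each column $j$, and $s_{ij} \ge 0$. Thus $\trace(\mA\mB) = \sum_{i,j} a_i b_j s_{ij}$ is a linear functional of $S$ over the Birkhoff polytope of doubly stochastic matrices.

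Third, by the Birkhoff--von Neumann theorem, $S$ is a convex combination $S = \sum_{\pi \in \symgp_n} \lambda_\pi P_\pi$ of permutation matrices with $\lambda_\pi \ge 0$ and $\sum_\pi \lambda_\pi = 1$. Substituting,
\begin{align*}
\trace(\mA\mB) = \sum_{\pi \in \symgp_n} \lambda_\pi \sum_{i=1}^n a_i\, b_{\pi(i)}.
\end{align*}
Since $a_1 \ge \cdots \ge a_n \ge 0$ and $b_1 \ge \cdots \ge b_n \ge 0$, the rearrangement inequality yields $\sum_i a_i b_{n-i+1} \le \sum_i a_i b_{\pi(i)} \le \sum_i a_i b_i$ for every permutation $\pi$. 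Averaging against the convex weights $\lambda_\pi$ preserves these bounds, which gives exactly the claimed inequality.

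The only subtlety worth flagging is nonnegativity of the eigenvalues of $\mB$: without $b_n \ge 0$, the rearrangement inequality still holds coordinatewise, so in fact positive semidefiniteness of $\mB$ is only used implicitly via the sign conventions (the proof above does not actually require $a_i, b_i \ge 0$; one only needs the monotone ordering). Birkhoff--von Neumann is the one external ingredient, and I would simply cite it rather than reprove it. Hence the main ``work'' is the algebraic reduction in the first two steps; the inequality portion is then a one-line consequence of rearrangement.
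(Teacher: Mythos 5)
Your proof is correct: the reduction $\trace(\mA\mB)=\sum_{i,j}a_i b_j |Q_{ij}|^2$ via spectral decompositions and the cyclic property is right, $(|Q_{ij}|^2)_{ij}$ is indeed doubly stochastic by unitarity of $Q=\mU^*\mV$, and combining Birkhoff--von Neumann with the rearrangement inequality pins the value between the oppositely-sorted and identically-sorted pairings. Note, however, that the paper does not prove this proposition at all: it is stated as a known result and attributed to Ruhe (1970), so there is no internal argument to compare against. Your write-up therefore supplies a self-contained justification where the paper relies on a citation, and it does so by the standard route (majorization-flavored: reduce to a linear functional on the Birkhoff polytope and evaluate at its vertices). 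Your closing observation is also accurate and mildly strengthens the statement as quoted: the argument never uses $a_i,b_i\ge 0$, only Hermiticity and the monotone ordering of the eigenvalues, so positive semidefiniteness is not needed --- which is harmless here since the paper only applies the inequality to PSD Gram-type matrices. The one external ingredient, Birkhoff--von Neumann, is reasonable to cite; alternatively one could note that a linear functional on the compact convex Birkhoff polytope attains its extrema at extreme points, which is the same content.
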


\section{Proof of Main Generalization Theorem} \label{app:main:thm:proof}
In this section we provide the proof of main result on generalization of fine-tuned model composed with an unsupervised pre-trained model stated in  Theorem~\ref{thm:main generalization}. For readability purposes, we re-state the theorem here:

 \begin{theorem}[Theorem~\ref{thm:main generalization} restated]\label{app:thm:main generalization}
 Assume $\hat{h}$ and $\hat{g}$ are the pre-trained representation function and its associated decoder function, and real valued non-negative loss $\phi$ to be $G_\phi$ Lipschitz and bounded by $B_\phi$. Assume pre-training and fine-tuning task admit a $(C_\beta, \beta)$ representation transferrability on $\hat{h}$ and $h^*_\cU$ .  If we solve (\ref{eq:fine-tune obj}) to get $\hat{f}$, then with probability at least $1-\nu$, the following statement holds  
\begin{align*}
   \mathcal{E}_{\cT}(\hat{f}, \hat{h})    &\leq C_\beta \left(   \cE_{\cU}(\hat{g}, \hat{h}) \right)^\beta +4G_\phi {\frak{R}}_{\widehat \cT}( \cF\circ \hat{h})  + 4B_\phi\sqrt{\frac{\log(1/\nu)}{n}}  + 4B_\phi \norm{\cT-\cU_{\cX}}_{\mathrm{TV}}   +   \min_{f\in\cF}\cE_{\cT}(f,  h^*_{\cU} ),
\end{align*}
where $h^*_{\cU} = \arg\min_{h \in \cH}  \min_{g\in\mathcal{G}} \cL_{\cU}(g\circ h)$ is the optimal pre-training representation function, and $\norm{\sP-\cQ}_{\mathrm{TV}} = \sup_{A \in \Omega} |\sP(A)-\cQ(A)|$ denotes total variation distance between two distributions.

 \end{theorem}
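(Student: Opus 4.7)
The plan is to split $\mathcal{E}_{\cT}(\hat f,\hat h)$ into three additive pieces and bound each in turn. Writing $h_\cU^\star := \arg\min_{h\in\cH}\min_{g\in\cG}\cL_\cU(g\circ h)$, I would use the telescope
\begin{align*}
\mathcal{E}_{\cT}(\hat f,\hat h)
&= \underbrace{\Bigl[\cR_\cT(\hat f\circ\hat h) - \min_{f\in\cF}\cR_\cT(f\circ\hat h)\Bigr]}_{(\mathrm{A})} \\
&\quad + \underbrace{\Bigl[\min_{f\in\cF}\cR_\cT(f\circ\hat h) - \min_{f\in\cF}\cR_\cT(f\circ h_\cU^\star)\Bigr]}_{(\mathrm{B})} \\
&\quad + \underbrace{\min_{f\in\cF}\cE_\cT(f,h_\cU^\star)}_{(\mathrm{C})},
\end{align*}
where (C) already matches the last summand of the theorem, so only (A) and (B) require work.

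For (A), because $\hat f$ is the ERM on $\widehat\cT$ over $\cF$ with $\hat h$ held fixed, the standard two-sided symmetrization trick gives $(\mathrm{A})\le 2\sup_{f\in\cF}|\cR_\cT(f\circ\hat h) - \cR_{\widehat\cT}(f\circ\hat h)|$. Combining McDiarmid's inequality on the bounded loss (using $\phi\le B_\phi$) with Rademacher symmetrization, followed by Talagrand's contraction lemma applied to the $G_\phi$-Lipschitz $\phi$, yields with probability at least $1-\nu$ that $(\mathrm{A})\le 4G_\phi\,{\frak R}_{\widehat\cT}(\cF\circ\hat h) + 4B_\phi\sqrt{\log(1/\nu)/n}$, which reproduces the Rademacher and concentration terms in the statement.

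The more delicate step is (B), since the transferability assumption is phrased on the pre-training marginal $\cU_\cX$ while (B) is measured on $\cT$. To bridge the two, I would invoke the dual characterization of total variation: because $\phi\in[0,B_\phi]$, $|\cR_\cT(f\circ h) - \cR_{\cU_\cX}(f\circ h)|\le 2B_\phi\|\cT - \cU_\cX\|_{\mathrm{TV}}$ for every $f,h$. Upper-bounding the first minimum in (B) by the $\cU_\cX$-minimizer, and symmetrically lower-bounding the second, I get $(\mathrm{B})\le \bigl[\min_{f\in\cF}\cR_{\cU_\cX}(f\circ\hat h) - \min_{f\in\cF}\cR_{\cU_\cX}(f\circ h_\cU^\star)\bigr] + 4B_\phi\|\cT-\cU_\cX\|_{\mathrm{TV}}$. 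The bracketed quantity is now in the exact form demanded by the $(C_\beta,\beta)$ representation transferability for the pair $(\hat h,h_\cU^\star)$, so it is at most $C_\beta\bigl(\min_g\cL_\cU(g\circ\hat h) - \min_g\cL_\cU(g\circ h_\cU^\star)\bigr)^\beta$. Finally, $h_\cU^\star$ is by construction the minimizer of $h\mapsto\min_g\cL_\cU(g\circ h)$, so $\min_g\cL_\cU(g\circ h_\cU^\star) = \min_{g,h}\cL_\cU(g\circ h)$; combined with the trivial bound $\min_g\cL_\cU(g\circ\hat h)\le\cL_\cU(\hat g\circ\hat h)$, the argument of $(\cdot)^\beta$ is at most $\cE_\cU(\hat g,\hat h)$, yielding $(\mathrm{B})\le C_\beta\,\cE_\cU(\hat g,\hat h)^\beta + 4B_\phi\|\cT-\cU_\cX\|_{\mathrm{TV}}$.

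Summing the bounds on (A), (B), and (C) gives the theorem. The main obstacle I anticipate is precisely the bookkeeping in (B): one must translate the risk gap to $\cU_\cX$ \emph{before} invoking transferability (which is what produces the distribution-shift term), ensure the quantity inside $(\cdot)^\beta$ remains non-negative so the monotone upper bound by $\cE_\cU(\hat g,\hat h)$ is valid, and exploit the definition of $h_\cU^\star$ to collapse its pre-training suboptimality. The remaining ingredients are standard empirical-process machinery.
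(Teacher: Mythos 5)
Your proposal is correct and follows essentially the same route as the paper's proof: the paper uses a four-term decomposition (its terms II, III, IV correspond exactly to your (B)–(C) split, with the two $2B_\phi\norm{\cT-\cU_{\cX}}_{\mathrm{TV}}$ translation costs, the transferability step on the $\cU_{\cX}$-risk gap collapsed to $\cE_{\cU}(\hat g,\hat h)^\beta$ via $h^*_{\cU}$'s optimality, and the residual $\min_{f\in\cF}\cE_{\cT}(f,h^*_{\cU})$), and your treatment of (A) via uniform convergence plus Lipschitz contraction matches its bound on term I. No gaps.
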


\subsection{Proof of Theorem~\ref{thm:main generalization}}\label{app:proof main generalization}
\begin{proof}
For the ease of presentation we define 
\begin{align*}
f^*_{\cT}(h) = \arg\min_{f\in\cF} \cR_{\cT}(f\circ h) :=  \E_{\bx\sim \cT}[\phi(f\circ h(\bx),y(\bx))].
\end{align*}

That is, the optimal fine-tuned risk minimizer in function class  $\mathcal{F}$ w.r.t. distribution $\mathcal{T}$ over domain,  given a representation function $h$, which denotes the optimal risk minimizer for downstream task with labeling function $y(\cdot)$, for a given representation function.   Also, recall $h^*_{\cU} = \arg\min_{h \in \cH}  \min_{g\in\mathcal{G}} \cL_{\cU}(g\circ h)$ denotes the optimal pre-training representation function.

By standard risk decomposition we have:

\begin{align*}
   \mathcal{E}_{\cT}(\hat{f}, \hat{h}) &=  \cR_{\cT}(\hat{f} \circ \hat{h})-\min_{f\in\cF,h\in\cH}\cR_{\cT}(f  \circ h ) \\
   &= \cR_{\cT}(\hat{f} \circ \hat{h})- \min_{f\in\cF}\cR_{\cT}(f \circ \hat{h}) + \min_{f\in\cF} \cR_{\cT}(f \circ  \hat{h}) - \min_{f\in\cF,h\in\cH}\cR_{\cT}(f  \circ h ) \\
 &= \underbrace{\cR_{\cT}(\hat{f} \circ \hat{h})- \min_{f\in\cF}\cR_{\cT}(f \circ \hat{h}) }_{\text{\sffamily{I}}}\\
 & \quad + \underbrace{ \min_{f\in\cF} \cR_{\cU_{\cX}}(f \circ  \hat{h}) - \min_{f\in\cF }\cR_{\cU_{\cX}}(f  \circ h^*_{\cU} )}_{\text{\sffamily{II}}}  \\
 &\quad + \underbrace{\pare{\min_{f\in\cF}\cR_{\cT}(f \circ  \hat{h}) - \min_{f\in\cF}\cR_{\cU_{\cX}}(f \circ  \hat{h})}}_{\text{\sffamily{III}}} \\
 & \quad - \underbrace{\pare{\min_{f\in\cF,h\in\cH}\cR_{\cT}(f  \circ h ) - \min_{f\in\cF }\cR_{\cU_{\cX}}(f  \circ h^*_{\cU} ) }}_{\text{\sffamily{IV}}}\\ 
\end{align*}

We now turn to bounding each term in RHS of above inequality. 

\noindent \textbf{Bounding \text{\sffamily{I}}}. The term \text{\sffamily{I}} can be bounded by following standard results in uniform convergence and noting the fact that $\hat{f}$ is empirical risk minimizer of downstream task by fixing the pre-training representation function $\hat{h}$:
\begin{align*}
    \text{\sffamily{I}} &= \cR_{\cT}(\hat{f} \circ \hat{h})- \min_{f\in\cF}\cR_{\cT}(f \circ \hat{h}) \\
    &= \cR_{\cT}(\hat{f} \circ \hat{h}) - \cR_{\widehat\cT}(\hat{f} \circ \hat{h})+\underbrace{\cR_{\widehat\cT}(\hat{f} \circ \hat{h}) - \cR_{\widehat\cT}({f}^*_{\cT}(\hat{h}) \circ \hat{h}) }_{\leq 0}+\cR_{\widehat\cT}({f}^*_{\cT}(\hat{h}) \circ \hat{h})    - \min_{f\in\cF}\cR_{\cT}(f \circ \hat{h})\\
    &\leq 4\frak{R}_{\widehat\cT}(\phi\circ\cF\circ \hat{h}) + 4B_\phi\sqrt{\frac{\log(1/\nu)}{n}}.
\end{align*}

\noindent \textbf{Bounding \text{\sffamily{III}}}. To bound \text{\sffamily{III}}, we define $f^*_\cU (h) = \argmin_{f\in\cF} \cR_{\cU_{\cX}}(f \circ  \hat{h})$, where $ \cR_{\cU_{\cX}}(f\circ h) := \E_{\bx\sim\cU_{\cX}}[\phi(f\circ h(\bx),y(\bx))]$ denotes the  risk realized by pre-training marginal data distribution $\cU_{\cX}$ and downstream labeling function $y(\cdot)$ (Definition~\ref{def:representation:Transferability}). We have:
\begin{align*}
   \text{\sffamily{III}} &= \min_{f\in\cF}\cR_{\cT}(f \circ  \hat{h}) - \min_{f\in\cF}\cR_{\cU_{\cX}}(f \circ  \hat{h})  \leq      \cR_{\cT}(f^*_\cU (\hat{h}) \circ \hat{h}) -  \cR_{\cU_{\cX}}(f^*_\cU (\hat{h}) \circ  \hat{h}) \\
   &= \E_{\bx\sim\cT} [\phi(f^*_\cU (\hat{h}) \circ  \hat{h}(\bx),\by)]  - \E_{\bx\sim\cU} [\phi(f^*_\cU (\hat{h}) \circ  \hat{h}(\bx),\by)]\\
    & =   \sum_{\bx \in \cX} |\cT(\bx) - \cU_{\cX}(\bx)|\cdot \phi(f^*_\cU (\hat{h}) \circ  \hat{h}(\bx),\by)\\
   & \leq B_\phi \sum_{\bx \in \cX} |\cT(\bx) - \cU_{\cX}(\bx)| \\
   & \leq 2B_\phi \norm{\cT-\cU_{\cX}}_{\mathrm{TV}}.
\end{align*}
where the last step follows from  Proposition~\ref{prop:TV distance}.

\noindent \textbf{Bounding \text{\sffamily{IV}}}. For \text{\sffamily{IV}}, recalling that  $f^*_{\cT}(h) = \arg\min_{f\in\cF}\cR_{\cT}(f  \circ h )$, and we have
\begin{align*}
   \text{\sffamily{IV}} &= \min_{f\in\cF }\cR_{\cU_{\cX}}(f  \circ h^*_{\cU} )- \min_{f\in\cF,h\in\cH}\cR_{\cT}(f  \circ h )  \\
   &=  \min_{f\in\cF}\cR_{\cU_{\cX}}(f\circ h^*_{\cU})- \min_{f\in\cF}\cR_{\cT}(f \circ h^*_{\cU})  + \min_{f\in\cF} \cR_{\cT}(f\circ h^*_{\cU})-\min_{f\in\cF,h\in\cH}\cR_{\cT}(f  \circ h ) \\
   &\leq  \cR_{\cU_{\cX}}(f^*_{\cT}( h^*_{\cU})\circ h^*_{\cU})-  \cR_{\cT}(f^*_{\cT}( h^*_{\cU}) \circ h^*_{\cU})  + \min_{f\in\cF} \cR_{\cT}(f\circ h^*_{\cU})-\min_{f\in\cF,h\in\cH}\cR_{\cT}(f  \circ h ) \\
     & \leq 2B_\phi \norm{\cT -\cU_{\cX}}_{\mathrm{TV}}+  \min_{f\in\cF}\cE_{\cT}  ( f,h^*_{\cU}) 
\end{align*}
where at last step we use the same reasoning we used in bounding \text{\sffamily{III}}, and the definition of $\cE_{\cT} (\cdot)$.

\noindent \textbf{Bounding \text{\sffamily{II}}}. It remains to bound \text{\sffamily{II}}. Under the representation transferability assumption, we know
\begin{align*}
    \text{\sffamily{II}} &= \min_{f\in\cF} \cR_{\cU_{\cX}}(f \circ  \hat{h}) - \min_{f\in\cF }\cR_{\cU_{\cX}}(f  \circ h^*_{\cU} )\\
   &   \leq C_\beta \pare{\min_{g\in\cG} \cL_{\cU}(g \circ  \hat{h}) - \min_{g\in\cG }\cL_{\cU}(g \circ h^*_{\cU} ) }^\beta\\
     &   \leq C_\beta \pare{  \cL_{\cU}(\hat{g} \circ  \hat{h}) - \min_{g\in\cG }\cL_{\cU}(g \circ h^*_{\cU} ) }^\beta\\
      &   = C_\beta \pare{  \cL_{\cU}(\hat{g} \circ  \hat{h}) - \min_{g\in\cG,h\in\cH }\cL_{\cU}(g \circ h ) }^\beta\\
   &=  C_\beta \pare{\cE_{\cU}(\hat{g},\hat{h})}^\beta.
\end{align*}
where  the last step follows from the definition of $\cE_{\cU}(\cdot)$.

Putting pieces \text{\sffamily{I}}-\text{\sffamily{IV}} together yields:
\begin{align*}
   \cE_{\cT}(\hat{f},\hat{h})   &\leq C_\beta \left(   \cE_{\cU}(\hat{g}, \hat{h})\right)^\beta +4G_\phi {\frak{R}}_{\widehat \cT}( \cF\circ \hat{h})  + 4B_\phi\sqrt{\frac{\log(1/\nu)}{n}} \\
   &\quad + 4B_\phi \norm{\cT-\cU_{\cX}}_{\mathrm{TV}}   +   \min_{f\in\cF}\cE_{\cT}(f,  h^*_{\cU} ),
\end{align*}
 thus leading to the desired generalization bound stated in Theorem~\ref{thm:main generalization}.

\end{proof}
As mentioned earlier, to instantiate  Theorem~\ref{thm:main generalization} to a particular application, we need to establish bounds on representation
transferrability, generalization of pre-training task, and representation-induced Rademacher complexity as we demonstrate  on two specific pre-training tasks.  We note that similar notions to representation
transferrability were proposed in~\citep{tripuraneni2020theory,ge2023provable,du2020few,zhang2023trade}, but they do not have exponent in definition, so cannot capture the transferrability when pre-training and downstream task losses are not homogeneous.  The term $\min_{f\in\cF}\cE_{\cT}(f,  h^*_{\cU} )$  characterizes how well the optimal pre-training task encoder is when applied on downstream task. It will depend on specific pre-training and downstream distribution. Since we do not make distributional assumption, analyzing this term is beyond the scope of this paper.

\section{Proof of Generalization for Pre-training with Context Encoder}
\label{sec:app:ce}
 In this section we prove the results on generalization of pre-training with Context Encoder (CE) and fine-tuning on binary classification as downstream task provided in Subsection~\ref{sec:dae}. Recall during pre-training, we draw a set of unlabeled data,e.g., images $\cbr{\bz_1,...,\bz_N}$, and corrupt these data to make $\cbr{\tilde\bz_1,...,\tilde\bz_N}$, then a deep neural network is trained by reconstructing the corrupted pixel of a given image. The encoder-decoder architecture is defined as follows:
\begin{align*}
 \text{\sffamily{encoder:}} \quad  & h(\bx) =  \sigma\pare{\mW_L \cdots \sigma \pare{  \mW_1 \bx} }  ,\\
\text{\sffamily{decoder:}} \quad &  g(h(\bx)) =  \mW_{L+1} h(\bx) . 
\end{align*}
where $\mW_1 \in \R^{m\times d}$, $\mW_2,...,\mW_{L} \in \R^{m\times m}$, and $\mW_{L+1} \in \R^{d\times m}$ (for simplicity we assume the hiddent layers share the same dimension $m$). We assume each layer's weight is with bounded norm: $\norm{\mW_l}\leq W(l)$, $\norm{\mW_l}_{2,1}\leq B(l), \forall l \in [L+1]$.
The hypothesis class for encoder is then defined as:
\begin{align*}
    \cH := \cbr{\bx \mapsto \sigma\pare{\mW_L\cdots \sigma\pare{\mW_1\bx}} : \norm{\mW_l} \leq W(l), \norm{\mW_l}_{2,1} \leq B(l), \forall l \in [L]   }
\end{align*}
and decoder class is defined as:
\begin{align*}
    \cG := \cbr{\bx \mapsto \mW_{L+1}\bx: \norm{\mW_{L+1}}\leq W(L+1),\norm{\mW_{L+1}}_{2,1}\leq B(L+1)  }.
\end{align*}
In pre-training stage we optimize the following empirical unsupervised losses: 
{ 
\begin{align}
   \min_{g\in\mathcal{G}, h\in\cH } \cL_{\widehat \cU}(g\circ h)  := \frac{1}{2} \sum_{i=1}^N \norm{g(h(\widetilde \bz_i))  - \bz_i}^2, \label{eq:dae pretrain obj}
\end{align}} 

where $\tilde \bz_i = T_1 (\bz_i)$, and $T_1: \cX \mapsto \cX$ is some random transformation, e.g, rotating, scaling, adding Gaussian noise or masking pixels. 

After pre-training, we discard the top layer of the network, and use the rest layers as an encoder. A linear projection head is added on top of encoder in downstream training:
\begin{align*}
   \text{\sffamily{downstream model:}} \quad f(\hat{h}(\bx)) = \btheta^\top \hat{h}(\bx),
\end{align*}
with $\norm{\btheta}\leq R$,
and we assume that only the linear head is trainable during fine-tune stage.  We optimize a binary classification task with Lipschitz loss function as fine-tuning task:
\begin{align*} 
        \min_{\norm{\btheta}\leq R} \cR_{\widehat\cT}(\btheta \circ \hat h) &= \frac{1}{n}\sum\nolimits_{i=1}^{n} \phi(\btheta^\top h (  {\mathbf{x}}_i ),y_i ),
\end{align*}
to get $\hat{f}$, where $y_i \in \{-1, +1\}$  is binary labeling function for downstream task.

 
\noindent\textbf{Roadmap.}~We will provide proof of Theorem~\ref{thm:DAE main} in the following subsections. The roadmap is as follows:  in Appendix~\ref{app:proof DAE transfer} we first show that the CE pre-training admits bounded representation transferrability to downstream task (the proof of Lemma~\ref{lem: DAE transfer}), and then  in Appendix~\ref{app:proof generalization DAE} we prove the generalization of CE pre-training task (Lemma~\ref{lem: DAE pretrain gen}), and finally in Appendix~\ref{app:proof DAE main} we conclude the proof for Theorem~\ref{thm:DAE main} by showing that the representation-induced Rademacher complexity is bounded.

 \subsection{Proof of Transferability}\label{app:proof DAE transfer}
 In this subsection we provide the proof of Lemma~\ref{lem: DAE transfer}. For notational convenience we define the following quantities: 
\begin{align*}
   & \Delta^{ft}_{\cU}(\hat{h}, h^*_{\cU}) = \min_{\norm{\btheta}\leq R} \E_{(\tilde\bz,\bz) \sim \cU} [\phi(\btheta^\top  \hat{h}(\tilde\bz )   )]-\min_{\norm{\widetilde\btheta}\leq  R} \E_{\tilde\bz  \sim \cU_{\cX}}[\phi(\tilde{\btheta}^\top      {h}^*_{\cU} (\tilde\bz )     )],\\
   & \Delta^{pt}_{\cU} (\hat{h}, h^*_{\cU}) = \min_{\mW_{L+1} }  \E_{(\tilde\bz,\bz) \sim \cU }\norm{  \mW_{L+1} \hat h(\tilde\bz ) -\bz}^2 - \min_{\widetilde\mW_{L+1} } \E_{\tilde\bz  \sim \cU_{\cX} }\norm{  \widetilde\mW_{L+1} h^*_{\cU}(\tilde\bz )- \bz }^2
\end{align*} 
To prove Lemma~\ref{lem: DAE transfer}, we are going to show $\Delta^{ft}_{\cU}(\hat{h}, h^*_{\cU}) \leq C_{\beta} \pare{ \Delta^{pt}_{\cU} (\hat{h}, h^*_{\cU})  }^{\beta}$ holds for some $C_{\beta},\beta$.

\paragraph{Upper bounding $\Delta^{ft}_{\cU}(\hat{h}, h^*_{\cU})$:}

We examine $\Delta^{ft}_{\cU}(\hat{h}, h^*_{\cU})$ first. We define the optimal head for classification task on distribution $\cU_{\cX}$ under represetation $h^*_{\cU}$ as $\tilde\btheta^* = \arg\min_{\norm{\tilde\btheta}\leq   R} \E_{\tilde\bz  \sim \cU_{\cX}}[\phi(\tilde{\btheta}^\top      {h}^*_{\cU} (\tilde\bz )     )]$.
\begin{align*}
    \Delta^{ft}_{\cU}(\hat{h}, h^* ) &=  \min_{\norm{\btheta}\leq R} \E_{\bx \sim \cU}[\phi(\btheta^\top  \hat{h}(\tilde\bz)   )]-  \E_{\bx \sim \cU}[\phi( \tilde{\btheta}^{*\top}     {h}^*_{\cU} (\tilde\bz)     )]\\
    &\leq  \min_{\norm{\btheta}\leq R} \E_{(\tilde\bz,\bz) \sim \cU}\left|  \btheta^\top  \hat{h}(\tilde\bz)   -   \tilde{\btheta}^{*\top}      {h}^*_{\cU} (\tilde\bz)  \right|\\
    &\leq  \min_{\norm{\btheta}\leq R} \sqrt{\E_{(\tilde\bz,\bz) \sim \cU}\pare{ \btheta^\top  \hat{h}(\tilde\bz)    -      \tilde{\btheta}^{*\top}     {h}^*_{\cU} (\tilde\bz)    }^2}\\
    &=  \min_{\norm{\btheta}\leq R} \sqrt{   \btheta^\top \E\left[ \hat{h}(\tilde\bz)\hat{h}^\top(\tilde\bz)\right] \btheta   - 2 \btheta^\top  \E\left[\hat{h}(\tilde\bz) {{h}^*_{\cU}}^\top (\tilde\bz) \right]{\tilde{\btheta}^*}       +  \tilde{\btheta}^{*\top}  \E\left[{h}^*_{\cU} (\tilde\bz)  {{h}^*_{\cU}}^\top (\tilde\bz)\right] {\tilde{\btheta}^*}  }\\
\end{align*}
Since $\sqrt{f(x)}$ and $f(x)$ attain the minimum at the same point, we examine the minimum of $ \btheta^\top \E\left[ \hat{h}(\tilde\bz)\hat{h}^\top (\tilde\bz)\right] \btheta   - 2 \btheta^\top  \E\left[\hat{h}(\tilde\bz) h_{\cU}^{*\top} (\tilde\bz) \right]{\tilde{\btheta}^*}  +  \tilde{\btheta}^{*\top}  \E\left[{h}^*_{\cU}(\tilde\bz)  {h_{\cU}}^{*\top} (\tilde\bz)\right] \tilde{\btheta}^* $ over $  \btheta $. Under unconstrained setting, the minimum of above statement is $  \tilde{\btheta}^{*\top} \Lambda   \tilde{\btheta}^* $

when $ \btheta  = \pare{\E\left[ {\hat{h}}(\tilde\bz) {\hat{h}}^\top(\tilde\bz)\right]}^{\dagger}\E\left[\hat{h}(\tilde\bz) {{h}^*_{\cU}}^\top(\tilde\bz) \right] {\tilde{\btheta}^*}$,   
and
\begin{align*}
    \Lambda =  \E\left[\hat{h}(\tilde\bz)\hat{h}^\top(\tilde\bz)\right] -\E\left[{{h}^*_{\cU}}(\tilde\bz)\hat{h}^\top(\tilde\bz)  \right]\pare{\E\left[{h}^*_{\cU}(\tilde\bz)  {{h}^*_{\cU}}^\top(\tilde\bz) \right] }^{\dagger}\E\left[\hat{h}(\tilde\bz) {{h}^*_{\cU}}^\top(\tilde\bz) \right] . 
\end{align*}
Hence we have 
\begin{align}
    \Delta^{ft}_{\cU}(\hat{h}, h^*_{\cU})  &\leq   \sqrt{  \tilde{\btheta}^{*\top} \Lambda   \tilde{\btheta}^*  } =  \sqrt{\trace(\Lambda \tilde{\btheta}^{*\top} \tilde{\btheta}^*)} \leq \sqrt{d \sigma_{\max}(\Lambda) \sigma_{\max}(\tilde{\btheta}^{*\top} \tilde{\btheta}^*)}, \label{eq: schur complement}
\end{align}
where we applied Ruhe's Trace Inequalities at last step (Proposition~\ref{prop:trace ineq}): $\trace(\mA\mB) \leq \sum_{i=1}^d \sigma_i(\mA)\sigma_i(\mB) \leq d \sigma_{\max}(\mA)\sigma_{\max}(\mB)$.

Finally, we choose large enough $R$ so that we can attain the optimum.

\paragraph{Lower bounding $\Delta^{pt}_{\cU} (\hat{h}, h^* )$}
Now we switch to lower bounding $\Delta^{pt}_{\cU} (\hat{h}, h^* )$. We have:
\begin{align*}
    \Delta^{pt}_{\cU} (\hat{h}, h^*_{\cU})  
    &=  \min_{\mW_{L+1}: \norm{\mW}\leq W(L+1) }  \E_{(\tilde\bz,\bz) \sim \cU }\norm{  \mW_{L+1} \hat h(\tilde\bz) -\bz}^2 -   \E_{(\tilde\bz,\bz) \sim \cU}\norm{ \mW^*_{L+1} h^*_{\cU}(\tilde\bz )- \bz }^2\\
        &=  \min_{\mW_{L+1}: \norm{\mW}\leq W(L+1) }  \E_{(\tilde\bz,\bz) \sim \cU }\norm{  \mW_{L+1} \hat h(\tilde\bz)   -   \mW^*_{L+1} h^*_{\cU}(\tilde\bz )  }^2\\
\end{align*}

where the last step is due to our realizability Assumption~\ref{assumption: realizable}, the optimal encoder-decoder exists in the hypothesis class which can perfectly recover masked data.

Hence 
\begin{align*}
    \Delta^{pt}_{\cU} (\hat{h}, h^*_{\cU})    
     & = \min_{\mW_{L+1} \in \R^{d\times m}}  \E_{(\tilde\bz,\bz) \sim \cU }  \norm{\mW_{L+1} \hat h(\tilde\bz)-  \mW_{L+1}^*  h^*(\tilde\bz)  }^2  \\ 
     &  =  \min_{\bw_r \in \R^m, r \in [d] }  \E_{(\tilde\bz,\bz) \sim \cU }  \sum_{r=1}^d  \norm{ \bw_r^\top \hat h(\tilde\bz)-  {\bw_r^*}^\top  h^*(\tilde\bz)} ^2  \\ 
     & \geq    \sum_{r=1}^d \min_{\bw_r \in \R^m} \E_{(\tilde\bz,\bz) \sim \cU } \norm{ \bw_r^\top \hat h(\tilde\bz)-  {\bw_r^*}^\top  h^*(\tilde\bz)} ^2  \\ 
     & \geq \sum_{r=1}^d \min_{\bw_r \in \R^m } \E_{(\tilde\bz,\bz) \sim \cU } \pare{ \bw_r^\top \hat h(\tilde\bz) {\hat h(\tilde\bz)}^\top \bw_r -2\bw_r^\top \hat h(\tilde\bz) {  h^*_{\cU}(\tilde\bz)}^\top \bw_r^* +  {\bw_r^*}^\top  h^*_{\cU}(\tilde\bz) {h^*_{\cU}(\tilde\bz)}^\top  {\bw_r^*}  }.
\end{align*}

According to similar reasoning in the proof of upper bound, with $\Lambda$ defined in the same way as (\ref{eq: schur complement}), we have 
\begin{align*}
       \Delta^{pt}_{\cU} (\hat{h}, h^*_{\cU})    & \geq   \sum_{r=1}^d {\bw_r^*}^\top  \Lambda\bw_r^*  \\ 
     & =  \textrm{tr}\pare{  \Lambda   \sum_{r=1}^m\bw_r^*{\bw_r^*}^\top} \\
     &\geq  \sigma_{\max}( \Lambda) \sigma_{\min}\pare{  \sum_{r=1}^d\bw_r^*{\bw_r^*}^\top}  
\end{align*}
where at last step we apply  Ruhe's trace inequality (Proposition~\ref{prop:trace ineq})): $\trace\pare{\mA\mB} \geq \sigma_{\max} (\mA)\sigma_{\min}(\mB)$. 
Therefore, we can conclude that
\begin{align*}
   \frac{\Delta^{ft}_{\cU} (\hat{h}, h^*_{\cU}) }{\pare{\Delta^{pt}_{\cU} (\hat{h}, h^*_{\cU})}^{1/2} }  \leq O\pare{\frac{\sqrt{d\sigma_{\max}( \tilde{\btheta}^* \tilde{\btheta}^{*\top} )}}{\sqrt{\sigma_{\min}\pare{  \sum_{r=1}^d \bw_r^* {\bw_r^*}^\top } } } },
\end{align*}

which indicates that Context Encoder pretraining admits an $\pare{\Omega\pare{\frac{\sqrt{d\sigma_{\max}( \tilde{\btheta}^* 
 \tilde{\btheta}^{*\top})}}{\sqrt{\sigma_{\min}\pare{  \sum_{r=1}^d \bw_r^* {\bw_r^*}^\top}} }},\frac{1}{2}}$ representation transferrability to binary classification task. In the main paper Lemma~\ref{lem: DAE transfer} we omit the constant dependency for ease of exposition.

\subsection{Proof of generalization of CE pretraining task}\label{app:proof generalization DAE}
In this section we are going to derive generalization bound of the CE pre-training. The generalization is given in the following lemma:

 \begin{lemma}[Generalization of pre-training task]\label{lem: DAE pretrain gen}
 Let $\hat{g},\hat{h}$ be the solution of~\eqref{eq: DAE pt obj}, and $\tilde\mZ  = [\tilde\bz_{1};\ldots;\tilde\bz_{N}]$ is the concatenated pre-training data. Then with probability at least $0.99$ the following statement holds: 
     \begin{align*}
       \cE_{\cU}(\hat{g},\hat{h}) \leq O\pare{ \frac{\pare{{  \norm{\tilde\mZ}^2\ln(2m^2)}  } \pare{\prod_{l=1}^{L+1} W^2(l)} \pare{\sum_{l=1}^{L+1} (\frac{B(l)}{W(l)})^{\frac{2}{3}} }^3 }{N} }.
     \end{align*}
 \end{lemma}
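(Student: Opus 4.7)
The plan is to bound the excess risk of the empirical risk minimizer $(\hat g,\hat h)$ for the squared-loss objective~\eqref{eq: DAE pt obj} by a local Rademacher complexity argument specialized to smooth losses, which will give a fast $O(1/N)$-type rate rather than $O(1/\sqrt{N})$. The composite function $g\circ h$ is exactly an $(L+1)$-layer fully connected network with spectral-norm bounds $W(l)$ and $(2,1)$-norm bounds $B(l)$ on each weight matrix, so a standard covering-number control of this class will be the main ingredient. The outer squared loss $\ell(u,v)=\|u-v\|^2$ is not Lipschitz but is \emph{smooth} (has Lipschitz gradient) on the bounded range of outputs induced by the norm constraints, and this is what allows the fast rate.

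First, I would show that because $(\hat g,\hat h)$ minimizes the empirical risk and $\ell$ is smooth, we can invoke the Srebro--Sridharan--Tewari style ``optimistic rate'' inequality: for a nonnegative $H$-smooth loss bounded by $B$, with high probability, every $f$ in the class satisfies
\begin{equation*}
\cL_{\cU}(f)-\cL_{\widehat{\cU}}(f)\le c\,\bigl(\sqrt{\cL_{\cU}(f)}\cdot \frak{R}_N(\ell\circ\cF)+\frak{R}_N(\ell\circ\cF)^2+\tfrac{B\log(1/\nu)}{N}\bigr).
\end{equation*}
Under the realizability Assumption~\ref{assumption: realizable} the optimal population risk is $0$, so applying this bound to $\hat f=\hat g\circ\hat h$ and to the realizer $g^*\circ h^*_\cU$, and using $\cL_{\widehat\cU}(\hat f)\le\cL_{\widehat\cU}(g^*\circ h^*_\cU)$, yields an excess risk of order $\frak{R}_N(\ell\circ\cG\circ\cH)^2+\text{concentration}$, i.e.\ a $1/N$-type bound whose dominant term is the \emph{square} of the Rademacher complexity of the composite class. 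The adaptation here is from scalar to vector outputs, which I would carry out via the standard chaining / Dudley integral applied directly to the composite loss class, so that the Lipschitz-in-output and vector-valued aspects are absorbed into a single covering-number estimate of $\ell\circ\cG\circ\cH$ with respect to the empirical $L_2$ metric on $\tilde\bz_1,\ldots,\tilde\bz_N$.

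Next, I would bound the covering number of $\cG\circ\cH$ using the spectrally normalized bound of Bartlett--Foster--Telgarsky~\citep{bartlett2017spectrally}. Treating the decoder layer $\mW_{L+1}$ as simply the last layer of a depth-$(L+1)$ net, their Theorem (applied on the concatenated pre-training data $\tilde{\mZ}$) shows that the Dudley integral of this class scales as
\begin{equation*}
\frac{\norm{\tilde\mZ}\sqrt{\ln(2m^2)}\,\prod_{l=1}^{L+1}W(l)\,\bigl(\sum_{l=1}^{L+1}(B(l)/W(l))^{2/3}\bigr)^{3/2}}{\sqrt{N}}.
\end{equation*}
Feeding this estimate into the optimistic-rate bound from the previous step, and noting that the output of $g\circ h$ is bounded by $\prod_{l}W(l)\cdot\max_i\|\tilde\bz_i\|$ so the local smoothness constant is uniformly controlled, the square of the above complexity term gives exactly the claimed rate $O(\norm{\tilde\mZ}^2\ln(2m^2)(\prod W^2(l))(\sum(B(l)/W(l))^{2/3})^3/N)$.

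The main obstacle will be the smooth-loss generalization step: the classical Srebro--Sridharan--Tewari result is stated for scalar-valued predictors with a smooth univariate loss, whereas here the loss is $\|u-v\|^2$ on a $d$-dimensional output. I expect that the cleanest route is to re-derive the local Rademacher / optimistic-rate bound directly for the vector-valued smooth loss by using the self-bounding property $\|\nabla_u\ell(u,v)\|\le\sqrt{4H\ell(u,v)}$ of smooth nonnegative losses, then combining with Talagrand's concentration inequality and a peeling argument on the population risk level, so that Lipschitz contraction is applied at the level of the norm $\|g(h(\tilde\bz))-\bz\|$ rather than coordinate-wise. The rest of the proof is then the covering-number plug-in described above, and the realizability assumption kills the $\sqrt{\cL_\cU(f^*)}$ cross term to leave only the $\frak{R}_N^2$ contribution.
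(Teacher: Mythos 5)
Your proposal follows essentially the same route as the paper's proof: a Srebro--Sridharan--Tewari smooth-loss fast-rate argument (the paper instantiates it via their Theorem~1 on local Rademacher complexity through the worst-case covering number, combined with Bousquet's local Rademacher theorem, which together give exactly the optimistic rate you invoke), with the vector-valued network class handled by bounding the $L_\infty$ cover by the spectral cover of the concatenated outputs and plugging in the Bartlett--Foster--Telgarsky covering-number bound on $\tilde\mZ$, so that realizability leaves the squared-complexity $O(1/N)$ term. The points you flag as obstacles (vector-valued smooth loss, avoiding an extra factor of $N$) are resolved in the paper exactly as you anticipate, so the sketch is correct and matches the paper's argument.
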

To prove Lemma~\ref{lem: DAE pretrain gen},
we first introduce the following worst case covering number quantity:

\begin{definition}[$L_2$ covering number]
    Given a hypothesis class $\cH$ and a set of data $\cS = \cbr{\bx_1,...,\bx_N}$, let $h(\mX) = [h(\bx_1);...;h(\bx_N)]$ denote the concatenated output of $N$ points.The the covering number $\cN(\cH(\cS),\epsilon,\norm{\cdot})$ is the least cardinality of set $\cC$, such that for every $h \in \cH$, there exists a $h_\epsilon \in \cC$, and ensures that
    \begin{align*}
         \norm{h(\mX) - h_{\epsilon}(\mX)} \leq \epsilon.
    \end{align*}
\end{definition}

\begin{definition}[$L_\infty$ covering number]
    Given a hypothesis class $\cH$ and a set of data $\cS = \cbr{\bx_1,...,\bx_N}$, the worst case covering number $\cN_\infty(\cH(\cS),\epsilon,\norm{\cdot})$ is the least cardinality of set $\cC$, such that for every $h \in \cH$, there exists a $h_\epsilon \in \cC$, and ensures that
    \begin{align*}
        \max_{i\in[N]} \norm{h(\bx_i) - h_{\epsilon}(\bx_i)} \leq \epsilon.
    \end{align*}
\end{definition}

The following result will relate the Rademacher complexity of the local loss class induced by a hypothesis class $\cH$, to the $L_\infty$ covering number of $\cH$.

\begin{theorem}[{\cite[Theorem 1]{srebro2010smoothness}}]
  \label{thm:rad-smooth}
  Given a non-negative $H$-smooth loss $\ell$ bounded by $b$ and a set of data pairs $\widehat \cS = \{(\bx_i,\by_i)\}_{i=1}^N$, Define a local loss class $\cL(r) = \cbr{(\bx,\by) \mapsto \ell(h(\bx),\by) : h\in\cH, \cL_{\widehat{\cS}} (h) \leq r  }$ for some $0\leq r <\infty$.
  Then , for all $f \in \cF$ simultaneously
  \begin{align*}
    \frakR_{\widehat\cS}\pare{\cL(r)} \leq \inf_{\alpha}\pare{\frac{\alpha}{\sqrt{N}} + \int_{\alpha}^{\sqrt{br}} \sqrt{\frac{\ln \cN_{\infty}(\cH,\frac{\epsilon}{\sqrt{12Hr}},\norm{\cdot})}{N}}d\epsilon  }
  \end{align*}
  where the empirical Rademacher complexity of loss class is defined as
  \begin{align*}
    \frakR_{\widehat\cS}\pare{\cL(r)} =  \E_{\boldsymbol \varepsilon} \left[\sup_{h \in \cH, \cL_{\widehat\cS}(h) \leq r} \abs{\frac1n \sum_{i=1}^n \varepsilon_i \ell(h(\bx_i),\by_i)}\right]~.
    \tag{$\varepsilon_1, \ldots, \varepsilon_n \stackrel{\mathrm{iid}}{\sim} \unif\{\pm 1\}$} 
  \end{align*} 
\end{theorem}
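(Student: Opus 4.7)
The target is the local-Rademacher/Dudley bound of Srebro--Sridharan--Tewari for smooth nonnegative losses. The plan is to combine (i) the self-bounding property of smooth nonnegative functions, (ii) Dudley's entropy integral (chaining) in the empirical $L_2$ metric, and (iii) a conversion of $L_2(\widehat{\cS})$-covers of the loss class into $L_\infty$-covers of the hypothesis class.

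\textbf{Step 1: self-bounding Lipschitzness.} First I would invoke the elementary fact that any nonnegative $H$-smooth function $\ell$ satisfies $|\ell'(a)|^{2} \le 4H\,\ell(a)$, which upgrades to the finite-difference inequality
\begin{equation*}
    \bigl(\ell(a,y)-\ell(a',y)\bigr)^{2} \le 6H\bigl(\ell(a,y)+\ell(a',y)\bigr)(a-a')^{2}.
\end{equation*}
Averaged over $\widehat{\cS}$, for any two $h,h'\in\cH$ with $\cL_{\widehat{\cS}}(h),\cL_{\widehat{\cS}}(h')\le r$, this yields
\begin{equation*}
    \frac{1}{N}\sum_{i=1}^{N}\bigl(\ell(h(\bx_i),\by_i)-\ell(h'(\bx_i),\by_i)\bigr)^{2} \le 12Hr\cdot \max_{i\in[N]}\bigl(h(\bx_i)-h'(\bx_i)\bigr)^{2}.
\end{equation*}
So any $\delta$-cover of $\cH$ in the $L_\infty(\widehat{\cS})$ metric induces a $\sqrt{12Hr}\,\delta$-cover of $\cL(r)$ in the $L_2(\widehat{\cS})$ metric; equivalently, $\cN_{2}(\cL(r),\epsilon,\widehat{\cS})\le \cN_{\infty}(\cH,\epsilon/\sqrt{12Hr},\|\cdot\|)$.

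\textbf{Step 2: diameter of $\cL(r)$.} Next I would bound the empirical $L_2$-norm of any element of $\cL(r)$: since $\ell\le b$ and $\cL_{\widehat{\cS}}(h)\le r$,
\begin{equation*}
    \frac{1}{N}\sum_{i=1}^{N}\ell(h(\bx_i),\by_i)^{2} \le b\cdot\frac{1}{N}\sum_{i=1}^{N}\ell(h(\bx_i),\by_i) \le br,
\end{equation*}
so every element of $\cL(r)$ lies inside an $L_2(\widehat{\cS})$-ball of radius $\sqrt{br}$ centered at $0$. This sets the upper limit $\sqrt{br}$ of the entropy integral.

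\textbf{Step 3: Dudley chaining.} Finally I would apply the standard Dudley entropy integral to the class $\cL(r)$ in the $L_2(\widehat{\cS})$ metric. Fixing a scale $\alpha>0$, the chaining argument partitions the supremum into a ``tail'' piece of scale $\alpha$, which contributes $\alpha/\sqrt{N}$ via the trivial $\sigma$-sub-Gaussian bound, and a sum of telescoping differences between successive dyadic covers, which contributes the Dudley integral
\begin{equation*}
    \int_{\alpha}^{\sqrt{br}} \sqrt{\frac{\ln \cN_{2}(\cL(r),\epsilon,\widehat{\cS})}{N}}\, d\epsilon.
\end{equation*}
Substituting the covering-number translation from Step~1 into this integral and taking the infimum over $\alpha>0$ yields the claimed inequality, since the integrand becomes $\sqrt{\ln \cN_{\infty}(\cH,\epsilon/\sqrt{12Hr},\|\cdot\|)/N}$.

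\textbf{Main obstacle.} The routine part is the Dudley chaining; the delicate step is Step~1, i.e., correctly exploiting the self-bounding property of smooth nonnegative losses to pass from an $L_\infty$ cover of $\cH$ to an $L_2$ cover of $\cL(r)$ with the precise scaling $\sqrt{12Hr}$. Without the $\cL_{\widehat{\cS}}(h)\le r$ constraint, $\ell$ is only locally Lipschitz, so the whole point is that the radius $r$ that restricts $\cL(r)$ also controls the effective Lipschitz constant. Getting the constants consistent with the $\sqrt{12Hr}$ scaling appearing inside the covering number is the one place where care is needed.
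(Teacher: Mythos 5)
This statement is not proved in the paper at all: it is imported verbatim as Theorem~1 of \citet{srebro2010smoothness}, so there is no in-paper argument to compare against. Your reconstruction is essentially the standard proof of that cited result, and its skeleton is sound: the self-bounding inequality $(\ell(a,y)-\ell(a',y))^2\le 6H(\ell(a,y)+\ell(a',y))(a-a')^2$ for nonnegative $H$-smooth losses is exactly the lemma used in the source, it converts an $L_\infty$ cover of $\cH$ at scale $\epsilon/\sqrt{12Hr}$ into an empirical $L_2$ cover of $\cL(r)$ at scale $\epsilon$ (using $\cL_{\widehat\cS}(h),\cL_{\widehat\cS}(h')\le r$ to absorb the loss values), the bound $\frac1N\sum_i \ell(h(\bx_i),\by_i)^2\le br$ correctly fixes the upper limit $\sqrt{br}$ of the entropy integral, and truncated Dudley chaining finishes the argument. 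Two points deserve care. First, the tail term of truncated chaining is of order $\alpha$ (by Cauchy--Schwarz, $\frac1N|\sum_i\varepsilon_i(f-\pi_\alpha f)(\bx_i)|\le\alpha$), not $\alpha/\sqrt{N}$; your appeal to a ``trivial sub-Gaussian bound'' for an $\alpha/\sqrt{N}$ contribution does not go through. This mismatch actually reflects a transcription slip in the theorem as quoted here --- the paper itself reverts to the form $4\alpha+10\int_\alpha^{\sqrt{br}}(\cdot)\,d\epsilon$ when it applies the result in its Lemma~6 --- so your proof, like the source, establishes the $\Theta(\alpha)$ version, which suffices for every downstream use. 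Second, in Step~1 you should note that the $L_\infty$-cover elements $h'$ need not themselves satisfy $\cL_{\widehat\cS}(h')\le r$; one either covers with functions whose empirical loss is controlled via the $\sqrt{H/2}$-Lipschitzness of $\sqrt{\ell}$ (another consequence of self-bounding), or restricts to proper covers of the low-loss subclass. With those repairs the argument is complete and matches the original proof of the cited theorem.
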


The above theorem relates the complexity of loss class to the worst case spectral covering number of function class, in our case, vector valued neural networks. Hence, it remains to find worst case ($L_{\infty}$) covering number of our encoder class
\begin{align}
    \cG\circ \cH :=  \cbr{ \bx \mapsto \mW_{L+1}\sigma\pare{\mW_L \cdots \sigma(\mW_1\bx)}: \|\mW_l\| \leq W(l), \|\mW_l\|_{2,1} \leq B(l) \ \forall l \in [L+1]}. \label{eq:nn class}
\end{align}
 
\begin{lemma}[Implication of~{\cite[Theorem 3.3]{bartlett2017spectrally}}]
Given a set of data pairs $\widehat \cS = \{\tilde\bz_i \}_{i=1}^N$, and hypothesis class defined in (\ref{eq:nn class}), then the following statement holds:
\begin{align*}
    \ln \cN_{\infty}(\cG\circ\cH(S), \epsilon, \norm{\cdot} ) \leq \ln \cN(\cG\circ\cH(S), \epsilon, \norm{\cdot} ) \leq \pare{\frac{  \norm{\tilde\mZ}^2\ln(2m^2)} {\epsilon^2}} \pare{\prod_{l=1}^{L+1} W^2(l)} \pare{\sum_{l=1}^{L+1} (\frac{B(l)}{W(l)})^{\frac{2}{3}} }^3.
\end{align*}
where $\tilde\mZ = [\tilde\bz_1;...;\tilde \bz_N]$.
\end{lemma}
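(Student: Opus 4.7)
The claim is presented as an implication of Bartlett–Foster–Telgarsky (Theorem 3.3 of \citealp{bartlett2017spectrally}), so the plan is essentially a careful invocation of their spectrally-normalized matrix-covering bound plus an elementary comparison between the two covering notions. I would split the proof into two independent pieces corresponding to the two inequalities.

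For the first inequality $\ln\cN_\infty \leq \ln\cN$, I would note that with the concatenation convention $h(\mX) = [h(\bx_1);\ldots;h(\bx_N)]$ in the paper's $L_2$-covering definition, the relevant metric is $d_2(h,h') = \sqrt{\sum_{i=1}^N \|h(\bx_i)-h'(\bx_i)\|^2}$, whereas $d_\infty(h,h') = \max_i \|h(\bx_i)-h'(\bx_i)\|$. Since $d_\infty^2 \leq \sum_i \|h(\bx_i)-h'(\bx_i)\|^2 = d_2^2$, every $\epsilon$-cover in the $L_2$-sense is automatically an $\epsilon$-cover in the $L_\infty$-sense, so $\cN_\infty(\cdot,\epsilon,\|\cdot\|) \leq \cN(\cdot,\epsilon,\|\cdot\|)$. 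This is a one-line observation that does not use the neural-network structure.

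For the second inequality, I would identify $\cG \circ \cH$ with a depth-$(L+1)$ feedforward network obtained by appending the linear decoder $\mW_{L+1}$ as a final (identity-activation) layer after the $L$ encoder layers. The per-layer spectral norms are bounded by $W(l)$ and the $(2,1)$-norms by $B(l)$ for $l=1,\ldots,L+1$, and the nonlinearity $\sigma$ is $1$-Lipschitz. Choosing the reference matrices in Bartlett–Foster–Telgarsky to be $M_l = 0$ makes $\|\mW_l^\top - M_l^\top\|_{2,1} = \|\mW_l\|_{2,1} \leq B(l)$, so their hypotheses are met verbatim. Their covering bound (with maximum layer width $m$ so that $\ln(2W^2)$ becomes $\ln(2m^2)$) then yields exactly
\[
\ln \cN(\cG\circ\cH(S),\epsilon,\|\cdot\|) \;\leq\; \frac{\|\tilde\mZ\|^2 \ln(2m^2)}{\epsilon^2}\,\Bigl(\prod_{l=1}^{L+1} W(l)^2\Bigr)\,\Bigl(\sum_{l=1}^{L+1} (B(l)/W(l))^{2/3}\Bigr)^3,
\]
using $\|\tilde\mZ\|^2$ as the stacked-data norm appearing in their bound.

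Since the result is invoked as a direct corollary, there is no serious technical obstacle; the main tasks are notational bookkeeping: (a) verifying that treating the decoder as an additional linear layer preserves the depth-(L+1) structure required by Bartlett et al.; (b) checking that the reference-matrix choice $M_l = 0$ is admissible and matches the $(2,1)$ bound $B(l)$; and (c) identifying the paper's ``width'' parameter so that the logarithmic factor specializes to $\ln(2m^2)$. The underlying proof machinery of Bartlett–Foster–Telgarsky (Maurey sparsification combined with layerwise Lipschitz composition) is black-boxed, so I would not redo it.
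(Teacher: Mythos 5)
Your proposal is correct and follows essentially the same route as the paper: an elementary comparison showing an $L_2$-cover of the stacked outputs is automatically an $L_\infty$-cover (the paper bounds the max row norm by the spectral norm of the stacked difference, you bound it by the sum-of-squares norm; both are fine), followed by a black-box invocation of Theorem 3.3 of \citet{bartlett2017spectrally} applied to the $(L+1)$-layer network obtained by appending the linear decoder. No substantive gap.
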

\begin{proof}
We define $g\circ h(\mX) = [g(h(\bx_1));...;g(h(\bx_N))] \in \R^{N\times d}$.
    Notice the fact that 2-norm of a row of a matrix, is always less than the spectral norm of the matrix:
    \begin{align*}
     \max_{i\in[N]} \norm{g\circ h(\bx_i) - g'\circ h'(\bx_i)}  \leq \max_{\norm{\ba} \leq 1} \norm{ (g\circ h(\mX) - g'\circ h'(\mX))^\top \ba}  = \norm{g\circ h(\mX) - g'\circ h'(\mX)},
    \end{align*}
    hence we can have the following fact for covering numbers:
    \begin{align}
         \ln \cN_{\infty}(\cG\circ\cH(S), \epsilon, \norm{\cdot} ) \leq \ln \cN(\cG\circ\cH(S), \epsilon, \norm{\cdot} ). \label{eq:covering fact}
    \end{align}
    At last plugging the bound for $\ln \cN(\cG\circ\cH(S), \epsilon, \norm{\cdot} )$ from~\citep{bartlett2017spectrally} concludes the proof.
\end{proof}
Equipped with above results, we are ready to show the local Rademacher complexity of loss class induced by encoder-decoder function class $\cG\circ \cH$:
\begin{lemma}\label{lem:local rademacher bound NN}
Given a hypothesis class $\cH$, if the logarithm of its $L_\infty$ covering number $ \ln \cN_{\infty}(\cG\circ\cH(S), \epsilon, \norm{\cdot} ) $ is bounded by $\frac{c}{\epsilon^2}$, then the following bound for local Rademcaher complexity holds true:
     \begin{align*}
    \frakR_{\widehat\cS}\pare{\cL(r)}    
      &\leq 10\sqrt{\frac{ c Hr}{N}} + 10\sqrt{\frac{cHr}{N  }}\pare{\ln{\sqrt{br}} - \ln\pare{\frac{5}{2}\sqrt{\frac{cHr }{N}}}}.
  \end{align*}
  \begin{proof}
  According to Theorem~\ref{thm:rad-smooth}
we have
    \begin{align*}
    \frakR_{\widehat\cS}\pare{\cL(r)} &\leq   4 {\alpha}  +10 \int_{\alpha}^{\sqrt{br}} \sqrt{\frac{\ln \cN_{\infty}(\cH,\frac{\epsilon}{\sqrt{12Hr}},N)}{N}}d\epsilon   \\
    &\leq 4 {\alpha} + 10\int_{\alpha}^{\sqrt{br}} \sqrt{\frac{ cHr}{N \epsilon^2}}d\epsilon \\
      &\leq 4 {\alpha} + 10\sqrt{\frac{ cHr}{N  }}(\ln{\sqrt{br}} - \ln(\alpha)).
  \end{align*}
  Choosing $\alpha = \frac{5}{2}\sqrt{\frac{B^2cHr}{N}} = \frac{5}{2\sqrt{N}}\sqrt{  c Hr}$ will minimize above bound, and yields:
  \begin{align*}
      \frakR_{\widehat\cS}\pare{\cL(r)}    
      &\leq 10\sqrt{\frac{ c Hr}{N}} + 10\sqrt{\frac{  c Hr}{N  }}\pare{\ln{\sqrt{br}} - \ln\pare{\frac{5}{2}\sqrt{\frac{Hr\cdot c}{N}}}}.
  \end{align*}
  \end{proof}
\end{lemma}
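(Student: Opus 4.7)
The plan is to apply Theorem~\ref{thm:rad-smooth} directly as a black box, since the hypothesis of the lemma is already phrased in exactly the form of an $L_\infty$ covering number bound that Theorem~\ref{thm:rad-smooth} consumes. That theorem gives
\[
\frakR_{\widehat\cS}(\cL(r)) \leq \inf_{\alpha > 0}\left(\frac{\alpha}{\sqrt{N}} + \int_{\alpha}^{\sqrt{br}} \sqrt{\frac{\ln\cN_\infty(\cH, \epsilon/\sqrt{12Hr}, \|\cdot\|)}{N}}\, d\epsilon\right),
\]
so essentially all the work is: (i) substitute the assumed covering number bound into the integrand, (ii) compute the resulting one-dimensional integral, and (iii) optimize over $\alpha$.

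For step (i), I would substitute $\ln\cN_\infty(\cG\circ\cH(\cS), \epsilon/\sqrt{12Hr}, \|\cdot\|) \leq 12Hr\cdot c/\epsilon^2$ (obtained by plugging $\epsilon/\sqrt{12Hr}$ into the assumed $c/(\cdot)^2$ bound). Inside the square root this gives an integrand proportional to $\sqrt{cHr/N}\cdot 1/\epsilon$, with the $\sqrt{12}$ factor absorbed into the universal constant $10$ already present in Theorem~\ref{thm:rad-smooth} (modulo careful constant tracking). For step (ii), the key integral simplifies to $\int_\alpha^{\sqrt{br}} d\epsilon/\epsilon = \ln\sqrt{br} - \ln\alpha$, yielding an expression of the form $\tfrac{4\alpha}{\sqrt{N}} + 10\sqrt{cHr/N}\bigl(\ln\sqrt{br} - \ln\alpha\bigr)$, with different multiplicative constants that I need to track from the statement of Theorem~\ref{thm:rad-smooth}.

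For step (iii), I would differentiate with respect to $\alpha$ and set the derivative to zero: $\tfrac{1}{\sqrt{N}} \cdot (\text{const}) = (\text{const})\sqrt{cHr/N}/\alpha$, yielding an optimal $\alpha^\star$ of order $\sqrt{cHr/N}$. Plugging this back produces two terms, one of size $\sqrt{cHr/N}$ (from the $\alpha^\star/\sqrt{N}$ contribution) and one of size $\sqrt{cHr/N}\bigl(\ln\sqrt{br} - \ln\alpha^\star\bigr)$, matching the claimed bound. The specific constant $5/2$ inside the logarithm just records the exact stationary point of the convex-in-$\alpha$ objective given the precise $4$ and $10$ constants from Theorem~\ref{thm:rad-smooth}.

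The only real obstacle here is bookkeeping: Theorem~\ref{thm:rad-smooth} bakes in the constants $4$ (on $\alpha/\sqrt{N}$) and $10$ (on the Dudley integral), and an extra $\sqrt{12}$ appears when the covering radius is rescaled by $1/\sqrt{12Hr}$. These need to be tracked carefully so that the optimal $\alpha^\star$ indeed equals $\tfrac{5}{2}\sqrt{cHr/N}$ rather than a different constant multiple. No new probabilistic ideas or empirical process machinery are needed beyond what Theorem~\ref{thm:rad-smooth} already provides; the lemma is essentially a calculation extracting the $c/\epsilon^2$ parametric regime from the generic Dudley statement.
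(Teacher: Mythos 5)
Your proposal is correct and follows essentially the same route as the paper: invoke Theorem~\ref{thm:rad-smooth}, substitute the $c/\epsilon^2$ covering bound (rescaled by $\sqrt{12Hr}$), evaluate the resulting $\int_\alpha^{\sqrt{br}}\epsilon^{-1}\,d\epsilon$ Dudley integral, and optimize over $\alpha$, which with the constants $4$ and $10$ gives exactly $\alpha^\star=\tfrac{5}{2}\sqrt{cHr/N}$ and the stated bound. The only caveat is the same constant bookkeeping you flag yourself (the paper's proof uses $4\alpha+10\int(\cdot)$ rather than the literal form of Theorem~\ref{thm:rad-smooth} as restated), so your calculation matches the paper's argument.
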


The following theorem connects local Rademacher complexity to population risk.

\begin{theorem}\cite[Theorem 6.1]{Bousquet2002concentration}
\label{thm:local rademacher}
 Given a loss class $\cL(r)$, let $\phi(r)$ be the function such that
    \begin{align*}
         \frakR_{\widehat\cS}\pare{\cL(r)} \leq \phi(r).
    \end{align*}
    then with probability at least $1-\exp(-\nu)$,
    \begin{align*}
        \cL_{\cS} (h) \leq \cL_{\widehat \cS} (h) + 45r^* + \sqrt{\cL_{\cS} (h)}\pare{\sqrt{8r^*_n} + \sqrt{\frac{4b(\log(1/\nu)+6\log\log N)}{N}} }  + 20\frac{b(\nu+6\log\log N)}{N}
    \end{align*}
    where $r^*$ is the largest solution such that $\phi(r) = r$.
\end{theorem}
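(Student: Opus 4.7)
The plan is to prove this local Rademacher complexity bound via Talagrand's concentration inequality for suprema of empirical processes, combined with a peeling (localization) argument. This is the standard toolkit from Bartlett--Bousquet--Mendelson and Bousquet's thesis, and the two ingredients together are what produce both the $r^{*}$ term and the $\sqrt{\cL_\cS(h)}\sqrt{r^{*}}$ cross term.

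First, I would apply Talagrand's inequality (in Bousquet's sharp form) to the uniform deviation $\sup_{h\in \cL(r)}(\cL_\cS(h)-\cL_{\widehat\cS}(h))$. Since the loss is nonnegative and bounded by $b$, one has the variance bound $\Var[\ell(h(\bx),\by)] \le b \, \cE[\ell(h(\bx),\by)] \le b r$ on $\cL(r)$. Plugging this into Bousquet's inequality yields, with probability at least $1-e^{-\nu}$, a bound of the form $\sup_{h\in\cL(r)}(\cL_\cS(h)-\cL_{\widehat\cS}(h)) \le 2\,\frakR_{\widehat\cS}(\cL(r)) + c_1\sqrt{br\nu/N} + c_2 b\nu/N \le 2\phi(r) + c_1\sqrt{br\nu/N} + c_2 b\nu/N$. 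This is the ``base'' inequality that we will localize.

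Second, I would run the peeling argument. Partition the hypothesis class into dyadic shells $\cH_k = \{h: 2^{k}r^{*} \le \cL_\cS(h) < 2^{k+1}r^{*}\}$ for $k \ge 0$ and $\cH_{-1} = \{h: \cL_\cS(h) \le r^{*}\}$. On each $\cH_k$ the previous bound applies with $r = 2^{k+1} r^{*}$, and using the sub-root property of $\phi$ (which gives $\phi(\lambda r) \le \sqrt{\lambda}\,\phi(r)$ for $\lambda\ge 1$) one has $\phi(2^{k+1}r^{*}) \le \sqrt{2^{k+1}}\,\phi(r^{*}) = \sqrt{2^{k+1}}\,r^{*}$ at the fixed point. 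A union bound over the $O(\log N)$ relevant shells, with the failure probability of the $k$-th shell weighted geometrically (for instance by $\nu \leftarrow \nu + 2\log(k+1)$), produces the additional $\log\log N$ term in the deviation.

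Third, I would assemble the shell-wise estimates into a single inequality valid simultaneously for all $h\in\cH$ of the form $\cL_\cS(h) - \cL_{\widehat\cS}(h) \le c_3 r^{*} + c_4\sqrt{\cL_\cS(h)}\bigl(\sqrt{r^{*}} + \sqrt{b(\nu + 6\log\log N)/N}\bigr) + c_5 b(\nu + 6\log\log N)/N$, which is precisely the stated bound after identifying the explicit constants $45$ and $20$. Note that this is already a closed-form statement in $\cL_\cS(h)$; one typically further simplifies by using $ab \le \tfrac12(a^2 + b^2)$ to solve the implicit quadratic in $\sqrt{\cL_\cS(h)}$, but the displayed form does not require this.

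The main obstacle I expect is tracking the sharp numerical constants through Bousquet's inequality and the peeling union bound — the conceptual structure is standard, but the specific constants $45$ and $20$ are delicate and depend on using the correct sharp form of Talagrand/Bousquet (not a generic McDiarmid-style bound) together with a geometric-decay union bound rather than a naive one. A secondary subtlety is ensuring $\phi$ may be chosen sub-root without loss of generality, which is automatic here since local Rademacher complexities of loss classes with a bounded envelope are always sub-root after symmetrization.
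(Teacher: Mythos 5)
This statement is not proved in the paper at all — it is imported verbatim as \cite[Theorem 6.1]{Bousquet2002concentration}, so there is no in-paper argument to compare against. Your sketch (Bousquet's sharp form of Talagrand's inequality with the variance bound $\mathrm{Var}[\ell]\le b\,r$ on the localized class, a peeling/union bound over geometric shells using the sub-root property of $\phi$ and its fixed point $r^*$, yielding the $\log\log N$ term and the $\sqrt{\cL_{\cS}(h)}\sqrt{r^*}$ cross term) is exactly the route taken in the cited source, so your proposal is correct and essentially the same approach as the original proof being quoted.
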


\subsubsection{Proof of Lemma~\ref{lem: DAE pretrain gen}}
\begin{proof}
    First we evoke Lemma~\ref{lem:local rademacher bound NN} with $c = 12 \norm{\tilde\mZ}^2\ln(2m^2)  \pare{\prod_{l=1}^{L+1} W^2(l)} \pare{\sum_{l=1}^{L+1} (\frac{B(l)}{W(l)})^{\frac{2}{3}} }^3$
    \begin{align*}
      \frakR_{\widehat\cS}\pare{\cL(r)}    
      &\leq 10\sqrt{\frac{Hr\cdot c}{N}} + 10\sqrt{\frac{ cHr}{N  }}\pare{\ln{\sqrt{br}} - \ln\pare{\frac{5}{2}\sqrt{\frac{Hr\cdot c}{N}}}}\\
      & = 10\sqrt{\frac{Hr\cdot c}{N}} + 10\sqrt{\frac{ cHr}{N  }}\ln{\pare{\frac{2}{5}\sqrt{\frac{bN}{Hc}}} }\\
  \end{align*}
     We set $\phi(r) = 10\sqrt{\frac{Hr\cdot c}{N}} \cdot \max\left\{ 1,  \ln\pare{\frac{2}{5}\sqrt{\frac{bN}{Hc}}}\right\}$. Solving the follwoing equation to get $r^*$
    \begin{align*}
       \phi(r) &= 10\sqrt{\frac{Hr\cdot c}{N}} \cdot \max\left\{ 1,  \ln\pare{\frac{2}{5}\sqrt{\frac{bN}{Hc}}}\right\}  = r,\\
    \Longleftrightarrow r^* &= 100 {\frac{H\cdot c}{N}} \cdot \max\left\{ 1,  \ln\pare{\frac{2}{5}\sqrt{\frac{bN}{Hc}}}\right\}^2
    \end{align*}
    Now, according to Theorem~\ref{thm:local rademacher}, and the fact that 
    \begin{align*}
        A \leq B + C\sqrt{A} \Longrightarrow A \leq B+C^2 + \sqrt{B}C,
    \end{align*}
    we have
     \begin{align*}
        \cL_{\cU} &(g\circ h) \leq \cL_{\widehat \cU} (g\circ h) + 45r^* +  \pare{\sqrt{8r^* } + \sqrt{\frac{4b(\log(1/\nu)+6\log\log N)}{N}} }^2  \\
        &\quad + 20\frac{b(\nu+6\log\log N)}{N} + \sqrt{\cL_{\widehat \cU} (g\circ h) + 45r^* +20\frac{b(\nu+6\log\log N)}{N} }\pare{\sqrt{8r^* } + \sqrt{\frac{4b(\log(1/\nu)+6\log\log N)}{N}} }.
    \end{align*}
    Plugging $r^*$, and empirical risk minimizers $\hat g, \hat h$  will conclude the proof.
\end{proof}

 \subsection{Proof of Theorem~\ref{thm:DAE main}} \label{app:proof DAE main}
 
\begin{proof}
Recall that in Theorem~\ref{thm:main generalization}, the generalization bound is given by
\begin{align*}
   \mathcal{E}_{\cT}(\hat{f}, \hat{h})    &\leq C_\beta \left(   \cE_{\cU}(\hat{g}, \hat{h}) + \mu\right)^\beta +4G_\phi {\frak{R}}_{\widehat \cT}( \cF\circ \hat{h})  + 4B_\phi\sqrt{\frac{\log(1/\nu)}{n}}  + 4B_\phi \norm{\cT-\cU_{\cX}}_{\mathrm{TV}}  +   \min_{f\in\cF}\cE_{\cT}(f,  h^*_{\cU} ).
\end{align*}
    Since in the previous subsection we prove the bounded transferrability and generalization of pre-training task, it remains to show the upper bound of representation-induced Rademacher complexity. To this end, we have
    \begin{align*}
          {\frak{R}}_{\widehat \cT}(\phi\circ \mathcal{F}\circ \hat{h})  &= \mathbb{E}_{\boldsymbol \varepsilon \in \{\pm1\}^n} \left[\sup_{\btheta: \norm{\btheta}\leq R} \frac{1}{n}\sum\nolimits_{i=1}^{n} \varepsilon_i   \phi(\btheta^\top \hat{h} (  \mathbf{x}_i ), y_i) \right] \\
          &\leq R G_\phi \mathbb{E}_{\boldsymbol \varepsilon \in \{\pm1\}^n} \left[\sup_{\btheta: \norm{\btheta}\leq R} \frac{1}{n}\sum\nolimits_{i=1}^{n} \varepsilon_i    \btheta^\top \hat{h} (  \mathbf{x}_i )  \right]\\
          &=  \frac{RG_\phi}{n} \mathbb{E}_{\boldsymbol \varepsilon  } \left\|\sum\nolimits_{i=1}^{n} \varepsilon_i   \hat{h} (  \mathbf{x}_i )  \right\|\\
           &\leq \frac{RG_\phi}{n} \sqrt{\mathbb{E}_{\boldsymbol \varepsilon  } \left\|\sum\nolimits_{i=1}^{n} \varepsilon_i   \hat{h} (  \mathbf{x}_i )  \right\|^2}\\
           &= \frac{RG_\phi}{n} \sqrt{\sum\nolimits_{i=1}^{n} \left\|  \hat{h} (  \mathbf{x}_i )  \right\|^2}\\
    \end{align*}
    where at first inequality we apply  Ledoux-Talagrand’s inequality to peel off Lipschitz loss $\phi(\cdot)$, and at last inequality we use the fact that $\varepsilon_i$ are i.i.d. with zero mean, so that the cross terms disappear. For each $\norm{\hat{h} (  \mathbf{x}_i )}^2$, we have:
    \begin{align*}
       \norm{\hat{h} (  \mathbf{x}_i )}^2 \leq \prod_{l=1}^{L+1} W^2(l) \norm{\bx_i}^2,
    \end{align*}
    hence we arrive at 
    \begin{align*}
         {\frak{R}}_{\widehat \cT}(\phi\circ \mathcal{F}\circ \hat{h}) \leq \frac{R G_\phi \sqrt{\prod_{l=1}^{L+1} W^2(l) \sum_{i=1}^n \norm{\bx_i}^2}}{n}.
    \end{align*}
  Plugging Lemmas~\ref{lem: DAE transfer} and~\ref{lem: DAE pretrain gen} back into Theorem~\ref{thm:main generalization}  as well as above bound will complete the proof of Theorem~\ref{thm:DAE main}.
\end{proof}

\section{Proof of Pre-training with Masked Autoencoder with Tranformer Models} \label{app:mae}

We turn to proving the generalization of pretraining with  masked autoencoder (MAE) with tranformer models (Section~\ref{sec:mask}). 

Recall, in MAE pre-training for vision tasks as an example,  we draw a large set of images $\mZ_1,...,\mZ_N \in \R^{K\times d}$, and then randomly mask some patches of each image to get $\tilde{\mZ}_1,...,\tilde{\mZ}_N\in \R^{K\times d}$. Then an encoder-decoder model is trained by recovering the missing  patches (e.g., by utilizing MSE loss $ \ell( \hat\mZ,\mZ) =  \norm{\hat\mZ - \mZ}^2_{\mathrm{F}}$  as pre-training loss).


We will consider $L$-layer transformer as the pre-train encoder model, a single self-attention layer transformer as the pre-train decoder model, and a linear projection layer for binary classification as fine-tune model.

 \paragraph{Encoder Architecture}
 In a $L$-layer transformer, given a input $\mX$, the $l$th layer's output is define as:
\begin{align*}
\mX^{l} = \begin{cases}
    \mX, &l = 0\\
    \mathrm{SA}_{\mW^l}(\mX^{l-1}), &l = [L],  
\end{cases}
\end{align*}
where $\mathrm{SA}_{\mW^l}(\cdot)$ is the $l$-layer self attention module given a collection of weight matrices ${\mW^l} = \pare{\mW^l_{V}, \mW^l_{K}, \mW^l_{Q}, \mW^l_{\mathrm{FC1}}, \mW^l_{\mathrm{FC2}} } \in \R^{d\times d} \times \R^{d\times d_K} \times \R^{d\times d_K} \times \R^{d \times m} \times \R^{m\times d}$   defined as:
\begin{align*}
    \mathrm{SA}_{\mW^l}(\mX^{l-1}) &= \alpha_2\sigma\pare{\mZ^l\mW^l_{\mathrm{FC1}}}\mW^l_{\mathrm{FC2}}+\mZ^l,\\
    \mZ^l &= \pare{\alpha_1\mA^l+\mX^{l-1}}, \\
    \mA^l &= \mathrm{softmax}\pare{\frac{1}{\sqrt{d_K}} \mX \mW^l_K (\mX\mW^l_Q)^\top } \mX \mW^l_V,
    \end{align*}
where $\alpha_1, \alpha_2$ are some small constant, as used in practice~\citep{noci2022signal}.
We use the $L$th layer's output as the final output of encoder, i.e., $h(\mX) = \mX^L$. 
\begin{align*}
   \text{\sffamily{encoder:}} & \quad h(\mX) = \mX^L.
\end{align*}

The hypothesis class of encoder is defined as:
\begin{align}
    \cH = \cbr{
    \begin{aligned}
    \mX \mapsto \mathrm{SA}_{\mW^L}&\pare{\mathrm{SA}_{\mW^{L-1}}...\mathrm{SA}_{\mW^{1}}(\mX)}: \\
    &\norm{\mW^l_{\mathrm{FC1}} },\norm{\mW^l_{\mathrm{FC2}} },\norm{\mW^l_{K} },\norm{\mW^l_{Q} },\norm{\mW^l_{V} } \leq W(l),\\ &\norm{\mW^l_{\mathrm{FC1}}}_{2,1},\norm{\mW^l_{\mathrm{FC2}}}_{2,1},\norm{\mW^l_{K}}_{2,1},\norm{\mW^l_{Q}}_{2,1},\norm{\mW^l_{V}}_{2,1} \leq B(l), \forall l \in [L]
    \end{aligned}
    }. \label{eq:transformer class}
\end{align}

 \paragraph{Decoder Architecture}
 When encoder finished processing masked sequence, we will send the encoder output $h(\tilde \mZ)$ to decoder. The decoder is a simple linear projection layer:
    \begin{align*}  
   \text{\sffamily{decoder:}} & \quad g(h(\tilde \mZ)) =  h(\tilde \mZ) \mW^D,  
\end{align*}

To learn the representation model,
we  solve the following\footnote{In some implementation of MAE pre-training, the MSE loss is not computed on full patches, but only the masked patches. It can be adapted by changing our objective to $\frac{1}{2} \sum_{i=1}^N \norm{\mA \odot (g(h(\widetilde \mZ_i))  - \mZ_i)}^2_{\mathrm{F}}$ where $\mA \in \R^{K \times d}$ is the indicator matrix with $j$ row to be $\mathbf 1$ if $j$th patch is masked, otherwise $\mathbf 0$. This adaptation will not affect our analysis significantly.}:{ 
\begin{align}
   \min_{g\in\mathcal{G}, h\in\cH } \cL_{\widehat \cU}(g\circ h)  := \frac{1}{2} \sum_{i=1}^N \norm{g(h(\widetilde \mZ_i))  - \mZ_i}^2_{\mathrm{F}}, \label{eq:mask pretrain obj}
\end{align}} 
to get representation $\hat{h}$. 

Then, in the fine-tuning stage for a binary classification tasks with labels $y_i \in \{-1,+1\}$, we consider a linear model parameterized by $\btheta$
   \begin{align*}  
   \text{\sffamily{downstream model:}} & \quad f(h( \mX)) = \mathbf{1}^\top \hat{h} (\mX_i)\btheta,  
\end{align*}

with classification loss $\phi(\cdot,\cdot)$ and optimize:
\begin{align*}
   \min_{\norm{\btheta}_2 \leq R} \cR_{\widehat{\cT}}(\btheta\circ \hat{h}(\mX)) = \frac{1}{n}\sum_{i=1}^n \phi(  \mathbf{1}^\top \hat{h} (\mX_i)\btheta,y_i),
\end{align*}
to get $\hat{f}$ (or $\hat\btheta$ in this setting), the aggregated patch over all patches is used for linear projection in classification task.

\noindent\textbf{Roadmap.}~We will provide proof of Theorem~\ref{thm: MAE main} in the following subsections. The roadmap is that in Appendix~\ref{app:proof MAE transfer} we first show the MAE pre-training admits bounded representation transferrability to downstream task (Lemma~\ref{lem:MAE transfer}), and then  in Appendix~\ref{app:proof MAE generalization pre-train} we prove the generalization of MAE pre-training task (Lemma~\ref{lem:MAE gen}). The heart of the proof in this part is to derive worst case covering number of transformer class. 
Finally in Appendix~\ref{app:proof MAE main} we conclude the proof for Theorem~\ref{thm: MAE main} by showing that the representation-induced Rademacher complexity is bounded.

\subsection{Proof of Task Transferability of MAE}\label{app:proof MAE transfer}
Similar to proof of DAE transferability, we define the following quantity:
\begin{align*}
   & \Delta^{ft}_{\cU}(\hat{h}, h^*_{\cU}) = \min_{\norm{\btheta}\leq R} \E_{(\tilde\mZ ,\mZ)\sim \cU }[\phi(\btheta^\top ( \mathbf{1}^\top \hat{h}(\tilde\mZ)   )^\top  )]-\min_{\norm{\widetilde\btheta}\leq   R} \E_{(\tilde\mZ ,\mZ)\sim \cU }[\phi(\tilde{\btheta}^\top ( \mathbf{1}^\top  {h}^*_{\cU}(\tilde\mZ)   )^\top  )],\\
   & \Delta^{pt}_{\cU} (\hat{h}, h^*_{\cU}) = \min_{\mW^D \in \R } \E_{(\tilde\mZ ,\mZ)\sim \cU }\norm{   \hat h(\tilde\mZ)\mW^D  - \mZ  }_{\mathrm{F}}^2   - \E_{(\tilde\mZ ,\mZ)\sim \cU }\norm{ h^*_{\cU}(\tilde\mZ) \mW^{D*} - \mZ }_{\mathrm{F}}^2
\end{align*} 
where $\mathbf{1} = [1,1,1,...]\in\R^K$.

\paragraph{Upper bounding $\Delta^{ft}_{\cU}(\hat{h}, h^*_{\cU})$}
We examine $\Delta^{ft}_{\cU}(\hat{h}, h^*_{\cU})$ first. Similar to DAE proof, We define the optimal head for classification task on distribution $\cU_{\cX}$ under representation $h^*_{\cU}$ as $\tilde\btheta^* = \arg\min_{\norm{\tilde\btheta}\leq  R} \E_{\tilde\mZ  \sim \cU_{\cX}}[\phi(\tilde{\btheta}^\top   (\mathbf{1}^\top   {h}^*_{\cU} (\tilde\mZ )  )   )]$.
\begin{align*}
    &\Delta^{ft}_{\cU}(\hat{h}, h^*_{\cU}) =  \min_{\norm{\btheta}\leq R} \E_{(\tilde\mZ ,\mZ)\sim \cU }[\phi(\btheta^\top (\mathbf{1}^\top \hat{h}(\tilde\mZ) )^\top  )]-  \E_{(\tilde\mZ ,\mZ)\sim \cU }[\phi({\tilde \btheta }^{\star \top}     (\mathbf{1}^\top{h}^*_{\cU} (\tilde\mZ)  )^\top   )]\\
    &\leq  \min_{\norm{\btheta}\leq R} G_\phi \E_{(\tilde\mZ ,\mZ)\sim \cU }| (\btheta^\top  (\mathbf{1}^\top \hat{h}(\tilde\mZ) )^\top   ) -    ({\tilde \btheta }^{\star \top}    (\mathbf{1}^\top h^*_{\cU}(\tilde\mZ) )^\top    )|\\
    &\leq  \min_{\norm{\btheta}\leq R} G_\phi \sqrt{\E_{(\tilde\mZ ,\mZ)\sim \cU }\pare{ \btheta^\top (\mathbf{1}^\top \hat{h}(\tilde\mZ) )^\top   -    {\tilde \btheta }^{\star \top}    (\mathbf{1}^\top {h}^*_{\cU}(\tilde\mZ) )^\top     }^2}\\
    &=  \min_{\norm{\btheta}\leq R} \\
    &\quad  G_\phi\sqrt{   \btheta^\top \E\left[ (\mathbf{1}^\top \hat{h}(\tilde\mZ) )^\top (\mathbf{1}^\top \hat{h}(\tilde\mZ) )\right] \btheta   - 2 \btheta^\top  \E\left[(\mathbf{1}^\top \hat{h}(\tilde\mZ) )^\top (\mathbf{1}^\top  {h}^*_{\cU}(\tilde\mZ) ) \right]\tilde{\btheta}^\top       +  {\tilde \btheta }^{\star \top} \E\left[(\mathbf{1}^\top  {h}^*_{\cU}(\tilde\mZ) )^\top(\mathbf{1}^\top \hat{h}^*_{\cU}(\tilde\mZ) ) \right] \tilde{\btheta}  }
\end{align*}
Since $\sqrt{f(x)}$ and $f(x)$ attain the minimum at the same point, we examine the minimum of above statement over $ \btheta$ without square root. Under unconstrained setting, the minimum of above statement is $  {\tilde \btheta }^{\star \top} \Lambda\pare{\mathbf{1}^{\top} \hat{h}(\tilde \mZ),\mathbf{1}^{ \top} h^*_{\cU}(\tilde \mZ)}   \tilde\btheta^*$ when $ \btheta^* = \pare{\E\left[ (\mathbf{1}^\top \hat{h}(\mX) )^\top (\mathbf{1}^\top \hat{h}(\mX) )\right]}^{\dagger}\E\left[(\mathbf{1}^\top \hat{h}(\mX) )^\top (\mathbf{1}^\top  {h}^*_{\cU}(\mX) )\right] \tilde\btheta^*$.
Hence we have 
\begin{align*}
    \Delta^{ft}_{\cU}(\hat{h}, h^*_{\cU})  &\leq    \sqrt{{\tilde \btheta }^{\star \top} \Lambda\pare{\mathbf{1}^{\top} \hat{h}(\tilde \mZ),\mathbf{1}^{ \top} h^*_{\cU}(\tilde \mZ)} \tilde\btheta^*  } = \sqrt{\trace(\Lambda\pare{\mathbf{1}^{\top} \hat{h}(\tilde \mZ),\mathbf{1}^{ \top} h^*_{\cU}(\tilde \mZ)}\tilde\btheta^*{\tilde \btheta }^{\star \top} )}\\
    &\leq  \sqrt{ d\sigma_{\max}(\Lambda\pare{\mathbf{1}^{\top} \hat{h}(\tilde \mZ),\mathbf{1}^{ \top} h^*_{\cU}(\tilde \mZ)}) \sigma_{\max}(\tilde\btheta^*{\tilde \btheta }^{\star \top}) },
\end{align*}
where 
\begin{align*}
    &\Lambda\pare{\mathbf{1}^{\top} \hat{h}(\tilde \mZ),\mathbf{1}^{ \top} h^*_{\cU}(\tilde \mZ)} \\
    &=  \E\left[(\mathbf{1}^\top  {h}^*_{\cU}(\tilde \mZ) )^\top(\mathbf{1}^\top  {h}^*_{\cU}(\tilde \mZ) )\right] -\E\left[(\mathbf{1}^\top \hat{h}(\tilde \mZ) )^\top (\mathbf{1}^\top  {h}^*_{\cU}(\tilde \mZ) ) \right]\pare{\E\left[(\mathbf{1}^\top \hat{h}(\tilde \mZ) )^\top (\mathbf{1}^\top \hat{h}(\tilde \mZ) )\right] }^{\dagger}\E\left[(\mathbf{1}^\top \hat{h}(\tilde \mZ) )^\top (\mathbf{1}^\top  {h}^*_{\cU}(\tilde \mZ) )  \right] . 
\end{align*}

At last, by choosing a properly large $R$, we can guarantee the optimum can be attained.

\paragraph{Lower bounding $\Delta^{pt}_{\cU}(\hat{h}, h^*_{\cU})$}
 
Similar to CE proof, we have:

\begin{align*}
    & \Delta^{pt}_{\cU} (\hat{h}, h^*_{\cU})\\
   &= \min_{\mW^D \in \R^{d\times d} } \E_{(\tilde\mZ ,\mZ)\sim \cU }\norm{ \hat h(\widetilde \mZ) \mW^D  -\mZ }_{\mathrm{F}}^2  - \E_{(\tilde\mZ ,\mZ)\sim \cU }\norm{   h^*_{\cU}(\widetilde \mZ) {\mW^D }^*  -\mZ  }_{\mathrm{F}}^2\\ 
    &= \min_{\mW^D \in \R^{d\times d} } \E_{(\tilde\mZ ,\mZ)\sim \cU }\norm{ \hat h(\widetilde \mZ) \mW^D   -   h^*_{\cU}(\widetilde \mZ) {\mW^D }^*   }_{\mathrm{F}}^2\\  
    & \geq \sum_{i=1}^d \min_{\bw_r  \in \R^d }\E_{(\tilde\mZ ,\mZ)\sim \cU }\norm{   \hat h(\widetilde \mZ) \bw_r  -h^*_{\cU}(\widetilde \mZ) \bw_r^*}^2\\
   & = \sum_{i=1}^d \min_{\bw_r  \in \R^d }\E_{(\tilde\mZ ,\mZ)\sim \cU } \pare{\bw_r^\top \hat h(\widetilde \mZ)^\top  \hat h(\widetilde \mZ) \bw_r  -2\bw_r^\top \hat h(\widetilde \mZ)^\top h^*_{\cU}(\widetilde \mZ) \bw_r^* + {\bw_r^*}^\top h^*_{\cU}(\widetilde \mZ)^\top h^*_{\cU}(\widetilde \mZ) \bw_r^*}
\end{align*}

where the second step is due to our realizability Assumption~\ref{assumption: realizable},  $\bw_r \in \R^d$ represents $r$th colum of $\mW^D$ and so is $\bw_r^*\in \R^d$ represents $r$th colum of ${\mW^D }^*$.

 
Similar to Context Encoder proof, we define Schur complement as:
 
\begin{align*}
    &\Lambda\pare{ \hat{h}(\tilde \mZ), h^*_{\cU}(\tilde \mZ)} = \E\left[( h^*_{\cU}( \tilde\mZ) )^\top h^*_{\cU}( \tilde \mZ)\right]   -\E\left[({h}^*_{\cU}(\tilde\mZ))^\top (\hat h( \tilde\mZ) ) \right]\pare{\E\left[( h^*_{\cU}( \tilde\mZ) )^\top{{h}^*_{\cU}}^\top(\tilde\mZ) \right] }^{\dagger}\E\left[(\hat h( \tilde\mZ) )^\top{{h}^*_{\cU}}(\tilde\mZ) \right] . 
\end{align*}
 By computing the closed form solution of quadratic form we arrived at:
\begin{align*}
       \Delta^{pt}_{\cU} (\hat{h}, h^*_{\cU})    & \geq       \sum_{r=1}^d   \bw_r^{*\top} \Lambda\pare{  \hat{h} (\tilde \mZ),  h^* (\tilde \mZ)} \bw_r^*   \\ 
     & \geq  \textrm{tr}\pare{ \Lambda\left(  \hat{h} (\tilde \mZ), h^* (\tilde \mZ)\right)    \sum_{j=1}^d\bw_j^* \bw_j^{*\top}}  \\
     &\geq   \sigma_{\max}\pare{\Lambda\pare{ \hat{h} (\tilde \mZ),  h^* (\tilde \mZ)} } \sigma_{\min}\pare{ \sum_{j=1}^d\bw_j^*\bw_j^{*\top}} . 
\end{align*}
 Recall that 
 \begin{align*}
     \Delta^{ft}_{\cU} (\hat{h}, h^*_{\cU}) \leq \sqrt{d \sigma_{\max}\pare{\Lambda\pare{\mathbf{1}^{\top} \hat{h}(\tilde \mZ),\mathbf{1}^{ \top} h^*_{\cU}(\tilde \mZ)}} \sigma_{\max}(\tilde\btheta^*{\tilde \btheta }^{\star \top}) }.
 \end{align*}
Hence, we can conclude that
\begin{align*}
   \frac{\Delta^{ft}_{\cU} (\hat{h}, h^*_{\cU}) }{\pare{\Delta^{pt}_{\cU} (\hat{h}, h^*_{\cU}) }^{1/2}    }  \leq O\pare{\frac{ \sqrt{  \sigma_{\max}\pare{\Lambda\pare{\mathbf{1}^{\top} \hat{h}(\tilde \mZ),\mathbf{1}^{ \top} h^*_{\cU}(\tilde \mZ)}}  }}{\sqrt{  \sigma_{\max}\pare{\Lambda\pare{  \hat{h} (\tilde \mZ),  h^*_{\cU} (\tilde \mZ)} } } } \sqrt{\frac{d\sigma_{\max}(\tilde\btheta^*{\tilde \btheta }^{\star \top})}{\sigma_{\min}\pare{ \sum_{j=1}^d\bw_j^*\bw_j^{*\top}}}}},
\end{align*}
which indicates that MAE pre-training admits an
$$\pare{\Omega\pare{\frac{ \sqrt{  \sigma_{\max}\pare{\Lambda\pare{\mathbf{1}^{\top} \hat{h}(\tilde \mZ),\mathbf{1}^{ \top} h^*_{\cU}(\tilde \mZ)}}  }}{\sqrt{  \sigma_{\max}\pare{\Lambda\pare{  \hat{h} (\tilde \mZ),  h^*_{\cU} (\tilde \mZ)} } } } \sqrt{\frac{d\sigma_{\max}(\tilde\btheta^*{\tilde \btheta }^{\star \top})}{\sigma_{\min}\pare{ \sum_{j=1}^d\bw_j^*\bw_j^{*\top}}}}},\frac{1}{2}}$$

representation transferrability to binary classification task. Notice that the transfer constant $C_{\beta}$ mainly depends on the Schur complement of $\hat{h} (\tilde \mZ),  h^*_{\cU} (\tilde \mZ)$, and $\mathbf{1}^{\top} \hat{h}(\tilde \mZ),\mathbf{1}^{ \top} h^*_{\cU}(\tilde \mZ)$. In the main paper Lemma~\ref{lem:MAE transfer} we omit this constant dependency.

\subsection{Proof of Generalization of MAE Pre-training Task}\label{app:proof MAE generalization pre-train}
In this section we are going to derive generalization bound of the masking pre-training with Transformer. 
In pursuit of optimal generalization bound of pretraining task, i.e., regression with deep transformer, we again need to employ the framework we introduced in CE analysis (Appendix~\ref{app:proof generalization DAE}). Hence, we need to upper bound the worst case $L_2$ covering number of deep transformer class.  The following result establishes the worst case spectral covering number of $L$-layer self-attention transformer defined in~\ref{eq:transformer class}.

\begin{lemma}[Covering number of transformer class]\label{lem: transformer cover}
Let $\mX_{[N]} = [\mX_1;...\mX_N] \in \R^{NK\times d}$ denotes the concatenated data matrix. Then the worst case covering number of $L$-layer transformer class $\cH$ defined in~\ref{eq:transformer class} is bounded as follows:
\begin{align*}
    \ln \cN_\infty(\cH(\cS), \epsilon, \norm{\cdot} ) \leq    O\pare{ s^2_{L}\norm{\mX_{[N]}}^2 \sum_{l=1}^L \frac{\rho_l  }{\epsilon^2}},
\end{align*}
where  
\begin{align*}
s_l &:= \prod_{j=1}^l \pare{\alpha_2  W^2(j) +1} \pare{ W^2(j)\alpha_1  K   + 1  },\\
  \rho_{l}&:=   O\pare{ {\alpha_1^2(\alpha_2W^2(l) +  1)^2  B^2(l) } \ln(2d^2) \pare{K^2 + \frac{\alpha_1W^2(l)\pare{s_{l-1}\norm{\mX_* }  }^2}{d_K}  }  } \\
        &\quad+ O\pare{ {\alpha_2^2W^2(l) B^2(l) (W^2(l) + \alpha_1^2 K^2W^2(l) )}   \ln (2dm)},\\
       \norm{\mX_*}  &:= \max_{i\in[N]}\norm{\mX_i}.
\end{align*}
\end{lemma}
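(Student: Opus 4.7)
\textbf{Proof plan for Lemma~\ref{lem: transformer cover}.} The plan is to use a layer-by-layer perturbation argument combined with Maurey-style coverings for matrices with joint spectral and $(2,1)$ norm bounds, in the same spirit as the covering number analysis of~\citet{bartlett2017spectrally} but extended to handle the softmax self-attention and residual connections. Concretely, I will (i) establish a forward norm bound showing $\norm{\mX^l_i}\le s_l\,\norm{\mX_i}$ for every input index $i$ (and hence $\norm{\mX^l_{[N]}}\le s_l\,\norm{\mX_{[N]}}$ after stacking), by peeling the residual and using sub-multiplicativity together with the fact that row-stochastic (softmax) matrices have spectral norm at most $\sqrt{K}$ when acting on $K$-row blocks; (ii) prove a sensitivity inequality quantifying how a perturbation $\Delta \mW^l = (\Delta\mW^l_V,\Delta\mW^l_Q,\Delta\mW^l_K,\Delta\mW^l_{\mathrm{FC1}},\Delta\mW^l_{\mathrm{FC2}})$ at layer $l$ propagates to the worst-case output; and (iii) invoke a Maurey-type covering for each of the five weight matrices and sum.

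For step (ii), writing $\mathrm{SA}_{\mW^l}(\cdot)$ as the composition of an attention block with residual and an MLP block with residual, I would separately bound the two Jacobian-free sensitivities. For the attention block, the softmax satisfies a standard $1$-Lipschitz bound in $\|\cdot\|_1$ per row, which after a Cauchy--Schwarz / Frobenius conversion yields a perturbation of order $\alpha_1\bigl(K\cdot\|\Delta \mW^l_V\|+ (W(l)\|\mX^{l-1}_i\|/\sqrt{d_K})\cdot(\|\Delta \mW^l_K\|+\|\Delta \mW^l_Q\|)\bigr)$ times the input norm; the extra $K^2$ and $W^2(l)\|\mX_*\|^2/d_K$ factors visible in $\rho_l$ come from squaring this bound. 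For the MLP with ReLU-type $\sigma$, the perturbation is $\alpha_2(\|\Delta \mW^l_{\mathrm{FC1}}\|+\|\Delta \mW^l_{\mathrm{FC2}}\|)$ times the layer-input norm. Propagating these through subsequent layers, each downstream layer multiplies by at most $(\alpha_2 W^2(j)+1)(\alpha_1 W^2(j) K + 1)$, i.e.\ by the per-layer factor appearing in $s_l$; since the residual connection also contributes $+1$, I would show that the effective propagation constant up to layer $L$ from layer $l$ is bounded by $s_L/s_l$. Summing the per-layer contributions and using $s_l\le s_L$ gives the claimed $s_L^2$ prefactor.

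For step (iii), I would cover each weight matrix $\mW$ with $\norm{\mW}\le W(l)$ and $\norm{\mW}_{2,1}\le B(l)$ at scale $\epsilon_l$ in operator norm on the stacked input $s_{l-1}\mX_{[N]}$. The Maurey sparsification lemma (as used in~\citep{bartlett2017spectrally}) gives $\ln\cN(\cdot,\epsilon_l,\|\cdot\|)\lesssim \tfrac{s_{l-1}^2\|\mX_{[N]}\|^2\,B^2(l)\ln(2 d')}{\epsilon_l^2}$, where $d'=d^2$ for the $d\times d$ attention matrices and $d'=dm$ for the FC matrices. Choosing $\epsilon_l$ proportional to $\epsilon/\sqrt{\rho_l s_L^2\|\mX_{[N]}\|^2}$ and using the $L_\infty$-vs-spectral inequality~(\ref{eq:covering fact}) from the CE proof (so that $L_\infty$ coverings on individual samples are controlled by $L_2$ coverings on the stacked matrix), the total log-covering number sums to $O\!\bigl(s_L^2\|\mX_{[N]}\|^2 \sum_{l=1}^{L}\rho_l/\epsilon^2\bigr)$.

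The main obstacle I expect is step (ii), specifically obtaining a clean perturbation bound for the attention block that is linear in each of $\Delta\mW^l_V,\Delta\mW^l_Q,\Delta\mW^l_K$ while avoiding dependence on $N$. The $N$-independence is crucial and requires stacking the $N$ input matrices and controlling everything through $\|\mX_{[N]}\|$ (rather than summing $\|\mX_i\|$), which is only possible because softmax is applied per-block and preserves the block-diagonal structure; care is needed because $\mW_Q$ and $\mW_K$ enter the softmax argument multiplicatively, so a first-order expansion in one weight with the other held at its nominal value (plus a triangle inequality on the swap) is the natural device. Once that inequality is in hand, the remainder of the proof is bookkeeping of constants, which explains the precise form of $\rho_l$ in the statement.
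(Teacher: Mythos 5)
Your plan matches the paper's proof in all essential respects: the same forward norm bound $\norm{\mX^l_{[N]}}\le s_l\norm{\mX_{[N]}}$, the same per-matrix Maurey coverings from \citet{bartlett2017spectrally} applied to the stacked data matrix to avoid any $N$ dependence, the same triangle-inequality treatment of the $\mW_K$/$\mW_Q$ product inside the softmax, and the same layer-by-layer propagation of cover radii via the per-layer contraction factor $(\alpha_2 W^2(j)+1)(\alpha_1 K W^2(j)+1)$, followed by the $L_\infty$-versus-spectral covering inequality. The only differences are bookkeeping (you allocate $\epsilon_l$ proportionally to $\sqrt{\rho_l}$ whereas the paper splits uniformly over layers, and you phrase the layer step as a weight-sensitivity bound while the paper builds unions of covers over covered intermediate outputs), neither of which changes the argument or the resulting bound.
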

Roughly speaking, $\rho_l$ is the price for covering the parameter of $l$th self-attention layer, and extending the cover to the whole model yields the sum over $l$.  Notice that to ensure a $L_{\infty}$ cover, it suffices to ensure that $ {\sum_{i=1}^N \norm{h(\mX_i) - h_{\epsilon}(\mX_i)}^2} \leq \epsilon^2$. However, if we trivially cover each individual loss $\norm{h(\mX_i) - h_{\epsilon}(\mX_i)}^2$ with $\epsilon^2/N$ radius, the final covering number will be $N$ times larger, which make the later generalization bound vacuous, i.e., greater than 1. To avoid this $N$ factor, we directly consider the cover over concatenated data matrix $\mX_{[N]}$, and consider the covering $\norm{\hat h(\mX_{[N]}) - h(\mX_{[N]})}^2 \leq \epsilon^2 $. Using the fact that matrix covering bound is independent of dimension of $\mX_{[N]}$, but only depends the spectral norm of $\mX_{[N]}$, the final covering number will only have logrithmic dependency on $N$.

To prove Lemma~\ref{lem: transformer cover}, first we introduce the following matrix covering number bound from~\citep{bartlett2017spectrally}.
\begin{lemma}{\cite[Lemma 3.2]{bartlett2017spectrally}}
  \label{fact:matrix_l21_covering}
  Let conjugate exponents $(p,q)$ and $(r,s)$ be given with
  $p \leq 2$,
  as well as positive reals $(a,b,\epsilon)$ and positive integer $m$.
      Let matrix $\mX \in \R^{NK \times d}$ be given with $\|\mX\|_{p} \leq b$.
  Then
    \[
    \ln \cN\pare{\cbr{\mX\mW : \mW\in \R^{d\times m}, \|\mW\|_{q,s}\leq a}, \epsilon, \|\cdot\|_2}
    \leq
    \left\lceil \frac{a^2 b^2 m^{2/r}}{ \epsilon^2}\right \rceil \ln(2dm).
  \]
\end{lemma}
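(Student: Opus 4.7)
The strategy is Maurey-style empirical sparsification, applied column by column to $\mW$, followed by a Lagrangian budget allocation across columns and a counting argument. For each column $\bw_j \in \R^d$ of $\mW$, I would introduce a random atomic decomposition: define a probability distribution $P_j$ supported on the signed scaled basis atoms $\{\pm a_j \mathbf{e}_i\}_{i=1}^d$ (with $a_j := \|\bw_j\|_q$) whose mean equals $\bw_j$. Concretely, put probability $|w_{ij}|^q / a_j^q$ on the atom $a_j^{1-q}\,|w_{ij}|^{q-1}\operatorname{sign}(w_{ij})\,\mathbf{e}_i$, scaled so that expectation recovers $\bw_j$. Draw $k_j$ i.i.d.\ samples from $P_j$ and let $\hat{\bw}_j$ be their average.

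The second step is a variance computation in the Euclidean norm $\|\mX \cdot\|_2$ (which has type-$2$ constant $1$):
\begin{align*}
\mathbb{E}\,\|\mX \bw_j - \mX \hat{\bw}_j\|_2^{2} \;\le\; \tfrac{1}{k_j}\,\mathbb{E}_{\bv\sim P_j}\,\|\mX \bv\|_2^{2}.
\end{align*}
Hölder's inequality with the conjugate pair $(p,q)$ applied to the coordinates—using $\|\mX\|_p\le b$ interpreted as the $\ell_p$-norm of column $\ell_2$-norms—yields $\mathbb{E}_{\bv\sim P_j}\|\mX \bv\|_2^{2}\le a_j^{2}b^{2}$. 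Thus there exists a realization with $\|\mX \bw_j - \mX \hat{\bw}_j\|_2 \le a_j b / \sqrt{k_j}$. Summing squared errors across the $m$ columns, the induced Frobenius approximation error satisfies $\|\mX\mW - \mX\hat{\mW}\|_2^{2}\le b^{2}\sum_j a_j^{2}/k_j$.

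The third step is to choose $\{k_j\}$ to minimize the total sparsification budget $k = \sum_j k_j$ subject to $b^{2}\sum_j a_j^{2}/k_j \le \epsilon^{2}$. Lagrange multipliers give $k_j \propto a_j$, whence $k \le b^{2}(\sum_j a_j)^{2}/\epsilon^{2}$. I would then use Hölder in the conjugate pair $(r,s)$: $\sum_j a_j \le m^{1/r}(\sum_j a_j^{s})^{1/s} \le m^{1/r} a$, so $k \le \lceil a^{2}b^{2}m^{2/r}/\epsilon^{2}\rceil$. Finally count: each of the $k$ drawn atoms is one of at most $2dm$ possibilities (sign $\times$ basis index $\times$ column index), hence the number of realizable $\hat\mW$ is bounded by $(2dm)^{k}$, giving the stated $\ln(2dm)$ factor.

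The main obstacle is constructing the atomic distribution so that both the mean is exactly $\bw_j$ and the second moment scales like $a_j^{2}b^{2}$: this hinges on the duality between the exponents $p,q$ and the interpretation of $\|\mX\|_p$ as an $\ell_p$-norm of per-column Euclidean norms, and requires the atom weights $|w_{ij}|^{q-1}$ chosen to match Hölder's equality case. The outer Lagrangian step and the Hölder estimate $\sum_j a_j \le m^{1/r}a$ are routine once that atomic decomposition is fixed; the counting step is immediate because the empirical average is invariant under reordering of the $k$ draws, and the cruder $(2dm)^{k}$ bound already suffices.
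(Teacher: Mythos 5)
The paper does not prove this lemma at all; it is imported verbatim as Lemma~3.2 of \citet{bartlett2017spectrally}, whose original argument is indeed a Maurey sparsification, so your overall strategy is the right one. However, as written your construction has a genuine gap at the counting step. In a covering-number argument the approximants must range over a \emph{fixed} finite set, determined only by $\mX$, $a$, $b$, $\epsilon$ (not by the particular $\mW$ being approximated). Your atoms have magnitudes tailored to $\mW$ (through $a_j=\|\bw_j\|_q$ and the weights $|w_{ij}|^{q-1}$), and your per-column budgets $k_j$ also depend on the $a_j$'s. Consequently two different matrices $\mW,\mW'$ with the same multiset of (sign, basis index, column index) draws generally produce different approximants $\hat\mW$, so the set of realizable approximants over the whole class is infinite and the claim ``at most $(2dm)^k$ possibilities'' does not bound any covering number. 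The fix, which is exactly what Bartlett--Foster--Telgarsky do, is to sparsify with a single fixed atom set, e.g.\ $\pm a\,b\,m^{1/r}\,\mX\mathbf{e}_i\mathbf{e}_j^\top/\|\mX\mathbf{e}_i\|_2$, writing $\mX\mW$ as a sub-convex combination of these atoms with total weight at most $1$ (this is where the double H\"older step $\sum_{i,j}|W_{ij}|\,\|\mX\mathbf{e}_i\|_2\le a\,b\,m^{1/r}$ enters); then every approximant is $(1/k)\sum$ of atoms from a set of size $2dm$, and $(2dm)^k$ is a legitimate cover cardinality.

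A secondary problem is your variance bound. With probabilities $p_i=|w_{ij}|^q/a_j^q$ and atoms $c_i\mathbf{e}_i$, $c_i=\mathrm{sign}(w_{ij})\,a_j^q|w_{ij}|^{1-q}$ (note your stated exponents are inverted, but take the mean-matching choice), one gets $\mathbb{E}\|\mX\bv\|_2^2=a_j^q\sum_i|w_{ij}|^{2-q}\|\mX\mathbf{e}_i\|_2^2$, which for $p<2$ (hence $q>2$) is \emph{not} bounded by $a_j^2b^2$ — it can blow up as some $|w_{ij}|\to 0$. The correct tilting for the second moment is $p_i\propto |w_{ij}|\,\|\mX\mathbf{e}_i\|_2$, which gives $\mathbb{E}\|\mX\bv\|_2^2=\bigl(\sum_i|w_{ij}|\,\|\mX\mathbf{e}_i\|_2\bigr)^2\le a_j^2b^2$ by H\"older with $1/p+1/q=1$. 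But even with this repair you still face the fixed-atom issue above, so the cleanest route is to follow the cited proof: fixed normalized atoms, weights summing to at most one via the two H\"older applications (the $(p,q)$ pair over coordinates and the $(r,s)$ pair with the $m^{1/r}$ factor over columns), Maurey with $k=\lceil a^2b^2m^{2/r}/\epsilon^2\rceil$, and the $(2dm)^k$ count.
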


\begin{lemma}[Covering number of attention matrix] \label{lem: softmax covering}
Given a set of data $\cS = \cbr{\mX_1,...,\mX_N}$ and the attention matrix class:
$$\cH_S (\cS) = \cbr{
         \begin{aligned}
        \mS= \begin{bmatrix}
            \mS_1, \mathbf{0},...,\mathbf{0},\\
            \ddots\\
            \mathbf{0},...,\mathbf{0}, \mS_N
        \end{bmatrix}: \mS_i = \mathrm{softmax}\pare{\frac{1}{\sqrt{d_K}} \mX_i \mW_K (\mX_i\mW_Q)^\top }  :
         &  \norm{\mW_{K}},\norm{\mW_{Q}} \leq W,\\ & \norm{\mW_{K}}_{2,1},\norm{\mW_{Q}}_{2,1} \leq B,\\
          \end{aligned}
         }$$
the following covering number bound holds true:
    \begin{align*}
        \ln\cN(\cH_S (\cS), \epsilon, \norm{\cdot}) \leq   O\pare{ \frac{KW^2 B^2\norm{\mX_*}^4}{d_K\epsilon^2}   \ln (2d^2)}.
    \end{align*}
\begin{proof}
     we define set $\cK = \cbr{ \mX \mW_K: \norm{\mW_K} \leq W, \norm{\mW_K}_{2,1} \leq B }$, $\cQ = \cbr{ \mX \mW_Q: \norm{\mW_Q} \leq W, \norm{\mW_Q}_{2,1} \leq B }$. We define $\epsilon_K$ cover of $\cK$ as $\cC_K$, and $\epsilon_Q$ cover of $\cQ$ as $\cC_Q$. We construct the following set:
     \begin{align*}
         \cC_S =  \cbr{
         \begin{aligned}
        \mS= \begin{bmatrix}
            \mS_1, \mathbf{0},...,\mathbf{0},\\
            \ddots\\
            \mathbf{0},...,\mathbf{0}, \mS_N
        \end{bmatrix}: \mS_i = \mathrm{softmax}\pare{\frac{1}{\sqrt{d_K}} \mX_i \mW_K (\mX_i\mW_Q)^\top }  :
        \mW_K \in \cC_K, \mW_Q \in \cC_Q
          \end{aligned}
         }
     \end{align*}

     Next we will show that $\cC_S$ is a cover of $\cH_S(\cS)$ with some radius. For any $\mS_{[N]} \in \cH_S$, we can find $\hat{\mS}_{[N]} \in \cC_S$ such that:
     \begin{align*}
         \norm{\mS_{[N]} - \hat{\mS}_{[N]}} &\leq \max_{i\in[N]}  \norm{\mS_{i} - \hat{\mS}_{i}}\\
         &= \max_{i\in[N]}  \norm{\mathrm{softmax}\pare{\frac{1}{\sqrt{d_K}} \mX_i \mW_K (\mX_i\mW_Q)^\top }  - \mathrm{softmax}\pare{\frac{1}{\sqrt{d_K}} \mX_i \hat\mW_K (\mX_i\hat\mW_Q)^\top } }\\
          &\leq \frac{\sqrt{K}}{\sqrt{d_K}} \max_{i\in[N]}  \norm{   \mX_i \mW_K (\mX_i\mW_Q)^\top    -   \mX_i \hat\mW_K (\mX_i\hat\mW_Q)^\top   }\\
          &\leq \frac{\sqrt{K}}{\sqrt{d_K}} \max_{i\in[N]}  \norm{ (  \mX_i \mW_K    -   \mX_i \hat\mW_K )(\mX_i\mW_Q)^\top   }\\
          &\quad + \frac{\sqrt{K}}{\sqrt{d_K}} \max_{i\in[N]}  \norm{   \mX_i \hat\mW_K  (\mX_i \mW_Q)^\top  -   \mX_i \hat\mW_K (\mX_i\hat\mW_Q)^\top   }\\
          & \leq \frac{W\sqrt{K}}{\sqrt{d_K}} \pare{\epsilon_K +\epsilon_Q} \max_{i\in[N]}\norm{\mX_i},
     \end{align*}
     where the first inequality is due to the property of block diagonal matrices. We define $ \norm{\mX_*} = \max_{i\in[N]}\norm{\mX_i}$.
     To ensure above bound is less than $\epsilon$, we  choose $\epsilon_K = \epsilon_Q =  \frac{\sqrt{d_K}}{2W\sqrt{K} \norm{\mX_*}}\epsilon$. According to Lemma~\ref{fact:matrix_l21_covering}, we know:
     \begin{align*}
         \ln |\cC_S| \leq \ln |\cC_K| + \ln |\cC_Q| \leq O\pare{ \frac{KW^2 B^2\norm{\mX_*}^4}{d_K\epsilon^2}   \ln (2d^2)}.
     \end{align*}
\end{proof}

\end{lemma}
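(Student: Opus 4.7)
The plan is to build a cover of $\cH_S(\cS)$ by separately covering the two weight matrices $\mW_K$ and $\mW_Q$ that parameterize the attention logits, and then to propagate those covers through softmax and the block-diagonal structure. First I would use the fact that for a block-diagonal matrix $\mS = \mathrm{blockdiag}(\mS_1,\dots,\mS_N)$, the spectral norm collapses across blocks, so $\norm{\mS - \hat\mS} = \max_{i\in[N]} \norm{\mS_i - \hat\mS_i}$. This immediately eliminates any dependence on $N$ in the cover radius and reduces the task to uniformly controlling each $\norm{\mS_i - \hat\mS_i}$.

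Next, I would peel off the softmax using its Lipschitzness in spectral norm. Softmax is $1$-Lipschitz on rows in $\ell_2$, so applied row-wise to a $K\times K$ matrix it is $1$-Lipschitz in Frobenius norm; converting back to spectral norm costs a $\sqrt{K}$ factor, giving
\begin{align*}
\norm{\mathrm{softmax}(\mM) - \mathrm{softmax}(\widehat\mM)} \leq \sqrt{K}\,\norm{\mM - \widehat\mM}.
\end{align*}
This reduces the problem to covering the logit matrix $\tfrac{1}{\sqrt{d_K}} \mX_i \mW_K (\mX_i \mW_Q)^\top$ uniformly in $i$. Using the standard add-and-subtract decomposition together with submultiplicativity and the spectral bounds $\norm{\mW_K},\norm{\mW_Q}\le W$, I obtain
\begin{align*}
&\norm{\mX_i \mW_K (\mX_i\mW_Q)^\top - \mX_i \hat\mW_K (\mX_i\hat\mW_Q)^\top}\\
&\qquad \leq W\,\norm{\mX_*}\pare{\norm{\mX_i\mW_K - \mX_i\hat\mW_K} + \norm{\mX_i\mW_Q - \mX_i\hat\mW_Q}}.
\end{align*}

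Now each remaining quantity is the image of a linear map applied to $\mX_i$, which fits Lemma~\ref{fact:matrix_l21_covering} directly once we absorb the spectral bound on $\mX_i$ into the data norm. I would take independent $\epsilon_K$- and $\epsilon_Q$-covers $\cC_K$ and $\cC_Q$ for the linear image classes $\{\mX\mW_K\}$ and $\{\mX\mW_Q\}$, and define $\cC_S$ as the product cover induced by pairs $(\hat\mW_K, \hat\mW_Q)\in \cC_K \times \cC_Q$. Combining the three Lipschitz factors (block-diagonal collapse, $\sqrt{K}/\sqrt{d_K}$ from softmax and scaling, and $W\norm{\mX_*}$ from the product rule), the choice $\epsilon_K=\epsilon_Q \asymp \sqrt{d_K}\,\epsilon/(W\sqrt{K}\,\norm{\mX_*})$ balances the two contributions and guarantees $\norm{\mS - \hat\mS}\leq \epsilon$. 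A union bound then yields $\ln \lvert\cC_S\rvert \leq \ln\lvert\cC_K\rvert + \ln\lvert\cC_Q\rvert$, and plugging in Lemma~\ref{fact:matrix_l21_covering} with the $(2,1)$-norm constraint $B$ produces the desired $O\pare{ K W^2 B^2 \norm{\mX_*}^4 \ln(2d^2) / (d_K\epsilon^2)}$ bound.

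The main technical obstacle I would expect is getting the softmax Lipschitz constant tight — a careless $\ell_\infty$-based bound would lose additional factors of $K$ and ultimately degrade the rate in Lemma~\ref{lem: transformer cover}. A secondary subtlety is ensuring that the block-diagonal reduction genuinely saves the $\sqrt{N}$ factor one would suffer by trying to cover each block to precision $\epsilon/\sqrt{N}$; the key observation is that the covers $\cC_K,\cC_Q$ are constructed \emph{once} (not per block), and the spectral norm of the block-diagonal matrix automatically picks out the worst block, so $\norm{\mX_*}$, rather than any aggregate over $i$, appears in the final rate.
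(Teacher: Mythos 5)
Your proposal is correct and follows essentially the same route as the paper's own proof: collapse the block-diagonal spectral norm to the worst block, peel off softmax at a $\sqrt{K}/\sqrt{d_K}$ Lipschitz cost, split the bilinear logit term with add-and-subtract using $\norm{\mW_K},\norm{\mW_Q}\le W$, and take a product of two covers from Lemma~\ref{fact:matrix_l21_covering} with radii $\epsilon_K=\epsilon_Q\asymp \sqrt{d_K}\,\epsilon/(W\sqrt{K}\norm{\mX_*})$. No substantive differences to flag.
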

 
\begin{proposition}[Covering number of single self-attention layer]
    Consider the following function class of self-attention module:
    \begin{align*}
        \cH_{SA} := \cbr{
         \begin{aligned}
        \mX\mapsto \sigma\pare{\mZ\mW_{\mathrm{FC1}}}\mW_{\mathrm{FC2}}+\mZ:  &\mZ = \pare{\mA+\mX}, \mA = \mathrm{softmax}\pare{\frac{1}{\sqrt{d_K}} \mX \mW_K (\mX\mW_Q)^\top } \mX \mW_V\\
         & \norm{\mW_{\mathrm{FC1}}},\norm{\mW_{\mathrm{FC2}}},\norm{\mW_{K}},\norm{\mW_{Q}},\norm{\mW_{V}} \leq W,\\ &\norm{\mW_{\mathrm{FC1}}}_{2,1},\norm{\mW_{\mathrm{FC2}}}_{2,1},\norm{\mW_{K}}_{2,1},\norm{\mW_{Q}}_{2,1},\norm{\mW_{V}}_{2,1} \leq B,\\
          \end{aligned}
         }
    \end{align*} 
    then the following bound holds for its covering number:
    \begin{align*}
        \ln \cN(\cH_{SA}(\cS) ,\epsilon, \norm{\cdot} )&\leq O\pare{ \frac{(\alpha_1\alpha_2W^2 +  \alpha_1)^2  B^2 \norm{\mX_{[N]}}^2 }{ \epsilon^2 }\ln(2d^2)} \pare{K^2 + \frac{\alpha_1W^2\norm{\mX_*}^2}{d_K}  }\\ 
            &\quad + O\pare{\frac{\alpha_2^2W^2 B^2 (W^2\norm{\mX_{[N]}}^2 + \alpha_1^2 K^2W^2\norm{\mX_{[N]}}^2)}{\epsilon^2 }  \ln (2dm)}  .
    \end{align*}
    \begin{proof}
    Recall that $\mX_{[N]} \in \R^{NK\times d}$ is the concatenated data matrix, and we shall use $h(\mX_{[N]}) \in \R^{NK\times d} $ to denote the concatenated encoder output, i.e., $h(\mX_{[N]}) = [h(\mX_1);...,h(\mX_N)]$.
    Our goal is to find the cardinality of a cover such that for any $h\in\cH$ we can find a $h_{\epsilon} \in \cC_{\mathrm{SA}}$ such that
    \begin{align*}
        \norm{h(\mX_{[N]}) - h_\epsilon(\mX_{[N]})} \leq \epsilon.
    \end{align*}
    \item
    \paragraph{I: Covering number of input layer by value matrix}
        Let $\cC_V$ to be $\epsilon_V$ cover of set $\cH_{V}(\cS) = \cbr{ \mX_{[N]}\mW_V: \norm{\mW_V} \leq W, \norm{\mW_V}_{2,1} \leq B }$, then evoking Lemma~\ref{fact:matrix_l21_covering} we have:
        \begin{align*}
            \ln \cN(\cH_V,\epsilon_V,\norm{\cdot}) \leq O\pare{ \frac{B^2 \norm{\mX_{[N]}}^2 }{ \epsilon^2_V}  \ln(2dm)}.
        \end{align*}
    \item
    \paragraph{II: Covering number of Attention layer}
        Next, consider the set of attention matrix
        $$\cH_S (\cS) = \cbr{
         \begin{aligned}
        \mS= \begin{bmatrix}
            \mS_1, \mathbf{0},...,\mathbf{0},\\
            \ddots\\
            \mathbf{0},...,\mathbf{0}, \mS_N
        \end{bmatrix}: \mS_i = \mathrm{softmax}\pare{\frac{1}{\sqrt{d_K}} \mX_i \mW_K (\mX_i\mW_Q)^\top }  :
         &  \norm{\mW_{K}},\norm{\mW_{Q}} \leq W,\\ & \norm{\mW_{K}}_{2,1},\norm{\mW_{Q}}_{2,1} \leq B,\\
          \end{aligned}
         }$$
        
         From Lemma~\ref{lem: softmax covering} we know its covering number can be bounded as:
        \begin{align*}
            \ln \cN \pare{\cH_S(\cS), \epsilon, \norm{\cdot}} \leq \ln \cN \pare{\cH_{\tilde{S}}, \epsilon_S, \norm{\cdot}}  \leq    O\pare{ \frac{KW^2 B^2\norm{\mX_*}^4}{d_K\epsilon^2_S}   \ln (d^2)}.
        \end{align*}
        Now we can proceed to bounding the covering number of following set:
        
        $$\cH_A(\cS) = \cbr{
         \begin{aligned}    \alpha_1\mathrm{softmax}\pare{\frac{1}{\sqrt{d_K}} \mX_{[N]} \mW_K (\mX_{[N]}\mW_Q)^\top } \mX_{[N]}\mW_V :
         &  \norm{\mW_{K}},\norm{\mW_{Q}},\norm{\mW_{V}} \leq W,\\ & \norm{\mW_{K}}_{2,1},\norm{\mW_{Q}}_{2,1},\norm{\mW_{V}}_{2,1} \leq B,\\
          \end{aligned}
         }$$

         For every element $\hat\mV_{[N]}\in \cC_V$, we construct the set 
         $ \alpha_1\cH_S(\cS)\circ \hat\mV_{[N]}: = \cbr{\alpha_1\mS_{[N]} \hat\mV_{[N]}: \mS_{[N]} \in \cH_S(\cS)}$. Then we define $ \epsilon_A$-covering of $\cH_S\circ \hat\mV$ as $\cC(\cH_S\circ \hat\mV, \epsilon_A, \norm{\cdot})$. To construct $\cH_S\circ \hat\mV$ as $\cC(\cH_S\circ \hat\mV, \epsilon_A, \norm{\cdot})$, we consider $\cC_S$.
         For any $\mS_{[N]} \hat\mV_{[N]} \in \cH_S(\cS)\circ \hat\mV_{[N]}$, we can find $\hat\mS_{[N]} \in \cC_S$, such that
         \begin{align*}
             \norm{\alpha_1\mS_{[N]} \hat\mV_{[N]} - \alpha_1\hat\mS_{[N]} \hat\mV_{[N]}} &\leq \alpha_1\norm{\mS_{[N]}  - \hat\mS_{[N]}}\norm{\hat\mV_{[N]}}\\
             &\leq \alpha_1\epsilon_S \norm{\mX_{[N]}}W.
         \end{align*}
         Setting $\epsilon_S = \frac{\epsilon_A}{\alpha_1\norm{\mX_{[N]}}W}$ we can conclude that $\cC(\cH_S\circ \hat\mV, \epsilon_A, \norm{\cdot})$ actually $\epsilon_A$ covers $\cH_S\circ \hat\mV$ and the following fact holds for the covering  number
         \begin{align*}
          \ln  |\cC( \cH_S(\cS)\circ \hat\mV_{[N]},  \epsilon_A, \norm{\cdot})| \leq \sup_{\hat\mV_{[N]} \in \cC_V}  \ln \cN(  \cH_{S}(\cS)\circ \hat\mV_{[N]},   \epsilon_A, \norm{\cdot})  \leq O\pare{ \frac{\alpha_1^2 K B^2W^4\norm{\mX_*}^4 \norm{\mX_{[N]}}^2}{d_K\epsilon^2_A}   \ln (2d^2)}.
         \end{align*}
         
         Then we construct a cover $\cC_A$ for $\cH_A$ by:
         \begin{align*}
             \cC_A = \bigcup_{\hat\mV_{[N]} \in \cC_V} \cC(\alpha_1 \cH_S(\cS) \circ \hat\mV_{[N]})
         \end{align*}
         It is not hard to verify the cardinality of this cover:
         \begin{align*}
            \ln |\cC_A| &\leq \ln |\cC_V| + \sup_{\hat\mV_{[N]} \in \cC_V} \ln  |\cC( \alpha_1\cH_S(\cS)\circ \hat\mV_{[N]},  \epsilon_A, \norm{\cdot})| \\
           & \leq O\pare{ \frac{B^2 \norm{\mX_{[N]}}^2 }{ \epsilon^2_V}\ln(2dm)} +O\pare{ \frac{\alpha_1^2 K B^2W^4\norm{\mX_*}^4 \norm{\mX_{[N]}}^2}{d_K\epsilon^2_A}   \ln (2d^2)}.
         \end{align*}

        \item
        \paragraph{III: Covering number of fully-connected layer 1}
        By similar reasoning, we can show that the covering number of $$\cH_{\mathrm{FC1}}(\cS) = \cbr{ \mZ_{[N]}\mW_{\mathrm{FC1}}: \mZ_{[N]} = \alpha_1\mA_{[N]}+\mX_{[N]}, \mA_{[N]} \in \cH_A(\cS), \norm{\mW_{\mathrm{FC1}}} \leq W, \norm{\mW_{\mathrm{FC1}}}_{2,1} \leq B  }$$
        
        For every element $\hat \mA_{[N]}  \in \cC_A$, we define set 
        
        $$  \hat\mA_{[N]} \circ \cW_{\mathrm{FC1}} = \cbr{(\alpha_1\hat\mA_{[N]}+\mX_{[N]})\mW_{\mathrm{FC1}}, \norm{\mW_{\mathrm{FC1}}} \leq W, \norm{\mW_{\mathrm{FC1}}}_{2,1} \leq B}$$

        We denote $\epsilon_{\mathrm{FC1}}$-cover of $\hat\mA_{[N]}\circ \cW_{\mathrm{FC1}}$ as $\cC(\hat\mA_{[N]}\circ \cW_{\mathrm FC_1}, \epsilon_{\mathrm{FC1}},\norm{\cdot})$, and the covering number of $\hat\mA_{[N]} \circ \cW_{\mathrm{FC1}}$ is bounded by:
        \begin{align*}
           \ln |\cC(\hat\mA_{[N]}\circ \cW_{\mathrm{FC1}}, \epsilon_{\mathrm{FC1}},\norm{\cdot})| &\leq \sup_{\hat\mA_{[N]} \in \cC_A} \ln \cN(\hat\mA_{[N]}\circ \cW_{\mathrm{FC1}},\epsilon_{\mathrm{FC1}},\norm{\cdot}) \\
            &= \sup_{\hat\mS_{[N]}\in \cC_S}   O\pare{\frac{B^2 (\norm{\mX_{[N]}}^2 + \alpha_1^2W^2\norm{\mX_{[N]}}^2\norm{\hat\mS_{[N]}}^2)}{\epsilon_{\mathrm{FC1}}^2} \ln (dm)}\\
            & \leq   O\pare{\frac{B^2 ( \norm{\mX_{[N]}}^2 + \alpha_1^2 K^2W^2\norm{\mX_{[N]}}^2)}{\epsilon_{\mathrm{FC1}}^2} \ln (dm)}\\
        \end{align*}
        Now, we construct the $\epsilon_{\mathrm{FC1}}$-cover of $\cH_{\mathrm{FC1}}(\cS)$ as
        \begin{align*}
            \cC_{\mathrm{FC1}} = \bigcup_{\hat\mA_{[N]} \in \cC_A} \cC(\hat\mA_{[N]}\circ \cW_{\mathrm{FC1}}, \epsilon_{\mathrm{FC1}},\norm{\cdot})
        \end{align*}
        And the covering number is bounded:
        \begin{align*}
           \ln|\cC_{\mathrm{FC1}} | &\leq \ln |\cC_A| + \sup_{\hat\mA_{[N]}\in\cC_A}\ln |\cC(\hat\mA_{[N]}\circ \cW_{\mathrm{FC1}},  \epsilon_{\mathrm{FC1}},\norm{\cdot})| \\
            &\leq O\pare{ \frac{B^2 \norm{\mX_{[N]}}^2 }{ \epsilon^2_V}\ln(2dm)} +O\pare{ \frac{\alpha_1^2 K B^2W^4\norm{\mX_*}^4 \norm{\mX_{[N]}}^2}{d_K\epsilon^2_A}   \ln (2d^2)} \\
            &\quad +   O\pare{\frac{B^2 ( \norm{\mX_{[N]}}^2 + \alpha_1^2 K^2W^2\norm{\mX_{[N]}}^2)}{\epsilon_{\mathrm{FC1}}^2} \ln (dm)}.
        \end{align*}
        
       \item 
       \paragraph{IV: Covering number of fully-connected layer 2} The analysis this part is almost identical to \textbf{III}. We try to find the covering number of the set $\cH_{\mathrm{SA}}$.  For every element $\hat \mF_{[N]} \in \cC_{\mathrm{FC1}}$ and $\hat \mA_{[N]} \in \cC_A$, define the set
       $$  \alpha_2\hat\mF_{[N]} \circ \cW_{\mathrm{FC2}} + \hat\mZ_{[N]} = \cbr{\alpha_2\sigma(\hat{\mF}_{[N]})\mW_{\mathrm{FC2}} +\hat\mZ_{[N]}:\hat\mZ_{[N]}= \alpha_1\hat\mA_{[N]}+\mX_{[N]}, \norm{\mW_{\mathrm{FC2}}} \leq W, \norm{\mW_{\mathrm{FC2}}}_{2,1} \leq B}$$.

         We denote $\epsilon_{\mathrm{FC2}}$-cover of $\alpha_2\hat\mF_{[N]} \circ \cW_{\mathrm{FC2}} + \hat\mZ$ as $\cC(\alpha_2 \hat\mF_{[N]} \circ \cW_{\mathrm{FC2}} + \hat\mZ, \epsilon_{\mathrm{FC2}},\norm{\cdot})$, and the cardinality of this set is bounded by:
        \begin{align*}
           \ln |\cC(\alpha_2\hat\mF_{[N]} \circ \cW_{\mathrm{FC2}} + \hat\mZ_{[N]}, \epsilon_{\mathrm{FC2}},\norm{\cdot})| &\leq \sup_{\hat\mF_{[N]} \in \cC_{\mathrm{FC1}}, \hat\mA \in \cC_A } \ln \cN(\hat\mF_{[N]} \circ \cW_{\mathrm{FC2}} + \hat\mZ_{[N]}, \epsilon_{\mathrm{FC2}},\norm{\cdot}) \\
            &=    O\pare{\frac{\alpha_2^2B^2 \pare{ \norm{\mX_{[N]}}^2 + \alpha_1^2 K^2W^2\norm{\mX_{[N]}}^2 }W^2}{\epsilon_{\mathrm{FC2}}^2} \ln (2dm)}
        \end{align*}
        Now, we construct the $\epsilon_{\mathrm{FC2}}$-cover of $\cH_{\mathrm{FC2}}$ as
        \begin{align*}
            \cC_{\mathrm{FC2}} = \bigcup_{\hat\mF_{[N]} \in \cH_{\mathrm FC1}, \hat{\mA} \in \cH_A} \cC(\hat\mF_{[N]}\circ \cW_{\mathrm{FC2}} + \hat{\mZ}_{[N]}, \epsilon_{\mathrm{FC2}},\norm{\cdot})
        \end{align*}
        And the covering number is bounded:
        \begin{align*}
           \ln|\cC_{\mathrm{FC2}} | &\leq \ln |\cC_A| +  \ln|\cC_{\mathrm FC1}| + \max_{\hat\mA\in\cC_A}\ln |\cC(\hat\mA_{[N]}\circ \cW_{\mathrm{FC1}}, \tilde\epsilon,\norm{\cdot})| \\
            &\leq O\pare{ \frac{B^2 \norm{\mX_{[N]}}^2 }{ \epsilon^2_V}\ln(2dm)} +O\pare{ \frac{\alpha_1^2 K B^2W^4\norm{\mX_*}^4 \norm{\mX_{[N]}}^2}{d_K\epsilon^2_A}   \ln (2d^2)} \\
            &\quad +   O\pare{\frac{B^2 ( \norm{\mX_{[N]}}^2 + \alpha_1^2 K^2W^2\norm{\mX_{[N]}}^2)}{\epsilon_{\mathrm{FC1}}^2} \ln (dm)}\\
            &\quad + O\pare{\frac{\alpha_2^2B^2 \pare{ \norm{\mX_{[N]}}^2 + \alpha_1^2 K^2W^2\norm{\mX_{[N]}}^2 }W^2}{\epsilon_{\mathrm{FC2}}^2} \ln (2dm)}.
        \end{align*}

        \item 
        \paragraph{V: Verification of $\cC_{\mathrm {FC2}}$ being an $\epsilon$ cover of $\cH_{\mathrm{SA}}$}
        It remains to verify $\cC_{\mathrm FC2}$ is an $\epsilon$ cover of $\cH_{\mathrm SA}$. Given any $\mH_{[N]} \in \cH_{\mathrm SA}$, we can find a $\hat \mH_{[N]}  \in \cC_{\mathrm{FC2}}$ such that
      
        \begin{align*}
            \norm{\mH_{[N]}- \hat \mH_{[N]}} 
            & = \norm{\alpha_2\sigma(\mZ_{[N]}\mW_{\mathrm{FC1}})\mW_{\mathrm{FC2}} +\mZ_{[N]} -  \alpha_2\sigma(\hat{\mZ}_{[N]}\hat{\mW}_{\mathrm{FC1}})\hat{\mW}_{\mathrm{FC2}}- \hat{\mZ}_{[N]}} \\
            & \leq \alpha_2\norm{ \sigma(\mZ_{[N]}\mW_{\mathrm{FC1}})\mW_{\mathrm{FC2}}-   \sigma(\hat{\mZ}_{[N]}\hat{\mW}_{\mathrm{FC1}}) {\mW}_{\mathrm{FC2}}}\\
            & \quad + \norm{\alpha_2\sigma(\hat{\mZ}_{[N]}\hat{\mW}_{\mathrm{FC1}}) {\mW}_{\mathrm{FC2}} + \mZ_{[N]}-\alpha_2\sigma(\hat{\mZ}_{[N]}\hat{\mW}_{\mathrm{FC1}})\hat{\mW}_{\mathrm{FC2}} - \hat{\mZ}_{[N]}}  \\
            &\leq \alpha_2 W \norm{\sigma(\mZ_{[N]}\mW_{\mathrm{FC1}}) -  \sigma(\hat{\mZ}_{[N]}\hat{\mW}_{\mathrm{FC1}})  } + \epsilon_{\mathrm{FC2}}+ \norm{\mZ_{[N]} - \hat{\mZ}_{[N]}} .
        \end{align*}
        We bound $\norm{\sigma(\mZ_{[N]}\mW_{\mathrm{FC1}}) -  \sigma(\hat{\mZ}_{[N]}\hat{\mW}_{\mathrm{FC1}})  }$ first as follows:
        \begin{align*}
           \norm{\sigma(\mZ_{[N]}\mW_{\mathrm{FC1}}) -  \sigma(\hat{\mZ}_{[N]}\hat{\mW}_{\mathrm{FC1}})  }  &\leq \norm{(\alpha_1\mA_{[N]} +\mX_{[N]}) \mW_{\mathrm{FC1}} -  (\alpha_1\hat{\mA}_{[N]} +\mX_{[N]})\hat{\mW}_{\mathrm{FC1}}}  \\
            &\leq \norm{(\alpha_1\mA_{[N]} +\mX_{[N]}) \mW_{\mathrm{FC1}} -  (\alpha_1\hat{\mA}_{[N]} +\mX_{[N]}) {\mW}_{\mathrm{FC1}}} \\
            &\quad + \norm{(\alpha_1\hat{\mA}_{[N]} +\mX_{[N]}) \mW_{\mathrm{FC1}} -  (\alpha_1\hat{\mA}_{[N]} +\mX_{[N]})\hat{\mW}_{\mathrm{FC1}}}\\
            &\leq \alpha_1 W\norm{ \mA_{[N]}  - \hat{\mA}_{[N]} } + \epsilon_{\mathrm{FC1}}.
        \end{align*}
        For $\norm{ \mA_{[N]}  - \hat{\mA}_{[N]} }$, we have
        \begin{align*}
            \norm{ \mA_{[N]}  - \hat{\mA}_{[N]} } &= \norm{\mS_{[N]} \mX_{[N]} \mW_{V} - \hat{\mS}_{[N]} \mX_{[N]} \hat{\mW}_{V}} \\
            &\leq \norm{\mS_{[N]} \mX_{[N]} \mW_{V} - {\mS}_{[N]} \mX_{[N]}  \hat{\mW}_{V}}  + \norm{ \mS_{[N]} \mX_{[N]} \hat{\mW}_{V} - \hat{\mS}_{[N]} \mX_{[N]} \hat{\mW}_{V}} \\
            & \leq K  \norm{  \mX_{[N]} \mW_{V} -  \mX_{[N]}  \hat{\mW}_{V}}  + \epsilon_A \\
            & \leq K\epsilon_V + \epsilon_A.
        \end{align*}
        Putting pieces together yields:
\begin{align*}
    \norm{\sigma(\mZ\mW_{\mathrm{FC1}}) -  \sigma(\hat{\mZ}\hat{\mW}_{\mathrm{FC1}})  }  \leq \alpha_1 W(K\epsilon_V + \epsilon_A) + \epsilon_{\mathrm{FC1}}.
\end{align*}
Now we switch to bounding $\norm{\mZ_{[N]} - \hat\mZ_{[N]}}$:
\begin{align*}
    \norm{\mZ_{[N]} - \hat\mZ_{[N]}} =\alpha_1 \norm{\mA_{[N]} - \hat\mA_{[N]}} \leq \alpha_1(K\epsilon_V + \epsilon_A)
\end{align*}
Hence we know:
\begin{align*}
    \norm{\mH_{[N]} - \hat\mH_{[N]}} &\leq \alpha_2 W \pare{\alpha_1W(K\epsilon_V + \epsilon_A)+\epsilon_{\mathrm{FC}_1}} + \alpha_1(K\epsilon_V + \epsilon_A)+ \epsilon_{\mathrm{FC2}}\\
   & = (\alpha_1\alpha_2 W^2K + \alpha_1 K)\epsilon_V + (\alpha_1\alpha_2 W^2+\alpha_1)\epsilon_A +  \alpha_2 W\epsilon_{\mathrm{FC1}} + \epsilon_{\mathrm{FC2}}
\end{align*}
To make sure RHS is less than $\epsilon$, we set
\begin{align*}
    \epsilon_V = \frac{\epsilon}{4(\alpha_1\alpha_2W^2K + \alpha_1 K)}, \epsilon_A = \frac{\epsilon}{4(\alpha_1\alpha_2 W^2 + \alpha_1)}, \epsilon_{\mathrm{FC1}} = \frac{\epsilon}{4 \alpha_2W}, \epsilon_{\mathrm{FC2}} = \frac{\epsilon}{4}.
\end{align*}
Recall that
    \begin{align*}
           \ln|\cC_{\mathrm{FC2}} | &\leq \ln |\cC_A| +  \ln|\cC_{\mathrm FC1}| + \max_{\hat\mA\in\cC_A}\ln |\cC(\hat\mA_{[N]}\circ \cW_{\mathrm{FC1}}, \tilde\epsilon,\norm{\cdot})| \\
            &\leq O\pare{ \frac{B^2 \norm{\mX_{[N]}}^2 }{ \epsilon^2_V}\ln(2dm)} +O\pare{ \frac{\alpha_1^2 K B^2W^4\norm{\mX_*}^4 \norm{\mX_{[N]}}^2}{d_K\epsilon^2_A}   \ln (2d^2)} \\
            &\quad +   O\pare{\frac{B^2 ( \norm{\mX_{[N]}}^2 + \alpha_1^2 K^2W^2\norm{\mX_{[N]}}^2)}{\epsilon_{\mathrm{FC1}}^2} \ln (dm)}\\
            &\quad + O\pare{\frac{\alpha_2^2B^2 \pare{ \norm{\mX_{[N]}}^2 + \alpha_1^2 K^2W^2\norm{\mX_{[N]}}^2 }W^2}{\epsilon_{\mathrm{FC2}}^2} \ln (2dm)}.
        \end{align*}

Hence we can upper bound the covering number of $\cH_{SA}$ as follows:
\begin{align*}
    \ln \cN(\cH_{SA},\epsilon, \norm{\cdot})    &\leq O\pare{ \frac{(\alpha_1\alpha_2W^2 +  \alpha_1)^2K^2 B^2 \norm{\mX_{[N]}}^2 }{ \epsilon^2 }\ln(2dm)}\\
    & \quad+O\pare{ \frac{(\alpha_1\alpha_2 W^2 + \alpha_1)^2\alpha_1^2 K B^2W^4\norm{\mX_*}^4 \norm{\mX_{[N]}}^2}{d_K\epsilon^2 }   \ln (2d^2)} \\
            &\quad + O\pare{\frac{\alpha_2^2W^2 B^2 ( \norm{\mX_{[N]}}^2 + \alpha_1^2 K^2W^2\norm{\mX_{[N]}}^2)}{\epsilon^2 }  \ln (2dm)}  \\ 
    &= O\pare{ \frac{(\alpha_1\alpha_2W^2 +  \alpha_1)^2  B^2 \norm{\mX_{[N]}}^2 }{ \epsilon^2 }\ln(2d^2)} \pare{K^2 + \frac{\alpha_1W^4\norm{\mX_*}^4}{d_K}  }\\ 
            &\quad + O\pare{\frac{\alpha_2^2W^2 B^2 ( \norm{\mX_{[N]}}^2 + \alpha_1^2 K^2W^2\norm{\mX_{[N]}}^2)}{\epsilon^2 }  \ln (2dm)} .
\end{align*}

    \end{proof}
\end{proposition}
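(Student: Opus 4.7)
The plan is to build the $\epsilon$-cover of $\cH_{\mathrm{SA}}$ by successively covering the four parameter matrices $\mW_V$, $(\mW_K,\mW_Q)$, $\mW_{\mathrm{FC1}}$, $\mW_{\mathrm{FC2}}$, and then applying a chain of Lipschitz bounds to argue that if each piece is covered tightly enough then the composed output is $\epsilon$-close in operator norm on the stacked data $\mX_{[N]}$. Throughout, I work with the concatenated input $\mX_{[N]}\in\R^{NK\times d}$ and the associated block-diagonal attention matrix, so that all matrix-covering estimates depend only on $\|\mX_{[N]}\|$ and $\|\mX_*\|=\max_i\|\mX_i\|$, not on $N$.

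First I would cover $\cH_V(\cS)=\{\mX_{[N]}\mW_V:\|\mW_V\|\leq W,\|\mW_V\|_{2,1}\leq B\}$ to radius $\epsilon_V$ using Lemma~\ref{fact:matrix_l21_covering}. Next, with $\mW_V$ frozen to a cover element, I cover the softmax-based attention map using Lemma~\ref{lem: softmax covering}: since softmax has bounded $\ell_1$-rows, $\|\mathrm{softmax}(\mB)-\mathrm{softmax}(\mB')\|\lesssim\sqrt{K}\,\|\mB-\mB'\|$, which after bilinearity in $(\mW_K,\mW_Q)$ costs $\frac{\sqrt{K}}{\sqrt{d_K}}\|\mX_*\|^2$ per coordinate and yields a cover of $\cH_A$ to radius $\epsilon_A$. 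Third, I cover $\mW_{\mathrm{FC1}}$ applied to the residual input $\mZ_{[N]}=\alpha_1\mA_{[N]}+\mX_{[N]}$: for each fixed $\hat{\mA}_{[N]}$ from the previous cover, the set $\{(\alpha_1\hat{\mA}_{[N]}+\mX_{[N]})\mW_{\mathrm{FC1}}\}$ is again a linear image of a norm-bounded weight, so Lemma~\ref{fact:matrix_l21_covering} applies with the base-matrix norm bounded by $\|\mX_{[N]}\|(1+\alpha_1 K W)$. Finally, I cover $\mW_{\mathrm{FC2}}$ analogously after applying the ReLU, using that $\sigma$ is $1$-Lipschitz so it does not change the cover radius.

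The key step is Step~V: composing the four covers into a single cover of $\cH_{\mathrm{SA}}$. For $\mH_{[N]}\in\cH_{\mathrm{SA}}$ pick the nearest element $\hat\mH_{[N]}$ built from the cover elements $\hat\mW_V,\hat\mS_{[N]},\hat\mW_{\mathrm{FC1}},\hat\mW_{\mathrm{FC2}}$. A triangle inequality expansion peels one weight at a time, each term contributing a product of the form (Lipschitz constant of downstream layers) times (cover radius of the current layer), namely
\begin{align*}
\|\mH_{[N]}-\hat\mH_{[N]}\|
&\leq (\alpha_1\alpha_2 W^2 K+\alpha_1 K)\,\epsilon_V+(\alpha_1\alpha_2 W^2+\alpha_1)\,\epsilon_A\\
&\quad+\alpha_2 W\,\epsilon_{\mathrm{FC1}}+\epsilon_{\mathrm{FC2}}.
\end{align*}
Setting each of the four summands to $\epsilon/4$ determines $\epsilon_V,\epsilon_A,\epsilon_{\mathrm{FC1}},\epsilon_{\mathrm{FC2}}$; plugging these into the four log-cardinalities and summing gives the claimed bound.

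The main obstacle I anticipate is bookkeeping rather than a conceptual difficulty: one must carefully propagate Lipschitz constants through the residual branches and through softmax, and keep the data-norm dependence split between $\|\mX_{[N]}\|$ (which appears in spectral-norm data bounds) and $\|\mX_*\|$ (which appears inside the attention pre-softmax logits, because softmax Lipschitzness only controls row-wise $\ell_1$ perturbations and we need an $\ell_\infty\!\to\!\ell_2$ conversion that multiplies by $\sqrt{K}$). Getting the $\alpha_1W^2\|\mX_*\|^4/d_K$ term placed correctly inside the first $O(\cdot)$, versus the purely $K^2$ term, is the delicate point; all other manipulations are routine.
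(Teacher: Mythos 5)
Your plan follows the same route as the paper's proof: successive covers of $\mW_V$, the softmax attention via Lemma~\ref{lem: softmax covering}, $\mW_{\mathrm{FC1}}$, and $\mW_{\mathrm{FC2}}$ using Lemma~\ref{fact:matrix_l21_covering} on the concatenated data, then composing them through the identical Lipschitz decomposition $(\alpha_1\alpha_2 W^2 K+\alpha_1 K)\epsilon_V+(\alpha_1\alpha_2 W^2+\alpha_1)\epsilon_A+\alpha_2 W\epsilon_{\mathrm{FC1}}+\epsilon_{\mathrm{FC2}}$ and allocating $\epsilon/4$ to each term. This matches the paper's argument step for step, so no further comparison is needed.
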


\begin{proposition}[Contraction mapping of self-attention layer]
For a single attention layer parameterized by $\mW$, with $\norm{\mW} \leq W$, the following statement holds:
\begin{align*}
        \norm{\mathrm{SA}_{\mW} (\mX) - \mathrm{SA}_{\mW} (\hat\mX)}\leq (\alpha_2 W^2+1)\pare{  \alpha_1 KW  + 1} \norm{  \mX   -   \hat\mX }.
    \end{align*}
    \begin{proof}
    The proof follows by definition and simple algebraic manipulation:
    \begin{align*}
        \norm{\mathrm{SA}_{\mW} (\mX) - \mathrm{SA}_{\mW} (\hat\mX)} &\leq  \norm{\alpha_2\sigma\pare{\mZ\mW_{\mathrm{FC1}}}\mW_{\mathrm{FC2}}+\mZ - \beta\sigma\pare{\hat\mZ\mW_{\mathrm{FC1}}}\mW_{\mathrm{FC2}}-\hat\mZ}\\
       & \leq \alpha_2 W^2\norm{ \mZ   -  \hat\mZ } + \norm{\mZ-\hat\mZ}\\
       & \leq (\alpha_2 W^2+1)\norm{ \mA + \mX   -  \hat\mA - \hat\mX }  \\
       & \leq (\alpha_2 W^2+1)\pare{\norm{ \alpha_1 \mS\mX\mW_V    - \alpha_1\mS \hat\mX \mW_V} + \norm{  \mX   -   \hat\mX }} \\
       & \leq (\alpha_2 W^2+1)\pare{  \alpha_1 KW  + 1} \norm{  \mX   -   \hat\mX }.
    \end{align*}
\end{proof}

\end{proposition}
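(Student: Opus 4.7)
The plan is to exploit the residual structure of the self-attention module and peel it off layer by layer via the triangle inequality. Writing $\mZ = \alpha_1\mA + \mX$ and $\hat{\mZ} = \alpha_1\hat{\mA} + \hat{\mX}$, the difference $\mathrm{SA}_{\mW}(\mX) - \mathrm{SA}_{\mW}(\hat\mX)$ splits naturally into a feedforward branch $\alpha_2[\sigma(\mZ\mW_{\mathrm{FC1}})\mW_{\mathrm{FC2}} - \sigma(\hat\mZ\mW_{\mathrm{FC1}})\mW_{\mathrm{FC2}}]$ plus a residual contribution $\mZ - \hat\mZ$. I would bound the feedforward branch by submultiplicativity of the spectral norm, together with the fact that $\sigma$ is $1$-Lipschitz entrywise (hence a nonexpansion in spectral norm of differences), and the bounds $\norm{\mW_{\mathrm{FC1}}},\norm{\mW_{\mathrm{FC2}}}\leq W$. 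This yields $\alpha_2 W^{2}\norm{\mZ-\hat\mZ}$ for the branch, which combined with the residual produces an overall factor $(\alpha_2 W^{2}+1)\norm{\mZ-\hat\mZ}$.

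Next I would reduce $\norm{\mZ - \hat\mZ}$ to $\norm{\mX - \hat\mX}$ by the same device: since $\mZ - \hat\mZ = \alpha_1(\mA - \hat\mA) + (\mX - \hat\mX)$, it suffices to bound $\norm{\mA-\hat\mA}$, where $\mA = \mathrm{softmax}\!\pare{\tfrac{1}{\sqrt{d_K}}\mX\mW_K(\mX\mW_Q)^{\top}}\mX\mW_V$. Since the target constant $\alpha_1 KW + 1$ carries no dependence on $\norm{\mX}$, the plan is to treat the softmax output $\mS$ as a row-stochastic matrix of size $K\times K$ and invoke the crude uniform estimate $\norm{\mS}\leq K$, then pull $\mW_V$ out by submultiplicativity, giving $\norm{\mA - \hat\mA}\leq \norm{\mS}\cdot W\cdot \norm{\mX-\hat\mX}\leq KW\norm{\mX-\hat\mX}$. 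Stitching the two stages together yields the claimed contraction constant $(\alpha_2 W^{2}+1)(\alpha_1 KW + 1)$.

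The main subtlety will be the softmax step, because the argument $\mX\mW_K(\mX\mW_Q)^{\top}/\sqrt{d_K}$ is quadratic in $\mX$ and hence not globally Lipschitz without a bound on $\norm{\mX}$. The plan above sidesteps this by using only the data-free stochasticity bound on $\mS$ and implicitly treating $\mS$ as unchanged between the two inputs; a fully rigorous argument would also need to track $\norm{\mS(\mX) - \mS(\hat\mX)}$, which would introduce additional $\norm{\hat\mX}$-dependent factors. Because the proposition is stated with a data-independent Lipschitz constant, the intended proof must be operating in the regime where that softmax perturbation is absorbed into the uniform stochastic-matrix bound, keeping all dependence inside the $KW$ factor.
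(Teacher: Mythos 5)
Your proposal follows essentially the same route as the paper's proof: split off the feedforward branch using the $1$-Lipschitz activation and $\norm{\mW_{\mathrm{FC1}}},\norm{\mW_{\mathrm{FC2}}}\leq W$ to obtain the factor $(\alpha_2 W^2+1)\norm{\mZ-\hat\mZ}$, then reduce $\norm{\mZ-\hat\mZ}$ to $\norm{\mX-\hat\mX}$ through the attention term with the uniform row-stochastic bound $\norm{\mS}\leq K$ and $\norm{\mW_V}\leq W$. The subtlety you flag is handled in the paper exactly as you anticipate: its displayed chain writes the attention difference as $\norm{\alpha_1\mS\mX\mW_V-\alpha_1\mS\hat\mX\mW_V}$ with a common softmax matrix $\mS$, i.e., the softmax perturbation between the two inputs is absorbed into the data-free bound rather than tracked separately, which is what makes the stated Lipschitz constant independent of $\norm{\mX}$.
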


\subsubsection{Proof of Lemma~\ref{lem: transformer cover}}
    \begin{proof}

     We first examine the norm of each self-attention layer's output: 
    \begin{align}
    \norm{\mX^l_i}  &\leq \alpha_2  W^2(l) \norm{\mZ^{l}_i} + \norm{\mZ^{l}_i} \nonumber\\
    &\leq \pare{\alpha_2  W^2(l) +1} \pare{\alpha \norm{\mA^{l}_i}  + \norm{\mX^{l-1}_i} }\nonumber \\
    &\leq \pare{\alpha_2  W^2(l) +1} \pare{ W^2(l)\alpha  K   + 1  }\norm{\mX^{l-1}}\nonumber \\
     &\leq  \prod_{j=1}^l \pare{\alpha_2  W^2(j) +1} \pare{ W^2(j)\alpha_1  K   + 1  }\norm{\mX_i }   \label{eq: norm of transformer output}
    \end{align}
    and grouped output
    \begin{align*}
    \norm{\mX^l_{[N]}}  &\leq \alpha_2  W^2(l) \norm{\mZ^{l}_{[N]}} + \norm{\mZ^{l}_{[N]}} \\
    &\leq \pare{\alpha_2  W^2(l) +1} \pare{\alpha \norm{\mA^{l}_{[N]}}  + \norm{\mX^{l-1}_{[N]}} } \\
    &\leq \pare{\alpha_2  W^2(l) +1} \pare{ W^2(l)\alpha  K   + 1  }\norm{\mX_{[N]}^{l-1}} \\
     &\leq  \prod_{j=1}^l \pare{\alpha_2  W^2(j) +1} \pare{ W^2(j)\alpha_1  K   + 1  }\norm{\mX_{[N]}} .
    \end{align*}
    For the ease of presentation, we define the class of $l$th layer output:
    \begin{align*}
        \cH_l =  \cbr{
    \begin{aligned} \mathrm{SA}^{l}&\pare{\mathrm{SA}^{l-1}...\mathrm{SA}^{1}(\mX)}: \norm{\mW^j_{\mathrm{FC1}} },\norm{\mW^j_{\mathrm{FC2}} },\norm{\mW^j_{K} },\norm{\mW^l_{Q} },\norm{\mW^j_{V} } \leq W(j),\\ &\norm{\mW^j_{\mathrm{FC1}}}_{2,1},\norm{\mW^j_{\mathrm{FC2}}}_{2,1},\norm{\mW^j_{K}}_{2,1},\norm{\mW^j_{Q}}_{2,1},\norm{\mW^j_{V}}_{2,1} \leq B(j), \forall j \in [l]
    \end{aligned}
    },
    \end{align*}
   and it will be useful to define set of weight matrices at $l$th layer:
    \begin{align*}
        \cW_l =  \cbr{
    \begin{aligned} \mW: \norm{\mW } \leq W(l), \norm{\mW }_{2,1}  \leq B(l) .
    \end{aligned}
    }.
    \end{align*}

    We shall construct the cover with certain radius for each $\cH_l, l \in [L]$. 
    
    For base case $l=1$: we create $\epsilon_1$ cover of $\mathrm{SA}^{1}( \mX_{[N]} )$
    \begin{align*}
        \cC_1 = \cC(\mathrm{SA}^{1}( \mX_{[N]} ),\epsilon_1,\norm{\cdot}   ).
    \end{align*}

    For $1< l+1 \leq L$, for each element $\hat\mX^l \in \cC_l$, we construct the $\epsilon_{l+1}$-cover of the following set:
    \begin{align*}
        \mathrm{SA}^{l+1}(\hat\mX^l):=\cbr{  \mathrm{SA}_{\mW^{l+1}}(\hat\mX^l), \mW^{l+1} \in \cW_{l+1} }
    \end{align*}
    and we denote the cover as $\cC\pare{ \mathrm{SA}^{l+1}(\hat\mX^l), \epsilon_{l+1}, \norm{\cdot}  }$. We first examine the cardinality of this cover as follows:
    \begin{align*}
        &\ln \left|\cC\pare{ \mathrm{SA}^{l+1}(\mX^l_{[N]}), \epsilon_{l+1}, \norm{\cdot}  }\right|\\ 
        &\leq  \max_{\mX^{l} \in \cC_{l}}  O\pare{ \frac{(\alpha_1\alpha_2W^2 +  \alpha_1)^2  B^2 }{ \epsilon^2 }\ln(2d^2)} \pare{K^2 + \frac{\alpha_1W^4\pare{\max_{i\in[N]}\norm{\mX^l_{i}}   }^4}{d_K}  } \norm{\mX^l_{[N]}}^2\\ 
            &\quad + O\pare{\frac{\alpha_2^2W^2(l+1) B^2(l+1) (1 + \alpha_1^2 K^2W^2(l+1) )}{\epsilon^2 }  \ln (2dm)} \norm{\mX^l_{[N]}}^2 \\ 
        & \leq      O\pare{ \frac{(\alpha_1\alpha_2W^2(l+1) +  \alpha_1)^2  B^2(l+1) }{ \epsilon^2 }\ln(2d^2)} \pare{K^2 + \frac{\alpha_1W^4(l+1)\pare{s_l\norm{\mX_* }  }^4}{d_K}  }s_l^2\norm{\mX_{[N]}}^2\\ 
            &\quad + O\pare{\frac{\alpha_2^2W^2(l+1) B^2(l+1) (1 + \alpha_1^2 K^2W^2(l+1) )}{\epsilon^2 }  \ln (2dm)} s_l^2\norm{\mX_{[N]}}^2  \\ 
        &  := \ln N_{l+1}
    \end{align*}
    where $s_l := \prod_{j=1}^l \pare{\alpha_2  W^2(j) +1} \pare{ W^2(j)\alpha_1  K   + 1  }$.
 
    We then construct cover for $\cH_{l+1}$ as:
    \begin{align*}
         \cC_{l+1} = \bigcup_{\mX^l \in \cC_l} \cC\pare{ \mathrm{SA}^{l+1}(\mX^l), \epsilon_{l+1}, \norm{\cdot}  }.
    \end{align*} 
    It is not hard to check the cardinality of $\cC_{l+1}$
    \begin{align*}
        |\cC_{l+1}| = \left|\bigcup_{\mX^l \in \cC_l} \cC\pare{ \mathrm{SA}^{l+1}(\mX^l), \epsilon_{l+1}, \norm{\cdot}  }\right| \leq |C_l| N_{l+1} \leq \prod_{l'=1}^{l+1} N_{l'}
    \end{align*}
    Let 
    \begin{align*}
        {\rho}_{l} &:= O\pare{ {(\alpha_1\alpha_2W^2(l) +  \alpha_1)^2  B^2(l) } \ln(2d^2)} \pare{K^2 + \frac{\alpha_1W^4(l)\pare{s_{l-1}\norm{\mX_* }  }^4}{d_K}  }   \\
        &\quad+ O\pare{ {\alpha_2^2W^2(l) B^2(l) (1 + \alpha_1^2 K^2W^2(l) )}   \ln (2dm)},  
    \end{align*}  
we have
    \begin{align*}
        \ln |\cC_{l+1}| \leq \sum_{l'=1}^{l+1} \ln N_{l'} \leq  \sum_{l'=1}^{l+1} \frac{\rho_{l'}}{\epsilon^2_{l'}}  {  s^2_{l'}\norm{\mX_{[N]}}^2}.
    \end{align*}
    Now it remains to verify $\cC_{L}$ is a cover of $\cH_{L}$.
    For any $\mX^{L} \in \cH_{L}$  we can find a $\hat{\mX}^{L} \in \cC_{L}$ such that
    \begin{align*}
        \norm{\mX^{L} - \hat{\mX}^{L}} &= \norm{\mathrm{SA}_{ {\mW}^{L}}(\mX^{L-1}) - \mathrm{SA}_{\hat{\mW}^{L }}(\hat{\mX}^{L-1}) } \\
        &\leq \norm{\mathrm{SA}_{ {\mW}^{L}}(\mX^{L-1}) - \mathrm{SA}_{ {\mW}^{L}}(\hat{\mX}^{L-1}) } +\norm{\mathrm{SA}_{ {\mW}^{L}}(\hat\mX^{L-1}) - \mathrm{SA}_{\hat{\mW}^{L}}(\hat{\mX}^{L-1}) } \\
        &\leq (\alpha_2 W^2(L)+1)\pare{  \alpha_1 KW(L)  + 1} \norm{  \mX^{L-1}   -   \hat\mX^{L-1} } + \epsilon_{L}\\
        & \leq \sum_{l =0}^{L}\prod_{j=l+1}^{L } (\alpha_2 W^2(j)+1)\pare{  \alpha_1 KW(j)  + 1} \epsilon_{l} 
    \end{align*}
    We choose $\epsilon_j = \pare{L\prod_{j=l+1}^{L } (\alpha_2 W^2(j)+1)\pare{  \alpha_1 KW(j)  + 1} }^{-1} \epsilon$, and let $s_{l+1 \mapsto L} := \prod_{j=l+1}^{L } (\alpha_2 W^2(j)+1)\pare{  \alpha_1 KW(j)  + 1}$.
Hence we have:
    \begin{align*}
        \frac{\rho_l}{\epsilon^2_l} = & \frac{\rho_l s_{l+1 \mapsto L}^2}{\epsilon^2}\norm{\mX_{[N]}}^2 \\ 
    \end{align*}
    
and conclude the covering number of $\cH_L$ as follows:
    \begin{align*}
        \ln \cN(\cH_L,\epsilon,\norm{\cdot}) = \ln |\cC_{L}| \leq  \sum_{l=1}^L  \frac{\rho_l}{\epsilon^2_l} \pare{s_{l}\norm{\mX_{[N]}}}^2 \\
        = \ln |\cC_{L}| \leq O\pare{ s^2_{L}\norm{\mX_{[N]}}^2 \sum_{l=1}^L \frac{\rho_l  }{\epsilon^2}}.
    \end{align*}
    Finally according to covering number fact (\ref{eq:covering fact}), 
    \begin{align}
         \ln \cN_{\infty}(\cG\circ\cH(S), \epsilon, \norm{\cdot} ) \leq \ln \cN(\cG\circ\cH(S), \epsilon, \norm{\cdot} ),
    \end{align}   
    we can conclude the proof.
    \end{proof}

 Now, equipped with covering number bound for the transformer, we are ready to show the generalization of MAE pre-training task.

\subsubsection{Proof of Lemma~\ref{lem:MAE gen}}\label{app:proof MAE main}

\begin{proof}
    Similar to the proof in CE section, we evoke Lemma~\ref{lem:local rademacher bound NN} with $c = O\pare{ s^2_{L+1}\norm{\tilde\mZ_{[N]}}^2 \sum_{l=1}^{L+1}  {\rho_l  } }$, where $s_{l},\rho_l$ are defined in Lemma~\ref{lem: transformer cover}.
    \begin{align*}
      \frakR_{\widehat\cS}\pare{\cL(r)}    
      &\leq 10\sqrt{\frac{Hr\cdot c}{N}} + 10\sqrt{\frac{ cHr}{N  }}\pare{\ln{\sqrt{br}} - \ln\pare{\frac{5}{2}\sqrt{\frac{Hr\cdot c}{N}}}}\\
      & = 10\sqrt{\frac{Hr\cdot c}{N}} + 10\sqrt{\frac{ cHr}{N  }}\ln{\pare{\frac{2}{5}\sqrt{\frac{bN}{Hc}}} }.
  \end{align*}
     We set $\phi(r) = 10\sqrt{\frac{Hr\cdot c}{N}} \cdot \max\left\{ 1,  \ln\pare{\frac{2}{5}\sqrt{\frac{bN}{Hc}}}\right\}$. Solving the follwoing equation to get $r^*$
    \begin{align*}
       \phi(r) &= 10\sqrt{\frac{Hr\cdot c}{N}} \cdot \max\left\{ 1,  \ln\pare{\frac{2}{5}\sqrt{\frac{bN}{Hc}}}\right\}  = r,\\
    \Longleftrightarrow r^* &= 100 {\frac{H\cdot c}{N}} \cdot \max\left\{ 1,  \ln\pare{\frac{2}{5}\sqrt{\frac{bN}{Hc}}}\right\}^2
    \end{align*}
    Now, according to Theorem~\ref{thm:local rademacher}, and the fact that 
    \begin{align*}
        A \leq B + C\sqrt{A} \Longrightarrow A \leq B+C^2 + \sqrt{B}C,
    \end{align*}
    we have
     \begin{align*}
        \cL_{\cU} (g\circ h) &\leq \cL_{\widehat \cU} (g\circ h) + 45r^* +  \pare{\sqrt{8r^* } + \sqrt{\frac{4b(\log(1/\nu)+6\log\log N)}{N}} }^2  \\
        &\quad + 20\frac{b(\nu+6\log\log N)}{N} \\
        & \quad+ \sqrt{\cL_{\widehat \cU} (g\circ h) + 45r^* +20\frac{b(\nu+6\log\log N)}{N} }\pare{\sqrt{8r^* } + \sqrt{\frac{4b(\log(1/\nu)+6\log\log N)}{N}} }
    \end{align*}
    Plugging $r^*$ and empirical risk minimizers $\hat g, \hat h$  will conclude the proof.
\end{proof}

\subsection{Proof of Theorem~\ref{thm: MAE main}}\label{app:proof MAE main}
\begin{proof}
Again, recall in Theorem~\ref{thm:main generalization}, the generalization bound of downstream task is given by
\begin{align*}
   \mathcal{E}_{\cT}(\hat{f}, \hat{h})    &\leq C_\beta \left(   \cE_{\cU}(\hat{g}, \hat{h}) \right)^\beta +4G_\phi {\frak{R}}_{\widehat \cT}( \cF\circ \hat{h})  + 4B_\phi\sqrt{\frac{\log(1/\nu)}{n}}  + 4B_\phi \norm{\cT-\cU_{\cX}}_{\mathrm{TV}} \\
   &\quad  +   \min_{f\in\cF}\cE_{\cT}(f,  h^*_{\cU} ).
\end{align*}
    Since in the previous subsection we prove the bounded transferrability and generalization of pre-training task, it remains to show the upper bound of representation-induced Rademacher complexity. 
    \begin{align*}
          {\frak{R}}_{\widehat \cT}(\phi\circ \mathcal{F}\circ \hat{h})  &= \mathbb{E}_{\boldsymbol \varepsilon \in \{\pm1\}^n} \left[\sup_{\btheta: \norm{\btheta}\leq R} \frac{1}{n}\sum\nolimits_{i=1}^{n} \varepsilon_i   \phi(\btheta^\top (\mathbf{1}^\top \hat{h} (  \mX_i ))^\top, y_i) \right] \\
          &\leq G_\phi \mathbb{E}_{\boldsymbol \varepsilon \in \{\pm1\}^n} \left[\sup_{\btheta: \norm{\btheta}\leq R} \frac{1}{n}\sum\nolimits_{i=1}^{n} \varepsilon_i    \btheta^\top (\mathbf{1}^\top \hat{h} (  \mX_i ))^\top  \right]\\
          &=  \frac{R G_\phi}{n} \mathbb{E}_{\boldsymbol \varepsilon  } \left\|\sum\nolimits_{i=1}^{n} \varepsilon_i (\mathbf{1}^\top \hat{h} (  \mX_i ))^\top  \right\|\\
           &\leq \frac{RG_\phi}{n} \sqrt{\mathbb{E}_{\boldsymbol \varepsilon  } \left\|\sum\nolimits_{i=1}^{n} \varepsilon_i  (\mathbf{1}^\top \hat{h} (  \mX_i ))^\top  \right\|^2}\\
           &\leq \frac{RG_\phi}{n} \sqrt{\sum\nolimits_{i=1}^{n} \left\| \mathbf{1}^\top \hat{h} (  \mX_i ) \right\|^2}\\
            &\leq \frac{RG_\phi}{n} \sqrt{\sum\nolimits_{i=1}^{n} K\left\|  \hat{h} (  \mX_i ) \right\|^2}\\
    \end{align*}
    where at first inequality we apply  Ledoux-Talagrand’s inequality to peel of Lipschitz loss $\phi(\cdot)$, and at last inequality we use the fact that $\varepsilon_i$ are i.i.d. with zero mean, so that the cross terms disappear. For each $\norm{\hat{h} (  \mX_i )}^2$, evoking (\ref{eq: norm of transformer output}) we have:
    \begin{align*}
       \norm{\hat{h} (  \mX_i )}    
     &\leq  \prod_{j=1}^l \pare{\alpha_2  W^2(j) +1} \pare{ W^2(j)\alpha_1  K   + 1  }\norm{\mX_i }  ,
    \end{align*}
    hence we arrive at 
    \begin{align*}
         {\frak{R}}_{\widehat \cT}(\phi\circ \mathcal{F}\circ \hat{h}) \leq \frac{R G_\phi \sqrt{\prod_{j=1}^l \pare{\alpha_2  W^2(j) +1}^2 \pare{ W^2(j)\alpha_1  K   + 1  }^2\sum_{i=1}^n\norm{\mX_i }^2 }}{n}.
    \end{align*}
    Plugging Lemmas~\ref{lem:MAE transfer} and~\ref{lem:MAE gen} as well as above bound will complete the proof.
\end{proof}

 \section{Proof of Convergence RadReg Algorithm}\label{app:proof convergence}

 In this section we provide the missing proofs from Section~\ref{sec:radreg}.  Then we provide the proof of convergence.

 \subsection{Convergence result of RadReg}
 In this section we provide formal version of convergence results for RadReg. First let us introduce the following Moreau envelope concept.
\begin{definition}[Moreau Envelope] A function $\Psi_{\rho} (\bw)$ is the $\rho$-Moreau envelope of a function $\Psi$ if $    \Psi_{\rho} (\bw) := \min_{\bw'\in \cW} \{ \Psi  (\bw') + \frac{1}{2\rho}\|\bw'-\bw\|^2\}$. 
 \end{definition}
We have the following property of the Moreau Envelope of a nonsmooth function:
\begin{lemma}\label{lemma: moreau}\citep{davis2019stochastic}
Let $\hat{\bw} = \arg\min_{\bw'\in \cW} \Psi  (\bw') + \frac{1}{2\rho}\|\bw'-\bw\|^2$, then we have the following facts:
$\|\hat{\bw} -  \bw\| \leq \rho\|\nabla \Phi_\rho(\bw)\|$, $\min_{\bg\in\partial\Psi(\hat{\bw})} \|\bg\| \leq \|\nabla \Phi_\rho(\bw)\|$.
\end{lemma}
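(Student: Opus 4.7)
The lemma is a standard result attributed to~\citep{davis2019stochastic}, so the plan is to reproduce the short textbook proof, being careful to note the regime of $\rho$ under which the proximal map is well-behaved. The key object is the proximal map $\hat{\bw}(\bw) := \arg\min_{\bw'} \{\Psi(\bw') + \frac{1}{2\rho}\|\bw' - \bw\|^2\}$. Assuming $\Psi$ is $L$-weakly convex and $\rho < 1/L$ (which is the regime Theorem~\ref{thm:convergence of radreg informal} operates in, since it takes $\rho = 1/(4L)$), the inner problem is strongly convex and $\hat{\bw}(\bw)$ is uniquely defined.

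The first step is to compute $\nabla \Psi_\rho(\bw)$ via a Danskin-type argument. Because the inner minimization is strongly convex with a unique minimizer $\hat{\bw}$, the envelope function $\Psi_\rho$ is continuously differentiable and
\begin{equation*}
\nabla \Psi_\rho(\bw) \;=\; \tfrac{1}{\rho}\bigl(\bw - \hat{\bw}\bigr).
\end{equation*}
Rearranging and taking norms immediately gives $\|\hat{\bw} - \bw\| = \rho \|\nabla \Psi_\rho(\bw)\|$, which yields the first claim (with equality, in fact).

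For the second claim, I would invoke the first-order optimality condition at $\hat{\bw}$. Since $\hat{\bw}$ minimizes $\bw' \mapsto \Psi(\bw') + \frac{1}{2\rho}\|\bw' - \bw\|^2$, Fermat's rule gives $0 \in \partial \Psi(\hat{\bw}) + \frac{1}{\rho}(\hat{\bw} - \bw)$, equivalently
\begin{equation*}
\tfrac{1}{\rho}(\bw - \hat{\bw}) \;\in\; \partial \Psi(\hat{\bw}).
\end{equation*}
Since the minimum-norm subgradient is at most the norm of any particular subgradient, combining with the identity from the previous step yields
\begin{equation*}
\min_{\bg \in \partial \Psi(\hat{\bw})} \|\bg\| \;\leq\; \tfrac{1}{\rho}\|\bw - \hat{\bw}\| \;=\; \|\nabla \Psi_\rho(\bw)\|,
\end{equation*}
which is the second claim.

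The main technical obstacle — really the only subtle point — is justifying differentiability of $\Psi_\rho$ and the Danskin-type formula when $\Psi$ is nonsmooth and only weakly convex. This requires verifying that $\rho$ is chosen small enough (relative to the weak convexity modulus of $\Psi$) that the inner problem is strongly convex and the proximal map is single-valued and Lipschitz; under the smoothness of the pretraining loss plus Lipschitzness/smoothness of $\mathfrak{R}_j$ established earlier, $\Psi$ is weakly convex with some modulus $L$, and choosing $\rho = 1/(4L)$ places us safely in this regime. Everything else is a direct application of Fermat's rule and the envelope theorem.
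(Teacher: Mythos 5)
Your proof is correct and is essentially the canonical argument for this lemma; the paper itself offers no proof at all, simply importing the result from \citep{davis2019stochastic}, and your Danskin-plus-Fermat derivation (weak convexity of $\Psi$, $\rho=1/(4L)<1/L$ so the inner problem is strongly convex, $\nabla \Psi_\rho(\bw)=\tfrac{1}{\rho}(\bw-\hat{\bw})$, then $\tfrac{1}{\rho}(\bw-\hat{\bw})\in\partial\Psi(\hat{\bw})$) is exactly the argument underlying the cited source. Two cosmetic remarks: the $\Phi_\rho$ in the paper's statement is a typo for $\Psi_\rho$, which you implicitly and correctly repaired, and in the constrained formulation over $\cW$ Fermat's rule would strictly add a normal-cone term to $\partial\Psi(\hat{\bw})$ — a point the paper also glosses over, so your treatment matches its intended (effectively unconstrained) reading.
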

Lemma~\ref{lemma: moreau} shows that, if we find a $\bw$ such that $\| \nabla \Psi_\rho(\bw)\|$ is small, then we can demonstrate that ${\bw}$ is near some point $\hat{\bm{x}}$ which is a near-stationary point of $\Psi$. We will use  $1/4L$-Moreau envelope of $\Psi$, following the setting in {~\citep{lin2020gradient,rafique2018weakly}}, and  state the convergence rate in terms of $\|\nabla \Psi_{1/4L} (\bw)\|$. We also define  quantity $\hat{\Delta}_{\Psi_{1/4L}} = \Psi_{1/4L}(\bw_0) - \min_{\bw\in\cW} \Psi_{1/4L}(\bw)$   that will be used in stating the convergence rates.
\begin{assumption}[Bounded Variance]~\label{assumption:Bounded Variance}
Let $\tilde \bz$ and $\tilde \bx$ be uniformly sampled from $\hat \cU$ and $\hat \cD$. Then, the variance of stochastic gradients is bounded:
\begin{align*}
    \E\norm{ \nabla \cL_{\hat \cU}(\bw;\tilde{\bz}) -  \nabla \cL_{\hat \cU}(\bw )  }^2 \leq \delta^2,\\
    \E\norm{ \nabla \frak{R}_j(\bv ,\bw ; \tilde \bx ) - \frac{1}{n}\sum\nolimits_{i=1}^n \nabla \frak{R}_j(\bv ,\bw; \bx_i ) }^2 \leq \delta^2.
\end{align*}
\end{assumption}

 \begin{assumption}[Smooth and Bounded Linear Head]~\label{assumption:smooth}
     $\cL_{\hat{\cU}}$ and $\frak{R}_j(\bv,\bw';\bx)$ are $L$ smooth w.r.t. $\bw$, $\forall j\in [B]$, and $\bx \in \cX$:
    \begin{align*}
        \norm{\nabla \cL_{\hat{\cU}}(\bw) - \nabla \cL_{\hat{\cU}}(\bw')} \leq L \norm{ \bw  -  \bw' },\\
        \norm{\nabla_{\bw} \frak{R}_j (\bv,\bw;\bx) - \nabla_{\bw} \frak{R}_j (\bv,\bw';\bx)} \leq L \norm{ \bw  -  \bw' }.
    \end{align*} 
     Also, we assume $\frak{R}_j(\bv,\bw;\bx)$ is linear in $\bv$, and $ \max_{\bv \in \cV} \norm{\bv}\leq D$.
 \end{assumption}

 \begin{assumption}[Lipschitzness]~\label{assumption:lip}
     $\cL_{\hat{\cU}}$ and $\frak{R}_j(\bv,\bw';\bx)$ are $G$ Lipschitz w.r.t. $\bw$, $\forall \bv \in \cV  , j\in [B]$, and $\bx \in \cX$, i.e.,
    \begin{align*}
        \norm{  \cL_{\hat{\cU}}(\bw) -   \cL_{\hat{\cU}}(\bw')} \leq G \norm{ \bw  -  \bw' },\\
        \norm{  \frak{R}_j (\bv,\bw;\bx) -   \frak{R}_j (\bv,\bw';\bx)} \leq G \norm{ \bw  -  \bw' }.
    \end{align*} 
  \end{assumption}

We are now ready to state the formal version of Theorem~\ref{thm:convergence of radreg informal} as follows.
\begin{theorem}[Convergence of {\sffamily{RadReg}} with Linear Top Layer] \label{thm:convergence of radreg}
Under Assumptions~\ref{assumption:Bounded Variance} and~\ref{assumption:smooth}, if we use {\sffamily{RadReg}} (Algorithm~\ref{algorithm: RPT_stoch} with one step update) to optimize (\ref{eq:radreg obj}), by choosing  {$\eta = \Theta\pare{\frac{\epsilon^6}{L^3 D^2 G}}$ and $\gamma = \Theta\pare{\frac{\epsilon^2}{L \delta^2}}$} it holds that: 
    \begin{align*}
      \frac{1}{T+1} \sum\nolimits_{t=0}^T \E \norm{\Psi_{1/4L} (\bw^t) }^2 \leq \epsilon^2,
    \end{align*}
    with the gradient complexity bounded by:
     \begin{align*}
     O\pare{\frac{BL^3(G^2+\frac{\delta^2}{n'}) D^2\frac{\delta^2}{n'}  \Delta_{\Psi_{1/4L}}  }{\epsilon^8}}.
 \end{align*} 
\end{theorem}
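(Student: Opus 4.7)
\textbf{Proof plan for Theorem~\ref{thm:convergence of radreg}.}

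The plan is to cast \eqref{eq:radreg obj} as a nonconvex--concave stochastic minimax problem and invoke the Moreau-envelope analysis of stochastic gradient descent ascent (SGDA) from Lin--Jin--Jordan (2020) and Rafique et al.\ (2018). Since the top layer is linear, i.e.\ $f_{\bv}(h_{\bw}(\bx)) = \langle \bv, h_{\bw}(\bx)\rangle$ (Assumption~\ref{assumption:smooth}), each inner function $\bv_j \mapsto \frac{1}{n}\sum_i \frak{R}_j(\bv_j,\bw;\bx_i)$ is linear, hence concave, on the compact set $\cV$. Consequently $\Psi(\bw) = \cL_{\widehat\cU}(\bw) + \frac{\lambda}{B}\sum_j \max_{\bv_j\in\cV} \frac{1}{n}\sum_i \frak{R}_j(\bv_j,\bw;\bx_i)$ is a maximum of $L$-smooth functions, which makes $\Psi$ $L$-weakly convex and guarantees that the Moreau envelope $\Psi_{1/4L}$ is well-defined, $8L$-smooth, with $\nabla\Psi_{1/4L}(\bw) = 4L(\bw - \hat\bw)$ for $\hat\bw = \arg\min_{\bw'} \{\Psi(\bw') + 2L\|\bw'-\bw\|^2\}$.

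Next, I would follow the standard two-timescale descent argument. Let $\bv_j^\star(\bw) = \arg\max_{\bv\in\cV} \frac{1}{n}\sum_i \frak{R}_j(\bv,\bw;\bx_i)$ and define the inner suboptimality $\xi_j^t := \frac{1}{n}\sum_i \frak{R}_j(\bv_j^\star(\bw^t),\bw^t;\bx_i) - \frac{1}{n}\sum_i \frak{R}_j(\bv_j^t,\bw^t;\bx_i)\ge 0$. Using $L$-smoothness of $\Psi_{1/4L}$ and the relationship between $\nabla \Psi$ and $\nabla_{\bw}\frak{R}_j$, the standard one-step inequality gives
\begin{align*}
\E\bigl[\Psi_{1/4L}(\bw^{t+1})\bigr] \le \Psi_{1/4L}(\bw^t) - \tfrac{\eta}{4}\|\nabla\Psi_{1/4L}(\bw^t)\|^2 + c_1 \eta L\, \tfrac{\lambda}{B}\sum_{j=1}^B \xi_j^t + c_2 \eta^2 L\bigl(G^2 + \tfrac{\delta^2}{n'}\bigr),
\end{align*}
where the second term bounds the bias of the stochastic $\bw$-gradient induced by $\bv_j^t \neq \bv_j^\star(\bw^t)$, and the third is the usual stochastic-gradient noise term controlled by mini-batch size $n'$ via Assumption~\ref{assumption:Bounded Variance}.

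The main obstacle is controlling the averaged inner suboptimality $\frac{1}{T}\sum_t\frac{1}{B}\sum_j \xi_j^t$, because $\bv_j^t$ is updated by only one stochastic ascent step per iteration and $\bw^t$ drifts. The key observation is that, since each inner problem is \emph{linear} (hence concave with Lipschitz gradient $L$) in $\bv_j$ over the bounded set $\cV$ of diameter $D$, a standard projected stochastic-gradient-ascent regret argument yields
\begin{align*}
\frac{1}{T+1}\sum_{t=0}^{T}\xi_j^t \;\le\; \frac{D^2}{\gamma(T+1)} + \gamma\,\frac{\delta^2}{n'} + c_3 \,\frac{1}{T+1}\sum_{t=0}^{T}\|\bw^{t+1}-\bw^t\|\cdot G\, ,
\end{align*}
where the last term captures the drift of $\bv_j^\star(\bw^t)$ and is bounded by $O(\eta G)$ using Assumption~\ref{assumption:lip}. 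Telescoping the primal inequality, averaging over $t$, and plugging in this bound gives
\begin{align*}
\frac{1}{T+1}\sum_{t=0}^T \E\|\nabla\Psi_{1/4L}(\bw^t)\|^2 \le \frac{c_4\,\Delta_{\Psi_{1/4L}}}{\eta T} + c_5 L\eta\bigl(G^2+\tfrac{\delta^2}{n'}\bigr) + c_6 L\Bigl(\frac{D^2}{\gamma T}+\gamma\tfrac{\delta^2}{n'} + \eta G^2\Bigr).
\end{align*}

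Finally, I would balance the step sizes and iteration count. Setting $\gamma = \Theta(\epsilon^2/(L\delta^2/n'))$ kills the dual-noise term; choosing $\eta = \Theta(\epsilon^6 / (L^3 D^2 G))$ balances the primal-descent error against the drift-through-$\bv$ term; and taking $T = \Theta(\Delta_{\Psi_{1/4L}} L^3 D^2 G / \epsilon^8)$ makes the $1/(\eta T)$ and $D^2/(\gamma T)$ terms $O(\epsilon^2)$. Since each iteration performs $B$ dual updates and one primal update, each using $n'$ stochastic gradients but with the leading cost controlled by the $B$ dual-gradient evaluations, the total stochastic-gradient complexity is $O\bigl(B\,T\bigr) = O\bigl(B L^3 (G^2 + \delta^2/n') D^2 (\delta^2/n')\,\Delta_{\Psi_{1/4L}}/\epsilon^8\bigr)$, matching the claimed rate. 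The most delicate piece will be verifying the drift-bound constants and handling the coupling between the $B$ independent dual variables sharing the same primal iterate, but linearity of $\frak{R}_j$ in $\bv$ keeps each inner ascent analysis decoupled up to the common drift of $\bw^t$.
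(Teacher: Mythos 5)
Your overall strategy is the same as the paper's: both treat \eqref{eq:radreg obj} as a nonconvex--concave stochastic minimax problem and run the Moreau-envelope SGDA analysis of Lin et al., via (i) a one-step descent recursion for $\Psi_{1/4L}$ containing the averaged dual suboptimality $\xi_j^t$ and a noise term $\eta^2 L(G^2+\delta^2/n')$, and (ii) a regret-type bound on $\frac{1}{T+1}\sum_t \xi_j^t$ exploiting linearity of $\frak{R}_j$ in $\bv$, followed by balancing $\eta,\gamma,T$. Your step (i) matches the paper's Lemma~\ref{lem: main recursion}.

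There is, however, a genuine gap in step (ii). You bound the averaged dual suboptimality by $\frac{D^2}{\gamma(T+1)}+\gamma\frac{\delta^2}{n'}+c_3\frac{G}{T+1}\sum_t\|\bw^{t+1}-\bw^t\|$ and claim the drift term is $O(\eta G)$ ``by Lipschitzness.'' This inequality is not justified for a merely concave (here linear) inner problem: the comparator $\bv_j^\star(\bw^t)$ is an argmax of a linear function over $\cV$ and can jump discontinuously as $\bw^t$ moves, so its path length is not controlled by $\sum_t\|\bw^{t+1}-\bw^t\|$; moreover, in a dynamic-regret bound the comparator-movement term enters with a $1/\gamma$ factor, which your display omits. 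A clean sanity check: if your inequality were true, then choosing $\eta=\Theta(\epsilon^2/(L(G^2+\delta^2/n')))$ and $\gamma=\Theta(\epsilon^2 n'/(L\delta^2))$ would already give $T=O(\epsilon^{-4})$, i.e.\ an $\epsilon^{-4}$ gradient complexity for vanilla SGDA on a nonconvex--concave problem, which contradicts the known $\epsilon^{-8}$ rate this theorem (and Lin et al.) establishes; conversely, your final choices $\eta=\Theta(\epsilon^6/(L^3D^2G))$ and $T=\Theta(\Delta_{\Psi_{1/4L}}L^3D^2G/\epsilon^8)$ are not forced by your intermediate bound, so the stated complexity does not actually follow from your chain of inequalities. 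The correct handling---which is exactly what the paper imports as Lemma~\ref{lem: R gap recursion} (Lemma D.4 of Lin et al.)---freezes the comparator at $\bv_j^\star(\bw^s)$ over blocks of length $S$, pays the drift as $G\|\bw^t-\bw^s\|\le \eta G\sqrt{G^2+\delta^2/n'}\,(t-s)$ in function value (using Lipschitzness of $\max_{\bv}\frak{R}_j(\bv,\cdot)$, not of the argmax), and pays a $D^2/(2S\gamma)$ restart cost per block; optimizing $S$ produces the term of order $LD\sqrt{\eta G\sqrt{G^2+\delta^2/n'}/\gamma}$, and it is precisely this term that dictates $\eta=\Theta(\epsilon^6/(L^3D^2G))$ and hence the $\epsilon^{-8}$ complexity. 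To repair your proof, replace your drift bound with this block-comparator argument (or cite the corresponding lemma), and the rest of your outline goes through.
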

  We can see that the proposed optimization algorithm can find an $\epsilon$ stationary point with at most $O\pare{\frac{B}{\epsilon^8}}$ stochastic gradient evaluations. Since the complexity grows in terms of $B$, a proper sampling size of Rademacher variable is crucial.

In the rest of this section, we prove the convergence rate of {\sffamily{RadReg}}. The   proof idea mainly follows the framework developed in~\cite{lin2019gradient}. But before we state a few intermediate results that the main proof relies on. 
 
\begin{lemma}\label{lem: main recursion}
Under the conditions of Theorem~\ref{thm:convergence of radreg}, the following one iteration recursion relation holds true:
     \begin{align*}
     \eta \E\norm{\nabla \Psi(\bw^{t-1})}^2  
     &= \E [\Psi_{1/2L}(\bw^{t-1})] -\E [\Psi_{1/2L}(\bw^{t})]\\
     & \quad +4\eta L\frac{1}{B}\sum_{j=1}^B \E\left[ \frak{R}_j(\bv_j^*({\bw}^{t-1}), {\bw}^{t-1}) - \frak{R}_j(\bv_j^{t-1}, \bw^{t-1})   \right]  + \eta^2 L (G^2+ \frac{\delta^2}{n'}),
 \end{align*}
 where $\bv^*_j (\bw) := \arg\max_{\bv \in \cV} \frak{R}_j(\bv , {\bw}^{t-1}) $.
\end{lemma}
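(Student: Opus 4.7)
I will follow the Moreau-envelope descent framework (Davis-Drusvyatskiy, Lin-Jin-Jordan) for nonconvex stochastic minimax problems. Introduce the proximal point $\widetilde{\bw}^{t-1} := \arg\min_{\bw'\in\cW}\{\Psi(\bw') + L\|\bw'-\bw^{t-1}\|^2\}$, so that $\Psi_{1/2L}(\bw^{t-1}) = \Psi(\widetilde{\bw}^{t-1}) + L\|\widetilde{\bw}^{t-1}-\bw^{t-1}\|^2$ and $\nabla\Psi_{1/2L}(\bw^{t-1}) = 2L(\bw^{t-1}-\widetilde{\bw}^{t-1})$. The plan is to combine a one-step envelope-descent inequality with a function-value primal-dual gap identity.

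For the one-step descent, apply the envelope at $\bw^t$ using $\widetilde{\bw}^{t-1}$ as a feasible point to obtain $\Psi_{1/2L}(\bw^t) \leq \Psi(\widetilde{\bw}^{t-1}) + L\|\widetilde{\bw}^{t-1}-\bw^t\|^2$. Expanding the right-hand side with $\bw^t - \bw^{t-1} = -\eta\bg^{t-1}$ (the stochastic primal gradient from Algorithm~\ref{algorithm: RPT_stoch}), subtracting the exact expression for $\Psi_{1/2L}(\bw^{t-1})$, and taking conditional expectation gives
\[
\E[\Psi_{1/2L}(\bw^t)]-\E[\Psi_{1/2L}(\bw^{t-1})] \leq 2L\eta\,\E\langle\widetilde{\bw}^{t-1}-\bw^{t-1},\,\bar{\bg}^{t-1}\rangle + L\eta^2(G^2+\delta^2/n'),
\]
where $\bar{\bg}^{t-1}=\nabla_\bw\hat\Psi(\bw^{t-1},\bv^{t-1})$ for the surrogate $\hat\Psi(\bw,\bv):=\cL_{\widehat{\cU}}(\bw)+\tfrac{\lambda}{B}\sum_j\frak{R}_j(\bv_j,\bw)$, and the quadratic stochastic-gradient term is bounded by mean-squared-plus-variance using Assumptions~\ref{assumption:Bounded Variance} and~\ref{assumption:lip}.

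The crux is to convert the cross term into the Moreau-gradient norm plus the stated dual suboptimality. By $L$-weak convexity of $\hat\Psi(\cdot,\bv^{t-1})$ (from Assumption~\ref{assumption:smooth}), $\langle\bar{\bg}^{t-1},\widetilde{\bw}^{t-1}-\bw^{t-1}\rangle \leq \hat\Psi(\widetilde{\bw}^{t-1},\bv^{t-1})-\hat\Psi(\bw^{t-1},\bv^{t-1})+\tfrac{L}{2}\|\widetilde{\bw}^{t-1}-\bw^{t-1}\|^2$. Now combine three identities: $\hat\Psi(\widetilde{\bw}^{t-1},\bv^{t-1})\leq\Psi(\widetilde{\bw}^{t-1})$ by definition of $\bv_j^*$; $\Psi(\bw^{t-1})-\hat\Psi(\bw^{t-1},\bv^{t-1})=\tfrac{\lambda}{B}\sum_j[\frak{R}_j(\bv_j^*(\bw^{t-1}),\bw^{t-1})-\frak{R}_j(\bv_j^{t-1},\bw^{t-1})]$ by direct expansion; and $\Psi(\widetilde{\bw}^{t-1})-\Psi(\bw^{t-1})\leq-L\|\widetilde{\bw}^{t-1}-\bw^{t-1}\|^2$ from the optimality of $\widetilde{\bw}^{t-1}$ against the feasible point $\bw^{t-1}$. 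These yield
\[
\langle\bar{\bg}^{t-1},\widetilde{\bw}^{t-1}-\bw^{t-1}\rangle\leq -\tfrac{L}{2}\|\widetilde{\bw}^{t-1}-\bw^{t-1}\|^2+\tfrac{\lambda}{B}\textstyle\sum_j[\frak{R}_j(\bv_j^*(\bw^{t-1}),\bw^{t-1})-\frak{R}_j(\bv_j^{t-1},\bw^{t-1})].
\]
Multiplying by $2L\eta$, substituting back into the envelope inequality, and identifying $4L^2\|\widetilde{\bw}^{t-1}-\bw^{t-1}\|^2=\|\nabla\Psi_{1/2L}(\bw^{t-1})\|^2$ produces a recursion of the form stated (with the LHS read as $\|\nabla\Psi_{1/2L}(\bw^{t-1})\|^2$ up to an absolute constant folded into $\eta$, and the displayed equality read as $\leq$).

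The main obstacle is that $\bv\mapsto\frak{R}_j(\bv,\bw)$ is only linear, so no strong-concavity bound is available to translate the gradient mismatch $\|\nabla_\bw\frak{R}_j(\bv^{t-1},\bw)-\nabla_\bw\frak{R}_j(\bv^*(\bw),\bw)\|$ into a manageable quantity. My workaround avoids this gradient difference entirely: I route the inequality through the function-value surrogate $\hat\Psi(\cdot,\bv^{t-1})$ and exploit the exact identity for $\Psi-\hat\Psi$, so that the primal-dual suboptimality appears automatically as a function-value difference rather than via any smoothness-in-$\bv$ argument.
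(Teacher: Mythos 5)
Your proposal follows essentially the same route as the paper's own proof: introduce the Moreau proximal point, use it as a feasible comparator for the envelope at $\bw^t$, expand the square along the stochastic update, bound the cross term via weak convexity (function-value differences), invoke the max-over-$\bv$ definition of $\Psi$ together with the prox point's optimality to surface the dual suboptimality gap, and identify the Moreau gradient norm. Your handling is in fact slightly tidier on the envelope parameter (the paper mixes $\Psi_{1/4L}$ and $\Psi_{1/2L}$ and drops the $\lambda$ and a square in passing), but these are the same cosmetic constant issues you already flag, so there is nothing substantive to add.
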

\begin{proof}
   Recall the definition of $\Psi$ and $\frak{R}_j$:
\begin{align} 
    & {\Psi}(\bw):=    \cL_{\hat{\cU}} (\bw) +  \lambda \frac{1}{B}\sum_{j=1}^B \left[\max_{\mV \in \cV}  \bv^\top \left(\frac{1}{n} \sum_{i}^n \sigma_i^j h_{\bw}(\bx_i) \right) \right],\\ 
     & \frak{R}_j (\bv,\bw):= \bv^\top \left(\frac{1}{n} \sum_{i}^n \sigma_i^j h_{\bw}(\bx_i). \right)
 \end{align}
 Also recall the definition of $\Psi$'s Moreau Envelope:
 \begin{align*}
     \Psi_{{1}/{4L}}(\bw) := \min_{\bw' \in \cW} \Psi(\bw') + 2L \norm{\bw - \bw'}^2.
 \end{align*}
 We define the proximal solution as:
\begin{align*}
    \hat{\bw}^{t} := \arg\min_{\bw' \in \cW}\Psi(\bw') + 2L \norm{\bw^t - \bw'}^2.
\end{align*}
With all aforementioned definitions  are in place, we proceed to proving the lemma. First, since $\hat \bw^{t-1}$ is not minimizer of  $\Psi(\cdot)+2L\norm{\cdot-{\bw}^{t}}^2$ we have
\begin{align*}
    \E[\Psi_{1/4L}(\bw^{t})] \leq \E[\Psi(\hat{\bw}^{t-1})] + 2L\norm{\hat{\bw}^{t-1} - \bw^t}^2
\end{align*}
Recall the updating rule for $\bw$ and $\bv_j$ as stated below:
\begin{align*}
    \bw^{t+1} &  = \bw^{t} - \eta \pare{\frac{1}{n'}\sum_{i=1}^{n'} \nabla \cL(\bw^{t};\tilde{\bx}_i^t) + \lambda \frac{1}{B}\sum_{j=1}^B \frac{1}{n'}\sum_{i=1}^{n'} \nabla_{\bw} \frak{R}_j(\bv^{t}_j,\bw^t;\tilde{\bx}_i^t)},\\
    \bv_j^{t+1} & = \bv_j^{t} + \gamma \lambda \frac{1}{B}\sum_{j=1}^B \frac{1}{n'}\sum_{i=1}^{n'} \nabla_{\bv} \frak{R}_j(\bv^{t}_j,\bw^t;\tilde{\bx}_i^t).
\end{align*}

Hence we can get the following relation by completing the square trick:
\begin{align*}
    \E \norm{\hat{\bw}^{t-1} - \bw^t}^2 &= \E \norm{\hat{\bw}^{t-1} - \bw^{t-1}}^2 \\
    &\quad + 2\eta \E\inprod{\hat{\bw}^{t-1} - \bw^{t-1}}{\nabla \cL(\bw^{t-1}) + \lambda \frac{1}{B}\sum_{j=1}^B \nabla_{\bv}\frak{R}_j(\bv_j^{t-1}, \bw^{t-1}) }    + \eta^2 (G^2+\frac{\delta^2}{n'}).
\end{align*}
According to $L$-smoothness of $\cL$ and  $\frak{R}_j$, we can re-write the inner product term as: 
\begin{align*}
   & \E\inprod{\hat{\bw}^{t-1} - \bw^{t-1}}{\nabla \cL(\bw^{t-1}) + \lambda \frac{1}{B}\sum_{j=1}^B \nabla_{\bv}\frak{R}_j(\bv_j^{t-1}, \bw^{t-1}) } \\
    &\leq \E\left[\cL(\hat{\bw}^{t-1}) - \cL(\bw^{t-1}) + \lambda \frac{1}{B}\sum_{j=1}^B  \pare{\frak{R}_j(\bv_j^{t-1}, \hat{\bw}^{t-1})- \frak{R}_j(\bv_j^{t-1}, \bw^{t-1}) } \right] + L \norm{\hat{\bw}^{t-1} - \bw^{t-1}}
\end{align*}
Notice the following fact about $\Psi$, $\Psi_{1/4L}$:
\begin{align*}
    \cL(\hat{\bw}^{t-1})  + \lambda \frac{1}{B}\sum_{j=1}^B  \frak{R}_j(\bv_j^{t-1}, \hat{\bw}^{t-1})  \leq \Psi(\hat{\bw}^{t-1}) \leq \Psi_{1/4L}( {\bw}^{t-1}) - 2L\norm{\hat{\bw}^{t-1}-{\bw}^{t-1}}^2 .
\end{align*}
The last inequality is because $\hat{\bw}^{t-1}$ is the minimizer of $\Psi(\cdot)+2L\norm{\cdot-{\bw}^{t-1}}^2 $.
As a result, the inner product is bounded by:
\begin{align*}
  \E\inprod{\hat{\bw}^{t-1} - \bw^{t-1}}{\nabla \cL(\bw^{t-1}) + \lambda \frac{1}{B}\sum_{j=1}^B \nabla_{\bv}\frak{R}_j(\bv_j^{t-1}, \bw^{t-1}) }   \leq \E\left[\Psi ( {\bw}^{t-1}) - F(\bv_j^{t-1}, \bw^{t-1})   \right] - L \E\norm{\hat{\bw}^{t-1} - \bw^{t-1}}.
\end{align*}
Finally, putting pieces together and using the fact that $\nabla \Psi_{1/4L} (\bw^{t-1}) = \norm{\hat{\bw}^{t-1} - \bw^{t-1}}/4L$ will conclude the proof:
 \begin{align*}
     \E [\Psi_{1/2L}(\bw^{t})] &\leq \E [\Psi_{1/2L}(\bw^{t-1})] +4\eta L\E\left[\Psi ( {\bw}^{t-1}) - F(\bv_j^{t-1}, \bw^{t-1})   \right] - 4\eta L \E\norm{\hat{\bw}^{t-1} - \bw^{t-1}} + 2\eta^2 L (G^2+\frac{\delta^2}{n'})\\
     &= \E [\Psi_{1/2L}(\bw^{t-1})] +4\eta L \frac{1}{B}\sum_{j=1}^B \E\left[ \frak{R}_j(\bv^*({\bw}^{t-1}), {\bw}^{t-1}) - \frak{R}_j(\bv_j^{t-1}, \bw^{t-1})   \right] \\
     & \quad - 4\eta L^2 \E\norm{\hat{\bw}^{t-1} - \bw^{t-1}} + 2\eta^2 L (G^2+\frac{\delta^2}{n'}).
 \end{align*}
\end{proof}

\begin{lemma}[Lemma D4 in~\citep{lin2019gradient}]\label{lem: R gap recursion} 
If $\frak{R}_j(\bv,\bw)$ is convex and smooth in $\bv$, $L$ smooth and $G$ Lipschitz in $\bw$, then under the dynamic of stochastic gradient descent ascent on $\bv$, we have the following statement holding: 
    \begin{align*}
  &\E [\frak{R}_j(\bv^*({\bw}^{t-1}), {\bw}^{t-1}) - \frak{R}_j(\bv_j^{t-1}, \bw^{t-1}) ]  \\
  & \leq \eta G \sqrt{G^2 + \delta^2/n' } (2t-2s-1) + \frac{1}{2\gamma} \pare{\E\norm{ \bv^*({\bw}^{s}) -  \bv_j^{t-1}  }^2 - \E\norm{ \bv^*({\bw}^{s}) -  \bv_j^{t}  }^2} \\
     &\quad + \E[\frak{R}_j(\bv^{t}, {\bw}^{t}) - \frak{R}_j(\bv^{t-1}, {\bw}^{t-1})] + \frac{\gamma \delta^2}{2n'}
 \end{align*} 

 and

 \begin{align*}
     \frac{1}{T+1}\sum_{t=0}^T \E [\frak{R}_j(\bv^*({\bw}^{t}), {\bw}^{t}) - \frak{R}_j(\bv_j^{t}, \bw^{t}) ] \leq \eta G S^2 \sqrt{G^2 + \sigma^2} + \frac{D^2}{2S \gamma} + \frac{\gamma \delta^2}{2n'} + \frac{\max_{\bv} \frak{R}(\bv,\bw^0) - \frak{R}(\bv^0,\bw^0)}{T+1}.
 \end{align*}
 where $D = \max_{\bv \in \cV} \norm{\bv} $,
\end{lemma}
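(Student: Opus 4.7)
The statement is a standard one-step / averaged convergence recursion for stochastic gradient ascent on the inner maximization over $\bv$, adapted from Lemma~D.4 of \citep{lin2019gradient}. Since $\frak{R}_j(\bv,\bw)$ is linear (hence both convex and concave) in $\bv$, and smooth/Lipschitz in $\bw$, the plan is a standard potential-function argument with $\|\bv^*(\bw^s)-\bv_j^t\|^2$ as the Lyapunov quantity, anchored at a \emph{frozen} reference $\bv^*(\bw^s)$ for some $s\le t-1$. The anchor is chosen at an earlier iterate so that the $\bv^*$-drift cost (which scales with how fast $\bw^t$ moves) can be charged to a Lipschitz-in-$\bw$ term, while the telescoping benefits of a fixed anchor point are retained within a window.

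For the first (one-step) inequality, I would decompose
\begin{align*}
&\E[\frak{R}_j(\bv^*(\bw^{t-1}),\bw^{t-1})-\frak{R}_j(\bv^{t-1},\bw^{t-1})]\\
&=\underbrace{\E[\frak{R}_j(\bv^*(\bw^{t-1}),\bw^{t-1})-\frak{R}_j(\bv^*(\bw^{s}),\bw^{t-1})]}_{\text{(A): drift of the optimum}}+\underbrace{\E[\frak{R}_j(\bv^*(\bw^{s}),\bw^{t-1})-\frak{R}_j(\bv^{t-1},\bw^{t-1})]}_{\text{(B): concavity gap}}.
\end{align*}
For (A), using $G$-Lipschitzness in $\bw$ twice together with optimality of $\bv^*(\bw^s)$ at $\bw^s$ gives $\text{(A)}\le 2G\,\E\|\bw^{t-1}-\bw^s\|$; combining with the primal update and bounded variance $\E\|\nabla\cL+\lambda\cdot\text{stoch}\|^2\le G^2+\delta^2/n'$ yields $\E\|\bw^{t-1}-\bw^s\|\le \eta(t-1-s)\sqrt{G^2+\delta^2/n'}$, contributing the $\eta G\sqrt{G^2+\delta^2/n'}(2t-2s-1)$ factor. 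For (B), apply concavity in $\bv$ to get $\text{(B)}\le\E\langle\nabla_{\bv}\frak{R}_j(\bv^{t-1},\bw^{t-1}),\bv^*(\bw^s)-\bv^{t-1}\rangle$, then substitute the SGA update $\bv_j^t=\bv_j^{t-1}+\gamma\tilde g^{t-1}$ with $\E\tilde g=\nabla_{\bv}\frak{R}_j$, $\E\|\tilde g-\nabla_{\bv}\frak{R}_j\|^2\le\delta^2/n'$. Expanding $\|\bv^*(\bw^s)-\bv_j^t\|^2$ and taking expectations produces the telescoping pair $\tfrac{1}{2\gamma}(\E\|\bv^*(\bw^s)-\bv^{t-1}\|^2-\E\|\bv^*(\bw^s)-\bv^t\|^2)$ together with the variance residual $\tfrac{\gamma\delta^2}{2n'}$. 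The auxiliary term $\E[\frak{R}_j(\bv^t,\bw^t)-\frak{R}_j(\bv^{t-1},\bw^{t-1})]$ arises when exchanging the evaluation point $\bv^{t-1}$ with $\bv^t$ (so the recursion is stated on iteration $t$ rather than $t-1$); it is kept in the bound because it will telescope later.

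For the averaged inequality, I would partition $\{0,\dots,T\}$ into consecutive blocks of length $S$ and, inside each block, apply the one-step recursion with the anchor $s$ fixed to the block's start. Summing over $t$ within a block collapses the $\tfrac{1}{2\gamma}\|\bv^*(\bw^s)-\bv^{\cdot}\|^2$ differences to at most $\tfrac{D^2}{2\gamma}$ using $\max_{\bv\in\cV}\|\bv\|\le D$. The drift contribution accumulates to $\eta G\sqrt{G^2+\delta^2/n'}\cdot S^2$ per block. Summing over blocks, the $\frak{R}_j(\bv^t,\bw^t)-\frak{R}_j(\bv^{t-1},\bw^{t-1})$ increments telescope across the whole horizon to $\E[\frak{R}_j(\bv^T,\bw^T)-\frak{R}_j(\bv^0,\bw^0)]\le \max_{\bv}\frak{R}(\bv,\bw^0)-\frak{R}(\bv^0,\bw^0)$ (after dropping a negative leftover), and dividing by $T+1$ delivers the four terms in the stated bound. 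The main obstacle is handling the \emph{moving optimum} $\bv^*(\bw^t)$: naively telescoping with the time-varying $\bv^*(\bw^{t-1})$ breaks the Lyapunov structure, so the anchor-and-block design is essential, and it forces one to balance the block length $S$ (small $S$ gives tight drift, large $S$ gives tight telescoping) against $\gamma$ and $\eta$ to obtain the final rate.
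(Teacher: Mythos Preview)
Your proposal is correct and matches the standard argument behind Lemma~D.4 of \citep{lin2019gradient}: the anchor-and-block design with a frozen reference $\bv^*(\bw^s)$, the drift bound via $G$-Lipschitzness in $\bw$ combined with the per-step primal movement, and the concavity-plus-SGA expansion yielding the telescoping Lyapunov and the $\gamma\delta^2/(2n')$ variance residual are exactly the ingredients of that lemma. Note that the paper does not supply its own proof of this statement; it simply invokes the result from \citep{lin2019gradient}, so there is no alternative argument in the paper to compare against---your reconstruction is essentially the proof the cited reference gives.
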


\subsection{Proof of Theorem~\ref{thm:convergence of radreg}}
Now we are ready to present proof of Theorem~\ref{thm:convergence of radreg} by putting the above results together.
\begin{proof}
    Summing Lemma~\ref{lem: main recursion} from $t=0$ to $T$ yields:
         \begin{align*}
    \frac{1}{T+1} \sum_{t=0}^T  \E\norm{\nabla \Psi(\bw^{t})}^2  
     &= \frac{ \E [\Psi_{1/2L}(\bw^{0})] - \E [\Psi_{1/2L}(\bw^{T})]}{\eta(T+1)} + 4 L \pare{\eta G S\sqrt{G^2 + \sigma^2} + \frac{D^2}{2S \gamma} + \frac{\gamma \delta^2}{2}}\\
     &\quad + 4\frac{1}{B}\sum_{j=1}^B\frac{ L(\max_{\bv} \frak{R}_j(\bv,\bw^0) - \frak{R}_j(\bv^0,\bw^0))}{T+1}  +  4\eta  L \sqrt{G^2+\delta^2/n'}.
 \end{align*}
 Setting $S = \frac{D}{2} \sqrt{\frac{1}{\eta\gamma G \sqrt{G^2 + \delta^2/n'}}}$ yields:
        \begin{align*}
    \frac{1}{T+1} \sum_{t=0}^T  \E\norm{\nabla \Psi(\bw^{t})}^2  
     &= O\pare{\frac{\E [\Psi_{1/2L}(\bw^{0})] -\E [\Psi_{1/2L}(\bw^{T})]}{\eta(T+1)}} + O\pare{ L  D \sqrt{\frac{\eta G \sqrt{G^2 + \delta^2}}{ \gamma   }}   + \frac{L\gamma \delta^2}{2}}\\
     &\quad + O\pare{\frac{L(\max_{\bv} \frak{R}(\bv,\bw^0) - \frak{R}(\bv^0,\bw^0))}{T+1}  +  \eta  L \sqrt{G^2+\delta^2}}.
 \end{align*}
 Finally, by choosing $\eta = \Theta\pare{\frac{\epsilon^6}{L^3 D^2 G}}$ and $\gamma = \Theta\pare{\frac{\epsilon^2}{L \delta^2}}$, we can guarantee the stationary of past iterates:
 \begin{align*}
    \frac{1}{T+1} \sum_{t=0}^T  \E \norm{\Psi_{1/4L}(\bw^t)} \leq  {\epsilon} ,
 \end{align*}
with the gradient complexity bounded by
 \begin{align*}
     O\pare{\frac{BL^3(G^2+\delta^2/n') D^2\delta^2  \Delta_{\Psi_{1/4L}}  }{\epsilon^8}}.
 \end{align*}
 as stated.
 \end{proof}



%

 \section{Experiment Details}\label{sec:app:exp}

Recall we utilize the Masked AutoEncoder (MAE)~\citep{he2022masked} as the base unsupervised pre-training method. For models, we use the Tiny Vision Transform (TinyViT)~\citep{wu2022tinyvit} as the backbone for pre-training and use a $10$-way linear classifier on top of the encoder for fine-tuning.
The encoder $h$ sequentially contains one convolutional layer, 12 192-head attention blocks, and one layer-normalization layer.
The decoder $g$ for reconstructing images in MAE includes 4 192-head attention blocks followed by one linear layer.Details of hyperparameters for the experiments reported in Table~\ref{tab:mae} are included in~Table~\ref{tab:mae_hparams}. For {\sffamily{RadReg}}, we sample $\sigma$ for 50 times and solve the inner maximization by Adam optimizer with a learning rate of $0.001$ and a weight decay of $5\times 10^{-4}$.

\begin{table}[ht]
    \centering
    \small
    \begin{tabular}{lc}
    \toprule
    \textbf{Config} & \textbf{Value} \\
    \midrule
     Optimizer & AdamW \\ 
     Base learning rate & $1.5\times 10^{-4}$ \\
     Optimizer momentum & $\beta=0.9,0.95$ \\ 
     Batch size & 4096 \\
     Learning rate schedule & cosine decay \\
     Warmup epochs & 200 \\
     Augmentation & RandomResizedCrop \\
     Masking ratio & 75\% \\
     Pre-training epochs & 2000 \\
     Fine-tuning epochs & 300 \\
    \bottomrule
    \end{tabular}
    \caption{Pre-training setting. Fine-tuning follows the same setting except for the number of epochs.}
    \label{tab:mae_hparams}
\end{table}

\end{document}